\newtheorem{assumption}{\textbf{Assumption}}
\newtheorem{corollary}{\textbf{Corollary}}
\newtheorem{Lemma}{\textbf{Lemma}}
\newtheorem{theorem}{\textbf{Theorem}}
\newacronym{T1}{T1}{\theta_1}
\newcommand{\mcs}{\mathcal{S}}
\newcommand{\mca}{\mathcal{A}}
\newcommand{\nn}{\nonumber}
\newcommand{\mE}{\mathbb{E}}
\newcommand{\om}{\omega}
\newcommand{\mo}{\mathcal{O}}
\newcommand{\tb}{\tau_{\beta}}
\title{Non-Asymptotic Analysis for Two Time-scale TDC with General Smooth Function Approximation}
\date{}
\author{
  Yue Wang \\
    Department of Electrical Engineering\\
    University at Buffalo\\
    Buffalo, NY, USA\\
    \texttt{ywang294@buffalo.edu}
   \And
 Shaofeng Zou \\
    Department of Electrical Engineering\\
    University at Buffalo\\
    Buffalo, NY, USA\\
    \texttt{szou3@buffalo.edu}
  
  \And
  Yi Zhou\\
  Department of Electrical and Computer Engineering\\
     University of Utah\\
    Salt Lake City, Utah, USA\\
    \texttt{yi.zhou@utah.edu}
}
\begin{document}

\doparttoc 
\faketableofcontents
\maketitle
\begin{abstract}
    Temporal-difference learning with gradient correction (TDC) is a two time-scale algorithm for policy evaluation in reinforcement learning. This algorithm was initially proposed with linear function approximation, and was later extended to the one with general smooth function approximation. The asymptotic convergence for the on-policy setting with general smooth function approximation was established in \citep{bhatnagar2009convergent}, however, the non-asymptotic convergence analysis remains unsolved due to challenges in the non-linear and two-time-scale update structure, non-convex objective function and the projection onto a time-varying  tangent plane. In this paper, we develop novel techniques to address the above challenges and explicitly characterize the non-asymptotic error bound for the general off-policy setting with i.i.d.\ or Markovian samples, and show that it converges as fast as $\mathcal O(1/\sqrt T)$ (up to a factor of $\mathcal O(\log T)$). Our approach can be applied to a wide range of value-based reinforcement learning algorithms with general smooth function approximation.
    
\end{abstract}

\section{Introduction}

In reinforcement learning (RL), an agent interacts with a stochastic environment in order to maximize the total reward \citep{sutton2018reinforcement}. Towards this goal, it is often needed to evaluate how good a policy performs, and more specifically, to learn its value function. Temporal difference (TD) learning algorithm is one of the most popular policy evaluation approaches. However, when applied with function approximation approach and/or under the off-policy setting,  the TD learning algorithm may diverge \citep{baird1995residual,tsitsiklis1997analysis}. 
%
%
%
%
To address this issue, a family of gradient-based TD (GTD) algorithms, e.g., GTD, GTD2, temporal-difference learning with gradient correction (TDC) and Greedy-GQ, were developed for the case with linear function approximation   \citep{maei2011gradient, sutton2009fast,maei2010toward,sutton2009acov,sutton2009fast}. These algorithms were later extended to the case with general smooth function approximation in \citep{bhatnagar2009convergent}, where asymptotic convergence guarantee was established for the on-policy setting with i.i.d. samples. 


Despite the success of the GTD methods in practice, previous theoretical studies only showed that these algorithms converge asymptotically, and did not suggest how fast these algorithms converge and how the accuracy of the solution depends on various parameters of the algorithms. Not until recently have the non-asymptotic error bounds for these algorithms been investigated, e.g., \citep{dalal2020tale,karmakar2018two,wang2020finite,xu2019two,kaledin2020finite,dalal2018finite,wang2017finite}, which mainly focus on the case with linear function approximation. These results thus cannot be directly applied to more practical applications with general smooth function approximation, e.g., neural networks, which have greater representation power, do not need to construct feature mapping, and are widely used in practice.

In this paper, we develop a non-asymptotic analysis for the TDC algorithm with general smooth function approximation (which we refer to as non-linear TDC) for both i.i.d. and Markovian samples. Technically, the analysis in this paper is not a straightforward extension of previous studies on those GTD algorithms with linear function approximation. First of all, 
different from existing studies with linear function approximation whose objective functions are \textit{convex} and the updates are \textit{linear}, the objective function of the non-linear TDC algorithm is \textit{non-convex}, and the two time-scale updates are \textit{non-linear} functions of the parameters. 
Second, the objective function of the non-linear TDC algorithm, the mean-square projected Bellman error (MSPBE),  involves a projection onto a \textit{time-varying} tangent plane which depends on the sample trajectory, whereas for GTD algorithms with linear function approximation, this projection is time-invariant.
Third, due to the two time-scale structure of the algorithm and the Markovian noise, novel techniques to deal with the stochastic bias and the tracking error need to be developed.

\subsection{Challenges and Contributions}

In this section, we summarize the technical challenges and our contributions. 

\textbf{Analysis for two time-scale non-linear updates and non-convex objective.}
Unlike many existing results on two time-scale stochastic approximation, e.g.,  \citep{konda2004convergence, gupta2019finite,kaledin2020finite} and the studies of linear GTD algorithms in \citep{xu2019two,ma2020variance,wang2017finite,dalal2020tale}, the objective function of the  non-linear TDC is non-convex, and its two time-scale updates are non-linear. Therefore, existing studies on linear two time-scale algorithms cannot be directly applied. Moreover, the convergence to global optimum cannot be guaranteed for the non-linear TDC algorithm, and therefore, we study the convergence to stationary points. In this paper, we develop a novel non-asymptotic analysis of the non-linear TDC algorithm, which solves RL problems from a non-convex optimization perspective. We note that our analysis is not a straightforward extension of analyses of non-convex optimization, as the update rule here is two time-scale and the noise is Markovian. The framework we develop in this paper can be applied to analyze a wide range of value-based RL algorithms with general smooth function approximation.

\textbf{Time-varying projection.} For the MSPBE, a projection of the Bellman error onto the parameterized function class is involved. However, unlike linear function approximation,  the projection onto a general smooth class of functions usually does not have a closed-form solution. Thus, a projection onto the tangent plane at the current parameter is used instead, which incurs a time-varying projection that depends on the current parameter and thus the sample trajectory. This brings in additional challenges in the bias and variance analysis due to such dependency. We develop a novel approach to decouple such a dependency and characterize the bias by exploiting the uniform ergodicity of the underlying MDP and the smoothness of the parameterized function. The new challenges posed by the time-varying projection and the dependence between the projection and the sample trajectory are not special to the non-linear TDC investigated in this paper, and they exist in a wide range of value-based algorithms with general smooth function approximation, where our techniques can be applied.


\textbf{A tight tracking error analysis.} Due to the two time-scale structure of the update rule, the tracking error, which measures how fast the fast time-scale tracks its own limit, needs to be explicitly bounded. Unlike the studies on two time-scale linear stochastic approximation \citep{dalal2020tale,kaledin2020finite,konda2004convergence}, where a linear transformation can asymptotically decouple the dependence between the fast and slow time-scale updates, it is non-trivial to construct such a transformation for non-linear updates. To develop a tight bound on the tracking  error, 
we develop a novel technique that bounds the tracking error as a function of the gradient of the MSPBE. This leads to a tighter bound on the tracking error compared to many existing works on two time-scale analysis, e.g., \citep{wu2020finite, hong2020two}. Although we do not decouple the fast and slow time-scale updates, we still obtain a desired convergence rate of $\mathcal O(1/\sqrt{T})$ (up to a factor of $\log T$), which matches with the complexity of stochastic gradient descent for non-convex problems \citep{ghadimi2013stochastic}.


\subsection{Related Work}

\textbf{TD, Q-learning and SARSA.}
The asymptotic convergence of TD with linear function approximation was shown in  \citep{tsitsiklis1997analysis}, and the non-asymptotic analysis of TD was developed in  \citep{srikant2019finite, lakshminarayanan2018linear, bhandari2018finite,dalal2018finite,sun2020finite}. Moreover, \citep{cai2019neural} further studied the non-asymptotic error bound of TD learning with neural function approximation. Q-learning and SARSA are usually used for solving the optimal control problem and were shown to converge asymptotically under some conditions in  \citep{melo2008analysis,perkins2003convergent}. Their non-asymptotic error bounds were also studied in  \citep{zou2019finite}. The non-asymptotic analysis of Q-learning under the neural function approximation 
was developed in  \citep{cai2019neural, xu2020finite}. Note that all these algorithms are one time-scale, while the TDC algorithm we study is a two time-scale algorithm.

\textbf{GTD methods with linear function approximation.}
A class of GTD algorithms were proposed to address the divergence issue for off-policy training \citep{baird1995residual} and arbitrary smooth function approximation \citep{tsitsiklis1997analysis}, e.g., GTD, GTD2 and TDC \citep{maei2011gradient, sutton2009fast,maei2010toward,sutton2009acov,sutton2009fast}. Recent studies established their non-asymptotic convergence rate, e.g., \citep{dalal2018finite,wang2017finite,liu2015finite,gupta2019finite,xu2019two,dalal2020tale,kaledin2020finite,ma2020variance,wang2020finite,ma2021greedygq} under i.i.d.\ and Markovian settings. These studies focus on the case with linear function approximation, and thus the objective functions are convex, and the updates are linear. 
In this paper, we focus on the non-linear TDC algorithm with general smooth function approximation, where the two time-scale update rule is non-linear, the objective is non-convex, and the projection is time-varying, and thus new techniques are required to develop the non-asymptotic analysis.


\textbf{Non-linear two time-scale stochastic approximation.} There are also studies on asymptotic convergence rate and non-asymptotic analysis for non-linear two time-scale stochastic approximation, e.g., \citep{mokkadem2006convergence, doan2021nonlinear}. Although the non-linear update rule is investigated, it is assumed that the algorithm converges to the global optimum. In this paper, we do not make such an assumption on the global convergence, which may not necessarily hold for the non-linear TDC algorithm, and instead, we study the convergence to stationary points, which is a widely used convergence criterion for non-convex optimization problems.  We also note that there is a resent work studying the batch-based non-linear TDC in  \citep{xu2021sample}, where at each update, a batch of samples is used. To achieve a sample complexity of $\mo(\epsilon^{-2})$, a batch size of $\mo(\epsilon^{-1})$ is required in  \citep{xu2021sample} to control the bias and variance. 
We note that by setting the batch size being one in \citep{xu2021sample}, the desired sample complexity cannot be obtained, and their error bound will be a constant. 
In this paper, we focus on the non-linear TDC algorithm without using the batch method, where the parameters update in an online and incremental fashion and at each update only one sample is used. Our error analysis is novel and more refined as it does not require a large batch size of $\mo(\epsilon^{-1})$ while still achieving the same sample complexity. 


\section{Preliminaries}
\subsection{Markov Decision Process}
A Markov decision process (MDP) is a tuple $(\mcs,\mca,\mathsf P,r,\gamma)$, where $\mcs$ and $\mca$ are the state and action spaces, $\mathsf P=\mathsf P(s'|s,a)$ is the transition kernel, $r:\mcs\times\mca\times\mcs\to \mathbb{R}^+$ is the reward function bounded by $r_{\max}$, and $\gamma\in [0,1]$ is the discount factor. A stationary policy $\pi$ maps a state $s\in\mcs$ to a probability distribution $\pi(\cdot|s)$ over the action space $\mca$. At each time-step $t$, suppose the process is at some state $s_t\in\mcs$, and an action $a_t\in\mca$ is taken. Then the system transits to the next state $s_{t+1}$ following the transition kernel $\mathsf P(\cdot|s_t,a_t)$, and the agent receives a reward $r(s_t,a_t,s_{t+1})$. 

For a given policy $\pi$ and any initial state $s\in\mcs$, we define its value function as $
V^\pi\left(s\right)=\mE\left[\sum_{t=0}^{\infty}\gamma^t   r(S_t,A_t,S_{t+1})|S_0=s, \pi\right]$. The goal of policy evaluation is to use the samples generated from the MDP to estimate the value function. The value function satisfies the Bellman equation: $V^\pi(s)=T^\pi V^\pi(s)$ for any $s\in\mcs$,  where the Bellman operator $T^\pi$ is defined as
\begin{align}
        T^\pi V(s)&=\sum_{s'\in\mcs,a\in\mca} \mathsf P(s'|s,a)\pi(a|s) r(s,a,s') +\gamma\sum_{s'\in\mcs,a\in\mca} \mathsf P(s'|s,a)\pi(a|s)V(s').
\end{align} 
Hence the value function $V^{\pi}$ is the fixed point of the Bellman operator $T^\pi$ \citep{bertsekas2011dynamic}. 

\subsection{Function Approximation}
In practice, the state space $\mcs$ usually contains a large number of states or is even continuous, which will induce a heavy computational overhead. A popular approach is to approximate the value function using a parameterized class of  functions. Consider a parameterized family of functions $\left\{V_{\theta}: \mcs \to \mathbb{R}|\theta\in\mathbb{R}^N \right\}$, e.g., neural networks. The goal is to find a $V_{\theta}$ with a compact representation in $\theta$ to approximate the value function $V^{\pi}$. In this paper, we focus on a general family of smooth functions, which may not be linear in $\theta$.

\section{TDC with Non-Linear Function Approximation}
In this section, we introduce the TDC algorithm with general smooth function approximation in \citep{bhatnagar2009convergent} for the off-policy setting with both i.i.d.\ samples and Markovian samples, and further characterize the non-asymptotic error bounds. 

Consider the the following mean-square projected Bellman error (MSPBE):
\begin{align}\label{eq:obj}
    J(\theta)=\mE_{\mu^\pi}\left[\left\|V_\theta(s)-\mathbf \Pi_\theta T^\pi V_\theta(s)\right\|^2\right],
\end{align}
where $\mu^\pi$ is the stationary distribution induced by the policy $\pi$, and $\mathbf \Pi_{\theta}$ is the orthogonal projection onto the tangent plane of $V_{\theta}$ at $\theta$: $\left\{ \hat{V}_{\zeta}(s)|\zeta\in\mathbb{R}^N \text{ and }\hat{V}_{\zeta}(s)=\phi_{\theta}(s)^\top\zeta\right\}$ and $\phi_{\theta}(s)=\nabla V_\theta(s)$. Note that the projection is onto the tangent plane instead of $\left\{V_{\theta}:\theta\in\mathbb{R}^N \right\}$ since the projection onto the latter one may not be computationally tractable if $V_{\theta}$ is non-linear. 

In \citep{bhatnagar2009convergent}, the authors proposed a two time-scale TDC algorithm to minimize the MSPBE $J(\theta)$. Specifically, a stochastic gradient descent approach is used with the weight doubling trick (for the double sampling problem) \citep{sutton2009acov}, which yield a two time-scale update rule. We note that the algorithm developed in \citep{bhatnagar2009convergent} was for the on-policy setting with i.i.d.\ samples from the stationary distribution, and the asymptotic convergence of the algorithm to stationary points was established. 

In the off-policy setting, the goal is to estimate the value function $V^{\pi}$ of the target policy $\pi$ using the samples from a different behavior policy $\pi_b$. In this case, the MSPBE can be written as 
\begin{align} 
    J(\theta)=\mE_{\mu^{\pi_b}}[\left\|V_\theta(s)-\mathbf \Pi_\theta T^\pi V_\theta(s)\right\|^2],
\end{align}
and we use the approach of importance sampling.
Following steps similar to those in \citep{maei2011gradient}, $J(\theta)$ can be further written as
\begin{align}
    J(\theta)&=\mE_{\mu^{\pi_b}}[\rho(S,A)\delta_{S,A,S'}(\theta)\phi_{\theta}(S)]^\top A_{\theta}^{-1} \mE_{\mu^{\pi_b}}[\rho(S,A)\delta_{S,A,S'}(\theta)\phi_{\theta}(S)],
\end{align}
where $\delta_{s,a,s'}(\theta)=r(s,a,s')+\gamma V_{\theta}(s')-V_{\theta}(s)$ is the TD error, $\phi_{\theta}(s)=\nabla V_{\theta}(s)$ is the character vector, $\rho(s,a)=\frac{\pi(a|s)}{\pi_b(a|s)}$ is the importance sampling ratio for a given sample $O=(s,a,r,s')$ and $A_{\theta}=\mE_{\mu^{\pi_b}}[\phi_{\theta}(S)\phi_{\theta}(S)^\top]$. 

To compute $\nabla J(\theta)$, we consider its $i$-th entry, i.e., the partial derivative w.r.t. the $i$-th entry of $\theta$:
\begin{align}\label{eq:J1}
    &-\frac{1}{2}\frac{\partial J(\theta)}{ \partial \theta^i}\nn\\
    &=\underbrace{-\mE_{\mu^{\pi_b}}\left[\frac{\partial}{ \partial \theta^i}(\rho\delta\phi)\right]^\top A_{\theta}^{-1}\mE_{\mu^{\pi_b}}\left[\rho\delta\phi\right]}_{(a)}+\frac{1}{2}\underbrace{( A_{\theta}^{-1}\mE_{\mu^{\pi_b}}\left[\rho\delta\phi\right])^\top\mE_{\mu^{\pi_b}}\left[\frac{\partial}{ \partial \theta^i}(\phi\phi^\top)\right]( A_{\theta}^{-1}\mE_{\mu^{\pi_b}}\left[\rho\delta\phi\right])}_{(b)},
\end{align}
where to simplify notations, we omit the dependence on $\theta, S, A$ and $S'$. 
To get an unbiased estimate of the terms in \eqref{eq:J1}, several independent samples are needed, but this is not applicable when there is only one sample trajectory. Hence we employ the weight doubling trick \citep{sutton2009acov}.
Define 
$
    \omega(\theta)= A_{\theta}^{-1}\mathbb{E}_{\mu^{\pi_b}}\left[\rho(S,A)\delta_{S,A,S'}(\theta)\phi_{\theta}(S)\right],
$
then term $(a)$ can be written as follows:
\begin{align}
    &-\mE_{\mu^{\pi_b}}\left[\frac{\partial}{ \partial \theta^i}(\rho\delta\phi)\right]^\top A_{\theta}^{-1}\mE_{\mu^{\pi_b}}\left[\rho\delta\phi\right]\nn\\
    &=-\mE_{\mu^{\pi_b}}\left[\rho(\gamma (\phi_{\theta}(S'))_i-(\phi_{\theta}(S))_i)\phi_{\theta}(S)\right]^\top\om(\theta) -\mE_{\mu^{\pi_b}}\left[\rho\delta(\nabla^2 V)_i\right]^\top\om(\theta);
\end{align}
and term $(b)$ can be written as follows:
\begin{align}
    &( A_{\theta}^{-1}\mE_{\mu^{\pi_b}}\left[\rho\delta\phi\right])^\top\mE_{\mu^{\pi_b}}\left[\frac{\partial}{ \partial \theta^i}(\phi\phi^\top)\right]( A_{\theta}^{-1}\mE_{\mu^{\pi_b}}\left[\rho\delta\phi\right]) =2\mE_{\mu^{\pi_b}}\left[\phi^\top\om(\theta)(\frac{\partial}{ \partial \theta^i} \phi^\top)\om(\theta)\right].
\end{align}
 Hence the gradient can be re-written as 
\begin{align}\label{eq:gradientJ}
    -\frac{\nabla J(\theta)}{2}&=\mE_{\mu^{\pi_b}}\left[ \rho(S,A)\delta_{S,A,S'}(\theta)\phi_{\theta}(S)\right]-h(\theta,\omega(\theta)) -\gamma\mE_{\mu^{\pi_b}}\left[ \rho(S,A)\phi_{\theta}(S')\phi_{\theta}(S)^\top\right]\omega(\theta),
\end{align}
 where
$
    h(\theta,\omega)=\mathbb{E}_{\mu^{\pi_b}} [\left(\rho(S,A)\delta_{S,A,S'}(\theta)-\phi_{\theta}(S)^\top\omega\right) \nabla^2 V_{\theta}(S)\omega ].
$
Thus with this weight doubling trick \citep{sutton2009acov}, a two time-scale stochastic gradient descent algorithm can be constructed. In Algorithm \ref{alg:markov}, we present the algorithm for the Markovian setting. The algorithm under the i.i.d.\ setting is slightly different, hence we refer the readers to \Cref{alg:iid} in \Cref{section:iid}.
\begin{algorithm}
\caption{Non-Linear Off-Policy TDC under the Markovian Setting}
\label{alg:markov}
\textbf{Input}: $T$, $\alpha$, $\beta$, $\pi$, $\pi_b$, $\left\{V_{\theta}|\theta\in\mathbb{R}^N\right\}$\\
\textbf{Initialization}: $\theta_0$,$w_0$
\begin{algorithmic}[1] 
\STATE {Choose $W\sim \text{Uniform}(0,1,...,T-1)$}
\FOR {$t=0,1,...,W-1$}
\STATE {Sample $O_t=(s_t,a_t,r_t,s_{t+1})$ following $\pi_b$}
\STATE {$\delta_t(\theta_t)=r(s_t,a_t,s_{t+1})+\gamma V_{\theta_t}(s_{t+1})-V_{\theta_t}(s_t)$}
\STATE { $\rho_t=\frac{\pi(a_t|s_t)}{\pi_b(a_t|s_t)}$}
\STATE {$h_t(\theta_t,\omega_t) =\left(\rho_t\delta_t(\theta_t)-\phi_{\theta_t}(s_t)^\top\omega_t\right)\nabla^2 V_{\theta_t}(s_t)\omega_t$}
\STATE {$\omega_{t+1} =\mathbf \Pi_{R_{\omega}}\big(\omega_t +\beta\big(-\phi_{\theta_t}(s_t)\phi_{\theta_t}(s_t)^\top\omega_t+\rho_t\delta_t(\theta_t)\phi_{\theta_t}(s_t)\big)\big)$}
\STATE {$\theta_{t+1}=\theta_t +\alpha\big( \rho_t\delta_t(\theta_t)\phi_{\theta_t}(s_t) -\gamma\rho_t\phi_{\theta_t}(s_{t+1} )\phi_{\theta_t}(s_t)^\top\omega_t-h_t(\theta_t,\omega_t)\big)$}
\ENDFOR
\end{algorithmic}
\textbf{Output}: $\theta_W$
\end{algorithm}

In Algorithm \ref{alg:markov},  $\mathbf\Pi_{R_{\omega}}(v)=\arg\min_{\|w\|\leq R_{\om}}\|v-w\|$ denotes the projection operator, where $R_{\omega}=\frac{\rho_{\max}C_{\phi}}{\lambda_v}(r_{\max}+(1+\gamma) C_v)$ (the constants are defined in \Cref{section:result}). As we will show in \eqref{eq:omegabound} in the appendix that for any $\theta\in\mathbb{R}^N$, $\om(\theta)$ is always upper bounded by $R_{\om}$, i.e.,  $\|\omega(\theta)\|\leq R_{\omega}$. The projection step in the algorithm is introduced mainly for the convenience of the analysis.
Motivated by the randomized stochastic gradient method in \citep{ghadimi2013stochastic}, which is designed to analyze non-convex optimization problems, in this paper, we also consider a randomized version of the non-linear TDC algorithm. Specifically, let $W$  be an independent random variable with a uniform distribution over $\left\{ 0,1,...,T-1\right\}$. We then run the non-linear TDC algorithm for $W$ steps and output $\theta_W$. 
\subsection{Non-asymptotic Error Bounds}\label{section:result}
In this section, we present our main results of the non-asymptotic error bounds on the convergence of the off-policy non-linear TDC algorithm. 
Our results will be based on the following assumptions.
\begin{assumption}[Boundedness and Smoothness] \label{ass:bound}
For any $s\in \mathcal{S}$ and any $\theta,\theta'\in\mathbb{R}^N$, 
\begin{align*}
       |V_{\theta}(s)|&\leq C_v,&
    \|\phi_{\theta}(s)\|&\leq C_{\phi},\\
    \| \nabla^2 V_{\theta}(s)\|&\leq D_v,&
    \|\nabla^2 V_{\theta}(s)-\nabla^2 V_{\theta'}(s)\|&\leq L_V \|\theta-\theta'\|, 
\end{align*}
where $C_{\phi}, C_v$,  $D_v$ and $L_V$  are some positive constants. 

\end{assumption}

From Assumption \ref{ass:bound}, it follows that for any $\theta,\theta'\in\mathbb{R}^N$, 
$
    |V_{\theta}(s)-V_{\theta'}(s)|\leq C_{\phi}\|\theta-\theta'\|, $ and $
    \|\phi_{\theta}(s)-\phi_{\theta'}(s)\|\leq D_v\|\theta-\theta'\|.
$
We note that these assumptions are equivalent to the assumptions adopted in the original non-linear TDC asymptotic convergence analysis in  \citep{bhatnagar2009convergent}, and can be easily satisfied by appropriately choosing the function class $\left\{V_{\theta}:\theta\in\mathbb{R}^N\right\}$. For example, in neural networks, these assumptions can be satisfied if the activation function is Lipschitz and smooth \citep{du2019gradient,neyshabur2017implicit,miyato2018spectral}.

\begin{assumption}[Non-singularity] \label{ass:solvable}
For any $\theta\in\mathbb{R}^N$, $
    \lambda_L\left(A_{\theta}\right)\geq \lambda_v>0,
$
where $\lambda_L(A)$ denotes the minimal eigenvalue of the matrix $A$ and $\lambda_v$ is a positive constant. 
\end{assumption}

\begin{assumption}[Bounded Importance Sampling Ratio]
For any $(s,a)\in\mcs\times\mca$,  
$
    \rho(s,a)=\frac{\pi(a|s)}{\pi_b(a|s)}\leq \rho_{\max},
$
for some positive constant $\rho_{\max}$. 
\end{assumption}

The following assumption is only needed for the analysis under the Markovian setting, and is widely used for analyzing the Markovian noise, e.g., \citep{wang2020finite, kaledin2020finite,xu2021sample,zou2019finite,srikant2019finite,bhandari2018finite}. 
\begin{assumption}[Geometric uniform ergodicity]\label{ass:mixing}
There exist some constants $m>0$ and $\kappa\in(0,1)$ such that 
$
    \sup_{s\in\mcs} d_{TV}(\mathbb{P}(s_t=\cdot|s_0=s, \pi),\mu^{\pi})\leq m\kappa^t,
$
for any $t>0$, where $d_{TV}$ denotes the total-variation distance between the probability measures.
\end{assumption}

We then present the bounds on the convergence of the TDC algorithm with general smooth function approximation in the following theorem.
\begin{theorem}\label{thm:main}
Consider the following step-sizes: $\alpha=\mathcal{O}\left(\frac{1}{T^a}\right)$, and $\beta=\mathcal{O}\left(\frac{1}{T^b}\right)$, where $\frac{1}{2}\leq a\leq 1$ and $0<b\leq a$. Then, 
(1) under the i.i.d.\ setting, 
$
    \|\nabla J(\theta_W)\|^2 = \mathcal{O}\left( \frac{1}{T^{1-a}}+\frac{1}{T^b}+\frac{1}{T^{1-b}}\right);
$
and (2)  under the Markovian setting, 
$
    \|\nabla J(\theta_W)\|^2 =\mathcal{O}\left(\frac{\log T}{T^{1-a}}+\frac{1}{T^{1-b}}+\frac{\log T}{T^b}\right).
$
\end{theorem}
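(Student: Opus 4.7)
The plan is to perform a descent-style analysis on $J$ along the slow iterates $\theta_t$, while simultaneously controlling (i) the tracking error $z_t := \|\omega_t - \omega(\theta_t)\|^2$ between the fast iterate and its current target, and (ii) the stochastic bias between the sampled update direction and the true gradient $-\nabla J(\theta_t)/2$ given in~\eqref{eq:gradientJ}. I would first derive, from Assumptions~\ref{ass:bound}--\ref{ass:solvable}, two regularity facts: $J$ is $L_J$-smooth and $\omega(\theta) = A_\theta^{-1}\mE_{\mu^{\pi_b}}[\rho\delta\phi]$ is Lipschitz in $\theta$; both follow from boundedness of $V_\theta$, $\phi_\theta$, $\nabla^2 V_\theta$, the Lipschitz Hessian assumption, the uniform spectral lower bound $\lambda_v$ on $A_\theta$, and bounded $\rho$. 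Writing the slow update as $\theta_{t+1} = \theta_t + \alpha G_t$ and applying the descent lemma gives $J(\theta_{t+1}) \le J(\theta_t) + \alpha\langle\nabla J(\theta_t), G_t\rangle + (L_J\alpha^2/2)\|G_t\|^2$. Splitting $G_t$ as $-\tfrac{1}{2}\nabla J(\theta_t)$ plus a tracking-error term (from using $\omega_t$ instead of $\omega(\theta_t)$) plus a sampling-error term (from using $O_t$ in place of the expectation), and applying Young's inequality, the cross terms contribute $-\tfrac{\alpha}{4}\|\nabla J(\theta_t)\|^2$ plus $\mo(\alpha) z_t$ plus a sampling bias term.

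The key novel ingredient is a tight tracking-error recursion. The fast update is essentially a linear stochastic-approximation step toward $\omega(\theta_t)$, so expanding $\omega_{t+1} - \omega(\theta_{t+1}) = (I - \beta A_{\theta_t})(\omega_t - \omega(\theta_t)) - (\omega(\theta_{t+1}) - \omega(\theta_t)) + \beta \cdot(\text{noise})$, invoking $\|I-\beta A_\theta\|^2 \le 1-\lambda_v \beta$ from Assumption~\ref{ass:solvable} together with Lipschitzness of $\omega(\cdot)$, and applying Young's inequality to the perturbation $\omega(\theta_{t+1})-\omega(\theta_t) = \mo(\alpha\|G_t\|)$, I would obtain a recursion of the form
\begin{align}
\mE[z_{t+1}] \le (1-c\beta)\,\mE[z_t] + c_1\,\tfrac{\alpha^2}{\beta}\,\mE\|G_t\|^2 + c_2\,\beta^2 + \text{bias}.
\end{align}
The crucial move is to bound $\|G_t\|^2$ not by the naive constant available from the projection and Assumption~\ref{ass:bound}, but by $\mo(\|\nabla J(\theta_t)\|^2 + z_t + \sigma^2)$, using the same decomposition of $G_t$ around $-\tfrac{1}{2}\nabla J(\theta_t)$. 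This is the tighter tracking bound advertised in the contributions: the resulting $\|\nabla J(\theta_t)\|^2$ coefficient on the right-hand side is what ultimately allows the final rate to match the non-convex SGD rate rather than being degraded by a crude constant.

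The main obstacle is the bias analysis under the Markovian setting, complicated further by the time-varying projection $\mathbf\Pi_{\theta_t}$ onto the tangent plane at $\theta_t$, which couples the bias non-trivially to the entire past trajectory. I would handle this by a delayed-coupling / mixing-time argument: fix $\tb = \lceil \log_\kappa(1/T)\rceil = \mo(\log T)$ and decompose the conditional expectation of the bias at time $t$ by comparing $(\theta_t,\omega_t)$ with $(\theta_{t-\tb},\omega_{t-\tb})$. The drift over $\tb$ steps is $\mo(\alpha\tb + \beta\tb)$ by the boundedness of the updates, while conditional on $(\theta_{t-\tb},\omega_{t-\tb})$ the sample $s_t$ is within total-variation distance $m\kappa^{\tb}=\mo(1/T)$ of $\mu^{\pi_b}$ by Assumption~\ref{ass:mixing}. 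The Lipschitzness of $\omega(\cdot)$ from the first step is what lets me transfer this mixing argument from the idealized direction whose expectation is exactly $-\tfrac{1}{2}\nabla J(\theta)$ to the actual direction $G_t$ at only an additional tracking cost, and the same Lipschitzness controls the drift of the projected quantity $\mathbf\Pi_{\theta_t}$ along the trajectory. The i.i.d.\ case is handled by the same decomposition without the delay.

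Finally, I would combine the descent inequality with the tracking-error recursion, sum over $t=0,\ldots,T-1$, divide by $T$, and use $W\sim\text{Uniform}\{0,\ldots,T-1\}$ to identify $\frac{1}{T}\sum_t \mE\|\nabla J(\theta_t)\|^2 = \mE\|\nabla J(\theta_W)\|^2$. After rearranging so that the self-referential $\|\nabla J\|^2$ terms on the right (coming through $\|G_t\|^2$) are absorbed into the left, the dominant i.i.d.\ contributions are $1/(T\alpha)=\mo(T^{a-1})$ from the $J$-telescope, $1/(T\beta)=\mo(T^{b-1})$ from the tracking-error telescope, and $\beta=\mo(T^{-b})$ from the fast-update variance accumulated into the tracking error; any $\mo(T^{-a})$ term is dominated since $b\le a$. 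In the Markovian case the bias step multiplies the $\alpha$- and $\beta$-dependent bias contributions by $\tb=\mo(\log T)$, producing the extra logarithmic factor on the $T^{a-1}$ and $T^{-b}$ terms and yielding the stated bound.
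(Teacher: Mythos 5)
Your overall architecture (proving $L_J$-smoothness of $J$ and Lipschitzness of $\omega(\cdot)$, the descent-lemma decomposition into a tracking-error term and a sampling-bias term, the mixing-time/drift argument with $\tau=\mathcal{O}(\log T)$ for the Markovian bias, and the final summation combined with the randomized output $W$) matches the paper. The genuine gap is in the tracking-error recursion, which is precisely the step the theorem hinges on. You absorb the perturbation $\omega(\theta_{t+1})-\omega(\theta_t)=\mathcal{O}(\alpha\|G_t\|)$ by Young's inequality against the $(1-c\beta)$ contraction, which necessarily places a weight of order $1/\beta$ on its square and produces the term $c_1\frac{\alpha^2}{\beta}\mathbb{E}\|G_t\|^2$ in your recursion. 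Decomposing $\|G_t\|^2\le \mathcal{O}(\|\nabla J(\theta_t)\|^2+z_t+\sigma^2)$ does not remove the problem, because $\sigma^2$ is the single-sample variance of $G_{t+1}(\theta_t,\omega(\theta_t))$ around $-\nabla J(\theta_t)/2$, which is an $\mathcal{O}(1)$ constant here (no mini-batching is used). Unrolling your recursion then leaves a floor of order $\frac{\alpha^2}{\beta^2}\sigma^2$ in the average tracking error, and hence, through the cross term $\alpha\|\nabla J(\theta_t)\|\|z_t\|$ in the descent inequality, a non-vanishing constant in the final bound whenever $\alpha\asymp\beta$. The theorem explicitly allows $b=a$, and the corollary's $\mathcal{O}(1/\sqrt{T})$ rate requires exactly $a=b=\tfrac12$; your argument only yields a decaying bound when $\alpha/\beta\to 0$, i.e., it reproduces the looser classical two-time-scale bound of the form $\mathcal{O}(\beta+\alpha/\beta)$ that Remark~2 of the paper is explicitly improving upon.

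The paper avoids this by never squaring the perturbation against a $1/\beta$ weight: it keeps the inner product $\langle z_t,\omega(\theta_t)-\omega(\theta_{t+1})\rangle$, rewrites it via the mean-value theorem as $-\alpha\langle z_t,\nabla\omega(\hat\theta_t)G_{t+1}(\theta_t,\omega_t)\rangle$, and decomposes $G_{t+1}(\theta_t,\omega_t)$ into a part that is $L_g$-Lipschitz in $\omega$ (cost $\mathcal{O}(\alpha)\|z_t\|^2$), the deviation $G_{t+1}(\theta_t,\omega(\theta_t))+\nabla J(\theta_t)/2$ whose inner product with $z_t$ has zero mean under i.i.d.\ sampling and is controlled by the mixing argument under Markovian sampling, and the term $-\nabla J(\theta_t)/2$ (cost $\mathcal{O}(\alpha)\|z_t\|^2+\mathcal{O}(\alpha)\|\nabla J(\theta_t)\|^2$). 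The per-step additive constant is then only $\mathcal{O}(\alpha^2+\beta^2\tau_{\beta})$ rather than $\mathcal{O}(\alpha^2/\beta)\sigma^2$, which is what makes the averaged tracking error of order $\mathcal{O}\bigl(\beta\tau_{\beta}+\frac{1}{T\beta}+\frac{\alpha}{\beta}\overline{\|\nabla J\|^2}\bigr)$ even with $\alpha\asymp\beta$. So the "crucial move" you identify (bounding $\|G_t\|^2$ by gradient plus tracking terms) is not the tightening the paper relies on; to obtain the stated rates you need the inner-product, zero-mean (or mixing-time) treatment of the $\omega(\theta_t)-\omega(\theta_{t+1})$ term.
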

Here we only assume the order of the step-sizes in terms of $T$ for simplicity, their exact assumptions on them can be found in Section \ref{sec:step1} and Section \ref{sec:step2}. Similarly, we only provide the order of the bounds  here, and the explicit bounds can be found in \eqref{eq:iidresult} and \eqref{eq:markovresult} in the appendix. It can be seen that the rate under the Markovian setting is slower than the one under the i.i.d.\ setting by a factor of $\log T$, which is essentially the mixing time introduced by the dependence of samples.

Theorem \ref{thm:main} characterizes the dependence between convergence rate and the step-sizes $\alpha$ and $\beta$. We also optimize over the step-sizes in the following corollary. 
\begin{corollary}
Let $a=b=\frac{1}{2}$, i.e., $\alpha, \beta =\mathcal{O}({1}/{\sqrt{T}})$, then $(1)$ under the i.i.d.\ setting, 
$
    \|\nabla J(\theta_W)\|^2 = \mathcal{O}( {1}/{\sqrt{T}});
$
and (2)  under the Markovian setting, 
$
    \|\nabla J(\theta_W)\|^2 = \mathcal{O}( {\log T}/{\sqrt{T}}). 
$
\end{corollary}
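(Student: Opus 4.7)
The plan is to bound $\mathbb{E}\|\nabla J(\theta_W)\|^2 = \frac{1}{T}\sum_{t=0}^{T-1}\mathbb{E}\|\nabla J(\theta_t)\|^2$ by a non-convex descent analysis driven by a smoothness inequality, coupled with a tight tracking-error recursion for $\omega_t - \omega(\theta_t)$. First I would show that under Assumptions \ref{ass:bound}--\ref{ass:solvable}, both $\omega(\theta)=A_\theta^{-1}\mathbb{E}_{\mu^{\pi_b}}[\rho\delta\phi]$ and $\nabla J(\theta)$ from \eqref{eq:gradientJ} are Lipschitz in $\theta$: the bound $\lambda_L(A_\theta)\geq\lambda_v$ controls the inverse, and Assumption \ref{ass:bound} controls the rest, so $J$ is $L_J$-smooth for some explicit $L_J$. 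Writing $H(\theta,\omega,O)=\rho\delta(\theta)\phi_\theta(s)-\gamma\rho\phi_\theta(s')\phi_\theta(s)^\top\omega-h(\theta,\omega,O)$, we have $\theta_{t+1}-\theta_t=\alpha H(\theta_t,\omega_t,O_t)$ and $-\tfrac{1}{2}\nabla J(\theta)=\mathbb{E}_{\mu^{\pi_b}}[H(\theta,\omega(\theta),O)]=:G(\theta)$. The descent lemma then gives
\begin{equation*}
J(\theta_{t+1}) \le J(\theta_t) - 2\alpha\|G(\theta_t)\|^2 + 2\alpha\langle G(\theta_t),\,G(\theta_t)-H(\theta_t,\omega_t,O_t)\rangle + \tfrac{L_J\alpha^2}{2}\|H(\theta_t,\omega_t,O_t)\|^2.
\end{equation*}

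\textbf{Decomposition and tracking error.} I would split the cross term into three pieces: (i) the sampling bias $G(\theta_t)-\mathbb{E}[H(\theta_t,\omega(\theta_t),O_t)\mid\mathcal F_{t-\tau}]$, (ii) a martingale-type noise $\mathbb{E}[H(\theta_t,\omega(\theta_t),O_t)\mid\mathcal F_{t-\tau}]-H(\theta_t,\omega(\theta_t),O_t)$, and (iii) the tracking-induced error $H(\theta_t,\omega(\theta_t),O_t)-H(\theta_t,\omega_t,O_t)$. Piece (iii) is Lipschitz in $\omega$ (uniformly, by Assumption \ref{ass:bound} and the projection $\mathbf\Pi_{R_\omega}$), so a Young inequality converts it into $\eta\|G(\theta_t)\|^2+\eta^{-1}\|\omega_t-\omega(\theta_t)\|^2$. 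The heart of the argument is then a recursion for $z_t:=\omega_t-\omega(\theta_t)$. From the $\omega$-update and the Lipschitz property $\|\omega(\theta_{t+1})-\omega(\theta_t)\|=\mathcal O(\alpha\|H(\theta_t,\omega_t,O_t)\|)$, a completion of squares using the strong-monotonicity $\lambda_L(A_\theta)\ge\lambda_v$ yields a contraction
\begin{equation*}
\mathbb{E}\|z_{t+1}\|^2 \le (1-c\beta)\mathbb{E}\|z_t\|^2 + C_1\beta^2 + C_2\frac{\alpha^2}{\beta}\mathbb{E}\|H(\theta_t,\omega_t,O_t)\|^2 + \text{bias term},
\end{equation*}
and telescoping this gives $\sum_t\mathbb{E}\|z_t\|^2 = \mathcal O\bigl(\beta T + \frac{\alpha^2 T}{\beta^2} + \frac{1}{\beta}\sum_t\mathbb{E}\|G(\theta_t)\|^2\bigr)$. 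The novelty, following the contribution emphasized in the introduction, is to tie the tracking error back to $\sum_t\mathbb{E}\|G(\theta_t)\|^2$ rather than bounding it by a constant, so that a small coefficient in front lets it be absorbed into the dominant $-2\alpha\|G(\theta_t)\|^2$ term after summation.

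\textbf{Markovian bias via mixing.} For the i.i.d.\ case, taking conditional expectations on $\mathcal F_{t-1}$ immediately kills pieces (i) and (ii). For the Markovian case, I would shift by $\tau_\beta=\Theta(\log_{1/\kappa}(1/\beta))=\mathcal O(\log T)$ steps, using Assumption \ref{ass:mixing} together with the Lipschitz property of $\theta\mapsto H(\theta,\omega,\cdot)$ in $\theta$ and the one-step drift $\|\theta_t-\theta_{t-\tau_\beta}\|=\mathcal O(\alpha\tau_\beta)$ to convert the conditional expectation at time $t-\tau_\beta$ into $G(\theta_t)$ up to an $\mathcal O(\alpha\tau_\beta+\beta\tau_\beta+\kappa^{\tau_\beta})$ error. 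This introduces the $\log T$ factor but otherwise mirrors the i.i.d.\ analysis; the same shift is also applied inside the tracking-error recursion to handle the bias in the $\omega$-update. The time-varying projection $\mathbf\Pi_\theta$ enters only through the smoothness of $\omega(\theta)$ already exploited above, which is the mechanism for decoupling the projection's dependence on the sample trajectory.

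\textbf{Putting it together.} Summing the descent inequality over $t=0,\dots,T-1$, dividing by $2\alpha T$, and substituting the tracking-error bound yields
\begin{equation*}
\frac{1}{T}\sum_{t=0}^{T-1}\mathbb{E}\|\nabla J(\theta_t)\|^2 = \mathcal O\!\left(\frac{1}{\alpha T}+\alpha+\beta+\frac{\alpha^2}{\beta^2}+\frac{\alpha}{\beta}\cdot\text{(noise)}\right),
\end{equation*}
and after collecting the dominant terms this reduces to $\mathcal O(T^{a-1}+T^{-b}+T^{b-1})$ in the i.i.d.\ case, with an extra $\log T$ on the first and third terms in the Markovian case. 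Choosing $a=b=1/2$ recovers the $\mathcal O(1/\sqrt T)$ (resp.\ $\mathcal O(\log T/\sqrt T)$) rate of the corollary. The main obstacle I anticipate is the tracking-error recursion: unlike the linear two time-scale literature, no similarity transform decouples the updates, so the $\mathcal O(\alpha^2/\beta)\cdot\|H\|^2$ perturbation must be handled by expressing $\|H(\theta_t,\omega_t,O_t)\|^2$ in terms of $\|G(\theta_t)\|^2$ plus $\|z_t\|^2$ plus bounded noise, and then ensuring the coefficient on $\sum_t\|G(\theta_t)\|^2$ is strictly less than the $2\alpha$ coming from the descent step so that the two can be combined and the theorem follows.
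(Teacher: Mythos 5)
Your overall architecture matches the paper's: $L_J$-smoothness and a descent inequality, a decomposition of the cross term into a tracking part and a Markovian bias part handled by a $\tau_\beta$-shift and uniform ergodicity, a recursion for $z_t=\omega_t-\omega(\theta_t)$ whose gradient-dependent part is absorbed into the $-\alpha\|\nabla J\|^2$ term, and finally an inequality of the form $x\le U\sqrt{x}+V$ (the paper couples the two sums via Cauchy--Schwarz rather than per-step Young, but that is cosmetic). However, there is a genuine gap in the one step you yourself flag as the heart of the argument. In your tracking recursion you handle the drift term $\langle z_t,\omega(\theta_t)-\omega(\theta_{t+1})\rangle$ by completion of squares, producing $C_2\frac{\alpha^2}{\beta}\mathbb{E}\|H(\theta_t,\omega_t,O_t)\|^2$. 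The second moment $\mathbb{E}\|H(\theta_t,\omega_t,O_t)\|^2$ is \emph{not} expressible as $O(\|G(\theta_t)\|^2+\|z_t\|^2)$ plus a vanishing remainder: even at $\omega=\omega(\theta_t)$ it contains an order-one variance term ($\|G_{t+1}(\theta_t,\omega(\theta_t))\|\le C_g$ but its second moment does not shrink with $\|\nabla J\|$). After telescoping, this injects a term of order $\alpha^2/\beta^2$ into $\frac{1}{T}\sum_t\mathbb{E}\|z_t\|^2$ and hence into the final bound — and indeed your own final display contains $\alpha^2/\beta^2$, which equals $T^{2(b-a)}=\Theta(1)$ at $a=b=\tfrac12$, so the claimed $\mathcal O(1/\sqrt T)$ rate does not follow; this is exactly the "constant error floor" phenomenon the paper attributes to the naive $\|\theta_{t+1}-\theta_t\|=\mathcal O(\alpha)$ bound and to the batch-based analysis.

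The missing idea is to keep that drift term \emph{linear} in $\alpha$ rather than squaring it: write $\langle z_t,\omega(\theta_t)-\omega(\theta_{t+1})\rangle=-\alpha\langle z_t,\nabla\omega(\hat\theta_t)G_{t+1}(\theta_t,\omega_t)\rangle$ by the mean-value theorem and decompose $G_{t+1}(\theta_t,\omega_t)$ into (i) $G_{t+1}(\theta_t,\omega_t)-G_{t+1}(\theta_t,\omega(\theta_t))$, controlled by $L_g\|z_t\|$, (ii) $G_{t+1}(\theta_t,\omega(\theta_t))+\tfrac12\nabla J(\theta_t)$, whose inner product with $z_t^\top\nabla\omega$ has zero mean under i.i.d.\ sampling and only an $\mathcal O(\beta+\alpha\tau_\beta)$ bias under mixing (this is where the variance disappears, because it enters only through a conditional expectation, never through a squared norm at order $\alpha$), and (iii) $-\tfrac12\nabla J(\theta_t)$, which after Young gives $\mathcal O(\alpha)\|z_t\|^2+\mathcal O(\alpha)\|\nabla J(\theta_t)\|^2$. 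This yields the recursion $\mathbb{E}\|z_{t+1}\|^2\le(1-\mathcal O(\beta))\mathbb{E}\|z_t\|^2+\mathcal O(\alpha)\mathbb{E}\|\nabla J(\theta_t)\|^2+\mathcal O(\beta^2\tau_\beta)$ with no $\frac{\alpha^2}{\beta}\cdot\text{variance}$ term, so that $\frac{1}{T}\sum_t\mathbb{E}\|z_t\|^2=\mathcal O\bigl(\frac{1}{T\beta}+\frac{\alpha}{\beta}\frac{1}{T}\sum_t\mathbb{E}\|\nabla J(\theta_t)\|^2+\beta\tau_\beta\bigr)$, and only then does the choice $\alpha,\beta=\Theta(1/\sqrt T)$ deliver $\mathcal O(1/\sqrt T)$ (i.i.d.) and $\mathcal O(\log T/\sqrt T)$ (Markovian). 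The rest of your outline (the bias/martingale split in the descent step and the $\tau_\beta$-shift) is consistent with the paper's Steps 3--4.
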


\textbf{Remark 1.}
Our result matches with the sample complexity for the batch-based algorithm in  \citep{xu2021sample}. But their work requires a large batch size of $\mathcal{O}(\epsilon^{-1})$ to control the bias and variance, while ours only needs one sample in each step to update $\theta$ and $\om$ and can still obtain the same convergence rate. We note that by setting the batch size being one in \citep{xu2021sample}, their desired sample complexity cannot be obtained, and their error bound will be a \textit{constant}. To obtain our non-asymptotic bound and sample complexity for the  non-linear TDC algorithm, we develop a novel and more refined analysis on the tracking error, which will be discussed in the next section.  Moreover, our result matches with the convergence rate of solving general non-convex optimization problems using stochastic gradient descent in  \citep{ghadimi2013stochastic}. Compared to their work, our analysis is more challenging due to the two time-scale structure and the gradient bias from the Markovian noise and the  tracking error.

\textbf{Remark 2.}
Some analyses  on two time-scale stochastic approximation bound the tracking error in terms of $\frac{\alpha}{\beta}$, and 
require $\frac{\alpha}{\beta}\to 0$ in order to drive the tracking error to zero resulting in a convergence rate of $\mathcal{O}\left(\beta+\frac{\alpha}{\beta}\right)$ \citep{borkar2009stochastic}.   In this paper, we develop a much tighter bound on the tracking error in terms of the slow time-scale parameter $\nabla J(\theta)$. Therefore, the tracking error in our analysis is driven to zero by $\nabla J(\theta)\rightarrow 0$ not $\frac{\alpha}{\beta}\rightarrow 0$. Similar results that do not need  $\frac{\alpha}{\beta}\to 0$ can also be found, e.g., in  \citep{konda2004convergence,kaledin2020finite}. We would like to point out that the techniques in \citep{konda2004convergence,kaledin2020finite} cannot be applied in our analysis due to the non-linear two time-scale updates in this paper.

\section{Proof Sketch}
In this section, we provide an outline of the proof of Theorem \ref{thm:main} under the Markovian setting, and highlight our major technical contributions. For the complete proof of Theorem \ref{thm:main}, we refer the readers to \Cref{section:iidmain,section:markovmain}.

Let $O_t=(s_t,a_t,r_t,s_{t+1})$ be the sample observed at time $t$. Denote the tracking error by $z_t=\om_t-\om(\theta_t)$, which characterizes the error between the fast time-scale update and its limit if the slow time-scale update $\theta_t$ is kept fixed and only the fast time-scale is being updated. Denote by $G_{t+1}(\theta,\om)\triangleq \rho_t\delta_t(\theta)\phi_{\theta}(s_t) -\gamma\rho_t\phi_{\theta}(s_{t+1} )\phi_{\theta}(s_t)^\top\omega_t-h_t(\theta,\omega)$. Denote by $\tb$ the mixing time of the MDP, i.e., $\tb\triangleq\min\left\{ t:m\kappa^t\leq \beta\right\}$. 

\textbf{Step 1.} In this step, we decompose the error of gradient norm into two parts: the stochastic bias and the tracking error.
We first show in \Cref{sec:app_a} that $J(\theta)$ is $L_J$-smooth: for any $\theta_1, \theta_2\in\mathbb{R}^N$, 
\begin{align}\label{eq:lsmooth}
    \|\nabla J(\theta_1)-\nabla J(\theta_2)\|\leq L_J\|\theta_1-\theta_2\|.
\end{align}
We note that the smoothness of $J(\theta)$ is also used in \citep{xu2021sample}, which, however, is assumed instead of being proved as in this paper. 
It then follows that 
\begin{align}\label{eq:1}
    \frac{\alpha}{2}\|\nabla J(\theta_t)\|^2
    & {\leq} J(\theta_t)-J(\theta_{t+1}) +\underbrace{\alpha\langle \nabla J(\theta_t) ,-G_{t+1}(\theta_t,\omega(\theta_t))+G_{t+1}(\theta_t,\omega_t) \rangle}_{(a)}  \nn\\
    &\quad+\underbrace{\alpha \left\langle \nabla J(\theta_t), \frac{\nabla J(\theta_t)}{2}+G_{t+1}(\theta_t, \omega(\theta_t)) \right\rangle}_{(b)}+\frac{L_J}{2} \alpha^2\|G_{t+1}(\theta_t,\omega_t)\|^2.
\end{align}
This implies that the error bound on the gradient norm is controlled by the tracking error $(a)$ which is introduced by the two time-scale update rule, and the stochastic bias $(b)$ which is due to the time-varying projection and the Markovian sampling. 

\textbf{Step 2.}
We first bound the tracking error. Re-write the update of $\om_t$ in terms of $z_t$: 
$
    z_{t+1}=z_t+\beta\left(-A_{\theta_t}(s_t)z_t+b_t(\theta_t)\right)+\omega(\theta_t)-\omega(\theta_{t+1}),
$
where $A_{\theta_t}(s_t)=\phi_{\theta_t}(s_t)\phi_{\theta_t}(s_t)^\top$ and $b_t(\theta_t)=-A_{\theta_t}(s_t)\omega(\theta_t)+\rho_t\delta_t(\theta_t)\phi_{\theta_t}(s_t)$. From the Lipschitz continuity of $\om(\theta)$, it follows that 
\begin{align} \label{eq:updaz}
    \|z_{t+1}\| &\leq(1+\beta C_{\phi}^2)\|z_t\|+\beta(b_{\max}+L_{\omega}C_g),\nn\\
    \|z_{t+1}-z_t\|&\leq  \beta C_{\phi}^2 \|z_t\|+\beta(b_{\max}+L_{\omega}C_g),
\end{align}which further implies 
\begin{align}\label{eq:0}
    &\mE\left[\|z_{t+1}\|^2-\|z_t\|^2 \right]\nn\\
    &\leq \underbrace{\mE[2z_t^\top(z_{t+1}-z_t+\beta A_{\theta_t}z_t)]}_{(c)}+\mathcal{O}\left(\beta^2 \mE[\left\| z_t\right\|^2]+\beta^2\right) +\beta\mE\left[2z_t^\top(-A_{\theta_t})z_t \right],
\end{align} 
where the last term in \eqref{eq:0} can be further upper bounded by $-2\beta\lambda_v\mE[\|z_t\|^2]$.

One challenging part in our analysis is to bound term $(c)$. Equivalently, we decompose the following term into three parts:
\begin{align}\label{eq:2}
    &\mathbb{E}\left[z_{t}^\top\left(-A_{\theta_{t}}z_{t}-\frac{1}{\beta}(z_{t+1}-z_{t})\right)\right]\nn\\
    &=\underbrace{\mathbb{E}[z_{t}^\top(-A_{\theta_{t}}+A_{\theta_{t}}(s_{t}))z_{t}]}_{(d)}-\underbrace{\mathbb{E}[z_{t}^\top b_t(\theta_t)]}_{(e)} \underbrace{-\mathbb{E}\left[ z_{t}^\top\frac{\omega(\theta_{t})-\omega(\theta_{t+1})}{\beta}\right]}_{(f)}.
\end{align}
Consider term $(d)$ in \eqref{eq:2}. Unlike the case with linear function approximation, where the character function $\nabla V_{\theta}(s)=\phi(s)$ is independent with $\theta$, here the character function $\phi_{\theta}(s)$ depends on $\theta$. We use the geometric uniform ergodicity property of the MDP and the Lipschitz continuity of $A_{\theta}$ and $A_{\theta}(s)$ to decouple the dependence. More specifically, for any fixed $\theta$, $\mE[A_{\theta}(s_t)]$ converges to $A_{\theta}$ as $t$ increases. Let $t=\tb$, then we have that
\begin{align} \label{eq:trackingterm1}
    &\mathbb{E}\left[z_{\tau_{\beta}}^\top(-A_{\theta_{\tau_{\beta}}}+A_{\theta_{\tau_{\beta}}}(s_{\tau_{\beta}}))z_{\tau_{\beta}}\right ]\nn\\
    &=\mathbb{E}\left[z_0^\top(-A_{\theta_{0}}+A_{\theta_{0}}(s_{\tau_{\beta}}))z_0\right ] + \mathbb{E}\left[z_0^\top(-A_{\theta_{\tau_{\beta}}}+A_{\theta_{\tau_{\beta}}}(s_{\tau_{\beta}})+A_{\theta_{0}}-A_{\theta_{0}}(s_{\tau_{\beta}})))z_0\right ]\nn\\
    &\quad+\mE\left[(z_{\tau_{\beta}}-z_0)^\top (-A_{\theta_{\tau_{\beta}}}+A_{\theta_{\tau_{\beta}}}(s_{\tau_{\beta}}))(z_{\tau_{\beta}}-z_0)\right ] +2\mE\left[(z_{\tau_{\beta}}-z_0)^\top (-A_{\theta_{\tau_{\beta}}}+A_{\theta_{\tau_{\beta}}}(s_{\tau_{\beta}}))z_0 \right ],
\end{align}
which can be further bounded using the mixing time $\tb$ and the Lipschitz property of $A_{\theta}$ and $A_{\theta}(s_{\tb})$. We note that from the update of $z_t$, we can bound $\|z_{\tb}-z_0\|$ and $\|z_0\|$ by $\|z_{\tb}\|$, hence the bound in \eqref{eq:trackingterm1} can be bounded in terms of $\|z_{\tb}\|$.

Similarly, note that $\mE[b_t(\theta)]$ converges to $0$ as $t\to\infty$,  then we can also bound term $(e)$ in \eqref{eq:2}:
\begin{align}\label{eq:trackingterm2}
     \mE[z_{\tau_{\beta}}^\top b_{\tau_{\beta}}(\theta_{\tb})] 
    &=\mE[(z_{\tau_{\beta}}-z_0)^\top b_{\tau_{\beta}}(\theta_{\tb})]+\mE[z_0^\top b_{\tau_{\beta}}(\theta_0)] +\mE[z_0^\top( b_{\tau_{\beta}}(\theta_{\tb})-b_{\tau_{\beta}}(\theta_0))],
\end{align}
which can  be similarly bounded in terms of $\|z_{\tb}\|$. 

The challenge of bounding the third term $(f)$ in \eqref{eq:2} lies in bounding the difference between $\om(\theta_t)$ and $\om(\theta_{t+1})$. One simple approach is to use the Lipschitz continuity of $\om(\theta)$ and bound $\|\theta_t-\theta_{t+1}\|$ by a constant of order $\mathcal{O}(\alpha)$, but this will lead to a loose bound because the update $G_{t+1}(\theta_t,\om_t)$ is actually an estimator of the gradient, which will also converge to zero. The key idea in our analysis is to bound term $(f)$ in terms of the gradient of the objective function $\nabla J(\theta)$. Specifically, we first rewrite term $\langle z_t,\om(\theta_t)-\om(\theta_{t+1})\rangle=-\langle z_t, \nabla \om(\hat{\theta}_t)(\theta_{t+1}-\theta_t)\rangle=-\alpha\langle\nabla \om(\hat{\theta}_t)G_{t+1}(\theta_t,\omega_t)\rangle$, where $\hat{\theta}_t=c\theta_t+(1-c)\theta_{t+1}$ for some $c\in [0,1]$.  It can be shown that 
\begin{align}\label{eq:3}
       &\mathbb{E}\left[ z_{\tau_{\beta}}^\top\frac{\omega(\theta_{\tau_{\beta}})-\omega(\theta_{\tau_{\beta}+1})}{\beta}\right] 
     =-\frac{\alpha}{\beta}\mE[z_{\tau_{\beta}}^\top \nabla \omega(\hat{\theta}_{\tau_{\beta}})(G_{\tau_{\beta}+1}(\theta_{\tau_{\beta}},\omega_{\tau_{\beta}})-G_{\tau_{\beta}+1}(\theta_{\tau_{\beta}},\omega(\theta_{\tau_{\beta}})))]\nn\\
    &\quad-\frac{\alpha}{\beta}\mE\left[z_{\tau_{\beta}}^\top \nabla \omega(\hat{\theta}_{\tau_{\beta}})\left(G_{\tau_{\beta}+1}(\theta_{\tau_{\beta}},\omega(\theta_{\tau_{\beta}}))+\frac{\nabla J(\theta_{\tau_{\beta}})}{2}\right)\right] +\frac{\alpha}{\beta}\mE\left[z_{\tau_{\beta}}^\top \nabla \omega(\hat{\theta}_{\tau_{\beta}})\left(\frac{\nabla J(\theta_{\tau_{\beta}})}{2}\right)\right].
\end{align}
The first term in \eqref{eq:3} can be bounded in terms of $\|z_{\tb}\|^2$ using the Lipschitz property of $G_{\tb+1}(\theta,\omega)$ in $\omega$. The second term can be bounded using the uniform ergodicity of the MDP and the Lipschitz property of $z_0^\top \nabla \omega(\theta)\left(G_{\tau_{\beta}+1}(\theta,\omega(\theta))+\frac{\nabla J(\theta)}{2}\right)$ in $\theta$. The third term can be bounded in terms of $\|z_{\tb}\|^2$ and $\|\nabla J(\theta_{\tb})\|^2$. Combining all bounds together, we have the bound
on term $(f)$ in \eqref{eq:2}:
\begin{align}\label{eq:trackingterm33}
    &\left|\mathbb{E}\left[ z_{\tau_{\beta}}^\top\frac{\omega(\theta_{\tau_{\beta}})-\omega(\theta_{\tau_{\beta}+1})}{\beta}\right]\right|\nn\\
    &\leq  \mathcal{O}\left( \frac{\alpha}{\beta}\right)\mE\left[\left\|z_{\tb}\right\|^2\right]+ \mathcal{O}\left( \alpha\tb\right)\mE\left[\left\|z_{\tb}\right\|\right]+ \mathcal{O}\left( \alpha\tb\right)+\mathcal{O}\left(\frac{\alpha}{8\beta}\right)\mE\left[\left\|\nabla J(\theta_{\tau_{\beta}})\right\|^2\right].
 \end{align}
%
 We combine all the bounds on terms $(d), (e)$ and $(f)$ and hence get the error bound on \eqref{eq:2}:
\begin{align}\label{eq:trackinga/b}
    &\mathbb{E}\left[z_{t}^\top\left(-A_{\theta_{t}}z_{t}-\frac{1}{\beta}(z_{t+1}-z_{t})\right)\right] \leq\mathcal{O}\left( \frac{\alpha}{\beta}\right)\mE[\|z_t\|^2]+\mathcal{O}(\beta\tb)+\mathcal{O}\left( \frac{\alpha}{\beta}\right)\mE[\|\nabla J(\theta_t)\|^2].
\end{align}
Plugging the above bound in \eqref{eq:0}, we have the following recursive bound on the tracking error: 
 \begin{align}\label{eq:33}
      \mE\left[\|z_{t+1}\|^2\right]&\leq \mathcal{O}(1-\beta)\mE\left[\|z_t\|^2\right] +\mathcal{O}(\alpha)\mE\left[\|\nabla J(\theta_{t})\|^2\right]+\mathcal{O}(\beta^2\tb).
 \end{align}
Then by recursively applying the inequality in \eqref{eq:33} and summing up w.r.t. $t$ from $0$ to $T-1$, we obtain the bound on the tracking error ${\sum^{T-1}_{t=0} \mE[\|z_t\|^2]}/{T}$:
\begin{align*} 
    &\frac{\sum^{T-1}_{t=0}\mE[\|z_t\|^2]}{T}\leq \mathcal{O}\left(\frac{1}{T\beta}+\frac{\alpha}{\beta}\frac{\sum^{T-1}_{t=0}\mE[\|\nabla J(\theta_t)\|^2]}{T}+\beta\tau_{\beta}\right).
\end{align*}

\textbf{Step 3.}
In this step we bound the stochastic bias term $\mE\left[\left\langle \nabla J(\theta_t), \frac{\nabla J(\theta_t)}{2}+G_{t+1}(\theta_t, \omega(\theta_{t})) \right\rangle\right]$. Similarly, we add and subtract $\nabla J(\theta_0)$ and $G_{\tau_{\beta}+1}(\theta_0, \omega(\theta_{0}))$, and obtain that
\begin{align}\label{eq:32}
  & \left\langle \nabla J(\theta_{\tau_{\beta}}), \frac{\nabla J(\theta_{\tau_{\beta}})}{2}+G_{\tau_{\beta}+1}(\theta_{\tau_{\beta}}, \omega(\theta_{\tau_{\beta}})) \right\rangle\nn\\
    &= \left\langle \nabla J(\theta_0), \frac{\nabla J(\theta_0)}{2}+G_{\tau_{\beta}+1}(\theta_0, \omega(\theta_{0})) \right\rangle+\Bigg( \left\langle \nabla J(\theta_{\tau_{\beta}}), \frac{\nabla J(\theta_{\tau_{\beta}})}{2}+G_{\tau_{\beta}+1}(\theta_{\tau_{\beta}}, \omega(\theta_{\tau_{\beta}})) \right\rangle\nn\\
    &\quad-\left\langle \nabla J(\theta_0), \frac{\nabla J(\theta_0)}{2}+G_{\tau_{\beta}+1}(\theta_0, \omega(\theta_{0})) \right\rangle\Bigg),
\end{align}
which again can be bounded using the geometry uniform ergodicity of the MDP and the Lipschitz continuity of $\left\langle \nabla J(\theta), \frac{\nabla J(\theta)}{2}+G_{\tau_{\beta}+1}(\theta, \omega(\theta)) \right\rangle$.  
 
\textbf{Step 4.}
Plugging in the bounds on the tracking error and the stochastic bias and rearranging the terms, then it follows that  
$
     \frac{\sum^{T-1}_{t=0}\mathbb{E}[\|\nabla J(\theta_t)\|^2]}{T} 
    \leq    U\sqrt{\frac{\sum^{T-1}_{t=0}\mathbb{E}[\|\nabla J(\theta_t)\|^2]}{T}} +V,
$
where $U$ and $V$ are some constants depending on the step sizes, and the explicit definitions can be found in \eqref{eq:UV}. By solving the inequality of $ \frac{\sum^{T-1}_{t=0}\mathbb{E}[\|\nabla J(\theta_t)\|^2]}{T} $, we obtain that 
\begin{align*} 
     \frac{\sum^{T-1}_{t=0}\mathbb{E}[\|\nabla J(\theta_t)\|^2]}{T}\leq \mo\left(\beta\tb+\frac{1}{T\beta}+ \alpha\tb+\frac{1}{T\alpha}\right).
\end{align*}

\section{Conclusion}
In this paper, we extend the on-policy non-linear TDC algorithm to the off-policy setting, and characterize its non-asymptotic error bounds under both the i.i.d. and the Markovian settings. We show that the non-linear TDC algorithm converges as fast as $\mathcal O(1/\sqrt{T})$ (up to a factor of $\log T$). The techniques and tools developed in this paper can be used to analyze a wide range of value-based RL algorithms with general smooth function approximation. 

{\bf Limitations:} It is not clear yet whether the stationary points that the TDC converges to are second-order stationary or potentially saddle points.

{\bf Negative social impacts:} This work is a theoretical investigation of some fundamental RL algorithms, and therefore, the authors do not foresee any negative societal impact.   
   
\section{Acknowledgment}
The work of Yue Wang and Shaofeng Zou was supported in part by the National Science Foundation under Grants CCF-2106560 and CCF-2007783. Yi Zhou's work was supported in part by U.S. National Science Foundation under the Grant CCF-2106216.
   
\newpage

\bibliography{TDC}

\newpage
\onecolumn
\appendix

\addcontentsline{toc}{section}{Appendix} 
\part{Appendix} 
\parttoc

We first introduce some notations. In the following proofs, $\|a\|$ denotes the $\ell_2$ norm if $a$ is a vector; and $\|A\|$ denotes the operator norm if $A$ is a matrix.

In \Cref{sec:app_a}, we prove the Lipschitz continuity of some important functions, including $\omega(\theta)$, $\nabla \om(\theta)$ and the gradient $\nabla J(\theta)$ of objective function. In \Cref{section:iid}, we present the non-asymptotic analysis for the i.i.d. setting. In \Cref{section:maralg}, we present the non-asymptotic analysis for the Markovian setting. 
In \cref{sec:experiments}, we present some numerical experiments.
\section{Useful Lemmas}\label{sec:app_a}
\subsection{Lipschitz Continuity of $\omega(\theta)$}
In this section, we show that $\omega(\theta)$ is Lipschitz in $\theta$.
\begin{Lemma}\label{lemma:omegalip}
For any $\theta, \theta'\in\mathbb{R}^N$, we have that
\begin{align}\label{eq:omegalip}
    \|\omega(\theta)-\omega(\theta')\|\leq  L_{\omega}\|\theta-\theta'\|,
\end{align}
where $L_{\om}=\frac{1}{\lambda_v}\left((1+\gamma)C_{\phi}^2+(r_{\max}+(1+\gamma)C_v)D_v\right)+\frac{2C_{\phi}^2 D_v}{\lambda^2_v} (r_{\max}+(1+\gamma) C_v)$.
\end{Lemma}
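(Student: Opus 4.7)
The plan is to write $\omega(\theta)=A_\theta^{-1}b(\theta)$ where $b(\theta)\triangleq\mathbb{E}_{\mu^{\pi_b}}[\rho(S,A)\delta_{S,A,S'}(\theta)\phi_\theta(S)]$, and then use the standard product-rule-type decomposition
\begin{equation*}
\omega(\theta)-\omega(\theta')=A_\theta^{-1}\bigl(b(\theta)-b(\theta')\bigr)+\bigl(A_\theta^{-1}-A_{\theta'}^{-1}\bigr)b(\theta'),
\end{equation*}
together with the resolvent identity $A_\theta^{-1}-A_{\theta'}^{-1}=A_\theta^{-1}(A_{\theta'}-A_\theta)A_{\theta'}^{-1}$. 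This reduces the problem to four essentially independent estimates, each of which is a direct consequence of the assumptions in Assumption~\ref{ass:bound} and Assumption~\ref{ass:solvable}.

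First I would record the uniform bounds: Assumption~\ref{ass:solvable} gives $\|A_\theta^{-1}\|\le 1/\lambda_v$, and boundedness of $V_\theta$, $\phi_\theta$ and $\rho$ gives $\|b(\theta')\|\le \rho_{\max}C_\phi(r_{\max}+(1+\gamma)C_v)$. Next I would establish the two Lipschitz estimates. For $A_\theta$, write $\phi_\theta\phi_\theta^\top-\phi_{\theta'}\phi_{\theta'}^\top=(\phi_\theta-\phi_{\theta'})\phi_\theta^\top+\phi_{\theta'}(\phi_\theta-\phi_{\theta'})^\top$ and combine $\|\phi_\theta(s)\|\le C_\phi$ with $\|\phi_\theta(s)-\phi_{\theta'}(s)\|\le D_v\|\theta-\theta'\|$ (this last inequality follows from $\|\nabla^2 V_\theta(s)\|\le D_v$, as noted right after Assumption~\ref{ass:bound}) to get $\|A_\theta-A_{\theta'}\|\le 2C_\phi D_v\|\theta-\theta'\|$. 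For $b(\theta)$, use
\begin{equation*}
\rho\delta(\theta)\phi_\theta-\rho\delta(\theta')\phi_{\theta'}=\rho\bigl(\delta(\theta)-\delta(\theta')\bigr)\phi_\theta+\rho\delta(\theta')\bigl(\phi_\theta-\phi_{\theta'}\bigr),
\end{equation*}
bound $|\delta(\theta)-\delta(\theta')|\le(1+\gamma)C_\phi\|\theta-\theta'\|$ via the derived Lipschitz continuity of $V_\theta$, bound $|\delta(\theta')|\le r_{\max}+(1+\gamma)C_v$, and take expectation to obtain the Lipschitz constant $\rho_{\max}\bigl((1+\gamma)C_\phi^2+(r_{\max}+(1+\gamma)C_v)D_v\bigr)$ for $b$.

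Finally I would assemble the pieces: the first term in the decomposition is bounded by $\|A_\theta^{-1}\|\cdot\|b(\theta)-b(\theta')\|$, which contributes the first summand of $L_\omega$, and the second term is bounded by $\|A_\theta^{-1}\|\,\|A_\theta-A_{\theta'}\|\,\|A_{\theta'}^{-1}\|\,\|b(\theta')\|$, which contributes the $2C_\phi^2 D_v/\lambda_v^2$ summand (with the $r_{\max}+(1+\gamma)C_v$ factor). Summing these two estimates yields exactly the claimed constant $L_\omega$.

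There is no real obstacle here; the argument is routine once the right decomposition is chosen. The only thing to be careful about is that the Lipschitz constant for $\phi_\theta$ in $\theta$ is obtained from the bound on $\nabla^2 V_\theta$ rather than directly assumed, and that the resolvent identity needs both $A_\theta$ and $A_{\theta'}$ to be invertible, which is guaranteed uniformly by Assumption~\ref{ass:solvable}. The $\rho_{\max}$ factor appears to be normalized away in the stated $L_\omega$, but it tracks through the calculation transparently.
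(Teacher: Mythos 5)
Your proposal is essentially the paper's proof: the same decomposition $\omega(\theta)-\omega(\theta')=A_\theta^{-1}(b(\theta)-b(\theta'))+(A_\theta^{-1}-A_{\theta'}^{-1})b(\theta')$, the same resolvent identity, the same Lipschitz bounds $\|A_\theta-A_{\theta'}\|\leq 2C_\phi D_v\|\theta-\theta'\|$ and $|\delta(\theta)-\delta(\theta')|\leq(1+\gamma)C_\phi\|\theta-\theta'\|$, assembled in the same way. The only point you leave vague is the one place your constant differs from the stated $L_\omega$: you carry a factor $\rho_{\max}$ in both $\|b(\theta')\|$ and the Lipschitz constant of $b$, and then remark that it "appears to be normalized away." It is not normalized away by fiat: the paper bounds these expectations by pulling the $\theta$-dependent factors out at their uniform bounds and using $\mathbb{E}_{\mu^{\pi_b}}[\rho(S,A)]=1$ (the importance ratio averages to one under the behavior policy), which is why $\rho_{\max}$ does not appear in $L_\omega$. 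With the worst-case bound $\rho\leq\rho_{\max}$ your argument still proves Lipschitz continuity, but with the constant inflated by $\rho_{\max}$; to obtain exactly the claimed $L_\omega$ you should replace that step by the $\mathbb{E}_{\mu^{\pi_b}}[\rho]=1$ argument.
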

\begin{proof}
Recall that 
\begin{align}
    \omega(\theta)&=\mathbb{E}_{\mu^{\pi_b}}[\phi_{\theta}(S)\phi_{\theta}(S)^\top]^{-1}\mathbb{E}_{\mu^{\pi_b}}[\rho(S,A)\delta_{S,A,S'}(\theta)\phi_{\theta}(S)]\nn\\
    &=A_{\theta}^{-1}\mathbb{E}_{\mu^{\pi_b}}[\rho(S,A)\delta_{S,A,S'}(\theta)\phi_{\theta}(S)],
\end{align}
hence we can show the conclusion by showing that $A_{\theta}^{-1}$ and $\mathbb{E}_{\mu^{\pi_b}}[\rho(S,A)\delta_{S,A,S'}(\theta)\phi_{\theta}(S)]$ are both Lipschitz and bounded. 

From Assumption \ref{ass:solvable}, we know that
\begin{align}\label{eq:a-1bpund}
    \|A_{\theta}^{-1}\|\leq \frac{1}{\lambda_v}.
\end{align}
We also show that 
\begin{align}\label{eq:a-1lip}
    &\|A_{\theta}^{-1}-A_{\theta'}^{-1}\|\nn\\
    &=\|A_{\theta}^{-1}A_{\theta'}A_{\theta'}^{-1}-A_{\theta}^{-1}A_{\theta}A_{\theta'}^{-1}\|\nn\\
    &= \| A_{\theta}^{-1}(A_{\theta'}-A_{\theta})A_{\theta'}^{-1}\|\nn\\
    &\leq \frac{2C_{\phi} D_v}{\lambda^2_v}\|\theta-\theta'\|,
\end{align}
which is from the fact that $\|A_{\theta}-A_{\theta'}\|=\left\|\mathbb{E}_{\mu^{\pi_b}}[\phi_{\theta}(S)\phi_{\theta}(S)^\top]-\mathbb{E}_{\mu^{\pi_b}}[\phi_{\theta'}(S)\phi_{\theta'}(S)^\top] \right\|\leq 2C_{\phi}D_v  \|\theta-\theta'\|$.

By Assumption \ref{ass:bound} and the boundedness of the reward function, it can be shown that for any $\theta\in \mathbb{R}^N$ and any  $(s,a,s')\in \mathcal{S}\times\mathcal{A}\times\mathcal{S}$, 
\begin{align}
    |\delta_{s,a,s'}(\theta)|=|r(s, a, s')+\gamma V_{\theta}(s')-V_{\theta}(s)|\leq r_{\max}+(1+\gamma) C_v.
\end{align}
We then show that $\delta_{s,a,s'}(\theta)$ is Lipschitz, i.e., for any $\theta, \theta' \in \mathbb{R}^N$ and any  $(s,a,s')\in \mathcal{S}\times\mathcal{A}\times\mathcal{S}$,
\begin{align}\label{eq:deltalip}
    &|\delta_{s,a,s'}(\theta)-\delta_{s,a,s'}(\theta')|\nn\\
    &=|\gamma V_{\theta}(s')-V_{\theta}(s)-\gamma V_{\theta'}(s')-V_{\theta'}(s)|\nn\\
    &\leq (\gamma+1)C_{\phi}\|\theta-\theta'\|.
\end{align}
Hence, the function $\|\mathbb{E}_{\mu^{\pi_b}}[\rho(S,A)\delta_{S,A,S'}(\theta)\phi_{\theta}(S)]\|$ is Lipschitz:
\begin{align}\label{eq:rhodeltaphilip}
    &\|\mathbb{E}_{\mu^{\pi_b}}[\rho(S,A)\delta_{S,A,S'}(\theta)\phi_{\theta}(S)]-\mathbb{E}_{\mu^{\pi_b}}[\rho(S,A)\delta_{S,A,S'}(\theta')\phi_{\theta'}(S)]\|\nn\\
    &=\|\mathbb{E}_{\mu^{\pi_b}}[\rho(S,A)\delta_{S,A,S'}(\theta)\phi_{\theta}(S)]-\mathbb{E}_{\mu^{\pi_b}}[\rho(S,A)\delta_{S,A,S'}(\theta')\phi_{\theta}(S)]\nn\\
    &\quad+\mathbb{E}_{\mu^{\pi_b}}[\rho(S,A)\delta_{S,A,S'}(\theta')\phi_{\theta}(S)]-\mathbb{E}_{\mu^{\pi_b}}[\rho(S,A)\delta_{S,A,S'}(\theta')\phi_{\theta'}(S)]\|\nn\\
    &\leq \|\mathbb{E}_{\mu^{\pi_b}}[\rho(S,A)\delta_{S,A,S'}(\theta)\phi_{\theta}(S)]-\mathbb{E}_{\mu^{\pi_b}}[\rho(S,A)\delta_{S,A,S'}(\theta')\phi_{\theta}(S)]\|\nn\\
    &\quad+\|\mathbb{E}_{\mu^{\pi_b}}[\rho(S,A)\delta_{S,A,S'}(\theta')\phi_{\theta}(S)]-\mathbb{E}_{\mu^{\pi_b}}[\rho(S,A)\delta_{S,A,S'}(\theta')\phi_{\theta'}(S)]\|\nn\\
    &\leq \mathbb{E}_{\mu^{\pi_b}}[\rho(S,A)|\delta_{S,A,S'}(\theta)-\delta_{S,A,S'}(\theta')|\|\phi_{\theta}(S) \|]\nn\\
    &\quad+\mathbb{E}_{\mu^{\pi_b}}[\rho(S,A)|\delta_{S,A,S'}(\theta')|\|\phi_{\theta}(S)-\phi_{\theta'}(S)\|]\nn\\
    &\overset{(a)}{\leq} (1+\gamma)C_{\phi}^2\|\theta-\theta'\|+(r_{\max}+(1+\gamma)C_v)D_v\|\theta-\theta'\|\nn\\
    &=\left((1+\gamma)C_{\phi}^2+(r_{\max}+(1+\gamma)C_v)D_v\right)\|\theta-\theta'\|,
\end{align}
where $(a)$ is from \eqref{eq:deltalip} and the fact that $\mathbb{E}_{\mu^{\pi_b}}[\rho(S,A)]=1$.
Also $\left\|\mathbb{E}_{\mu^{\pi_b}}[\rho(S,A)\delta_{S,A,S'}(\theta)\phi_{\theta}(S)]\right\|$ can be upper bounded as follows:
\begin{align}\label{eq:rhodelphibound}
    \|\mathbb{E}_{\mu^{\pi_b}}[\rho(S,A)\delta_{S,A,S'}(\theta)\phi_{\theta}(S)]\|\leq  C_{\phi}(r_{\max}+(1+\gamma) C_v). 
\end{align}
Combining \eqref{eq:a-1bpund}, \eqref{eq:a-1lip}, \eqref{eq:rhodelphibound} and \eqref{eq:rhodeltaphilip}, we show that $\omega(\cdot)$ is Lipschitz in $\theta$:
\begin{align}
     &\|\omega(\theta)-\omega(\theta')\|\nn\\
     &\leq \left(\frac{1}{\lambda_v}\left((1+\gamma)C_{\phi}^2+(r_{\max}+(1+\gamma)C_v)D_v\right)+\frac{2C_{\phi}^2 D_v}{\lambda^2_v} (r_{\max}+(1+\gamma) C_v)\right) \|\theta-\theta'\|\nn\\
     &\triangleq L_{\omega}\| \theta-\theta'\|,
\end{align}
where $L_{\om}=\frac{1}{\lambda_v}\left((1+\gamma)C_{\phi}^2+(r_{\max}+(1+\gamma)C_v)D_v\right)+\frac{2C_{\phi}^2 D_v}{\lambda^2_v} (r_{\max}+(1+\gamma) C_v)$.
\end{proof}

\subsection{Lipschitz Continuity of $\nabla \omega(\theta)$}
In this section, we show that $\nabla \omega(\theta)$ is Lipschitz. 
\begin{Lemma}\label{lemma:graomegalip}
For any $\theta, \theta' \in \mathbb{R}^N$, it follows that 
\begin{align}\label{eq:nablaomegalip}
    \|\nabla \omega(\theta)-\nabla \omega(\theta')\|\leq D_{\omega}\| \theta-\theta'\|,  
\end{align}
where 
\begin{align}
    D_{\om}&=\left( \frac{(C_{\phi}L_v+2D_v^2+D_vC_{\phi})}{\lambda_v^2}+\frac{8C_{\phi}^2D_v^2}{\lambda_v^3}\right)C_{\phi}(r_{\max}+C_v+\gamma C_v)\nn\\
    &\quad+\frac{4C_{\phi}D_v}{\lambda_v^2}\left( C_{\phi}^2(1+\gamma)+D_v(r_{\max}+(1+\gamma)C_v)\right)\nn\\
    &\quad+\frac{3C_{\phi}D_v(1+\gamma)+L_v(r_{\max}+(1+\gamma)C_v)}{\lambda_v}.
\end{align}
\end{Lemma}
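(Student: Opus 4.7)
The plan is to differentiate $\omega(\theta) = A_\theta^{-1} B(\theta)$, where $B(\theta) \triangleq \mathbb{E}_{\mu^{\pi_b}}[\rho(S,A)\delta_{S,A,S'}(\theta)\phi_\theta(S)]$, using the matrix identity $\nabla(A_\theta^{-1}) = -A_\theta^{-1}(\nabla A_\theta)A_\theta^{-1}$. This gives the explicit form
\begin{align*}
\nabla \omega(\theta) = -A_\theta^{-1}(\nabla A_\theta) A_\theta^{-1} B(\theta) + A_\theta^{-1} \nabla B(\theta).
\end{align*}
To show $\nabla \omega(\theta)$ is Lipschitz, I would bound $\|\nabla\omega(\theta) - \nabla\omega(\theta')\|$ by adding and subtracting intermediate terms in each of these two summands, and bound every resulting difference using either a boundedness estimate or a Lipschitz estimate on the constituent factors.

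First, I would collect the building-block estimates needed. From \Cref{lemma:omegalip} (and its proof), we already have $\|A_\theta^{-1}\| \leq 1/\lambda_v$, $\|A_\theta^{-1}-A_{\theta'}^{-1}\| \leq (2C_\phi D_v/\lambda_v^2)\|\theta-\theta'\|$, $\|B(\theta)\| \leq C_\phi(r_{\max}+(1+\gamma)C_v)$, and the Lipschitz constant for $B(\theta)$. What is new is bounding $\nabla A_\theta$ and $\nabla B(\theta)$ in operator norm and showing them to be Lipschitz. Computing the $i$-th partial of $A_\theta = \mathbb{E}_{\mu^{\pi_b}}[\phi_\theta \phi_\theta^\top]$ gives terms of the form $\mathbb{E}[(\nabla^2 V_\theta)_{:,i}\phi_\theta^\top + \phi_\theta(\nabla^2 V_\theta)_{:,i}^\top]$, which is bounded by $2C_\phi D_v$ and Lipschitz with a constant coming from $L_V$ (for the Hessian) and $D_v^2$ (for the cross term with $\phi_\theta$ differentiated). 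Similarly, $\nabla B(\theta)$ has the form $\rho(\gamma\phi_\theta(s') - \phi_\theta(s))\phi_\theta(s)^\top + \rho\delta \nabla^2 V_\theta(s)$, which is bounded by $C_\phi^2(1+\gamma) + D_v(r_{\max}+(1+\gamma)C_v)$, and whose Lipschitz constant follows from combining the Lipschitz constants of $\phi_\theta$ (namely $D_v$), of $\delta$ (namely $(1+\gamma)C_\phi$), and of $\nabla^2 V_\theta$ (namely $L_V$), together with the $\ell_\infty$ bounds on all three.

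Given these pieces, the proof reduces to a careful triangle-inequality decomposition:
\begin{align*}
\|\nabla\omega(\theta)-\nabla\omega(\theta')\| &\leq \|A_\theta^{-1}(\nabla A_\theta)A_\theta^{-1}B(\theta) - A_{\theta'}^{-1}(\nabla A_{\theta'})A_{\theta'}^{-1}B(\theta')\| \\
&\quad + \|A_\theta^{-1}\nabla B(\theta) - A_{\theta'}^{-1}\nabla B(\theta')\|.
\end{align*}
In each line I would add and subtract terms one factor at a time (replacing $A_\theta^{-1}$ by $A_{\theta'}^{-1}$ first, then $\nabla A$, then the next $A^{-1}$, then $B$) and apply the previously established bound/Lipschitz pair to each resulting piece. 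The three inverse factors in the first term are what produces the $1/\lambda_v^3$ coefficient in the final constant $D_\omega$, and the product of two inverses with $\nabla B$ in the second term produces the $1/\lambda_v^2$ terms; careful collection of constants should recover exactly the stated expression for $D_\omega$.

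The main obstacle I anticipate is the bookkeeping: keeping track of all constants in the correct combination so that they aggregate into the stated $D_\omega$. In particular, establishing the Lipschitz constant of $\nabla B(\theta)$ is the most delicate piece, because one of its terms is $\rho\,\delta \,\nabla^2 V_\theta(s)$ and Lipschitz control there genuinely requires Assumption \ref{ass:bound}'s condition on $\|\nabla^2 V_\theta - \nabla^2 V_{\theta'}\|$ (i.e.\ the $L_V$ constant). Everything else is a mechanical application of the product rule, the inverse-matrix identity, and the triangle inequality.
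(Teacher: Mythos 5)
Your proposal follows essentially the same route as the paper's proof: differentiate $\omega(\theta)=A_\theta^{-1}B(\theta)$ via the identity $\nabla(A_\theta^{-1})=-A_\theta^{-1}(\nabla A_\theta)A_\theta^{-1}$, establish boundedness and Lipschitz continuity of each factor ($A_\theta^{-1}$, $\nabla A_\theta$, $B(\theta)$, $\nabla B(\theta)$), and combine them by a product-rule/triangle-inequality decomposition, which is exactly how Lemma \ref{lemma:graomegalip} is proved, including the identification of the $1/\lambda_v^3$ and $1/\lambda_v^2$ terms in $D_\omega$. The approach and the constituent estimates are correct and match the paper's argument.
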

\begin{proof}
Recall the definition of $\om(\theta)=A_{\theta}^{-1}\mathbb{E}_{\mu^{\pi_b}}[\rho(S,A)\delta_{S,A,S'}(\theta)\phi_{\theta}(S)]$, hence we have 
\begin{align}\label{eq:gradomegadef}
    \nabla \om(\theta)&=-A_{\theta}^{-1}(\nabla A_{\theta}) A_{\theta}^{-1}\mathbb{E}_{\mu^{\pi_b}}[\rho(S,A)\delta_{S,A,S'}(\theta)\phi_{\theta}(S)]\nn\\ &\quad+A_{\theta}^{-1}\mathbb{E}_{\mu^{\pi_b}}[\nabla \rho(S,A)\delta_{S,A,S'}(\theta)\phi_{\theta}(S)],
\end{align}
where the tensor $\nabla A_{\theta}$ can be equivalently viewed as an operator: $\mathbb{R}^N\to \mathbb{R}^{N\times N}$, i.e., $\nabla A_{\theta} (w)=\nabla (A_{\theta}w)$ for any $w\in\mathbb{R}^N$. 

We show that the operator norm of $\nabla A_{\theta}$ is bounded as follows:
\begin{align}
    \| \nabla A_{\theta}\|&=\sup_{\|w\|=1} \|\nabla A_{\theta}(w) \|\nn\\
    &=\sup_{\|w\|=1} \|\nabla (A_{\theta}w) \|\nn\\
    &=\sup_{\|w\|=1} \|\nabla \mE_{\mu^{\pi_b}}[\phi_{\theta}(S)\phi_{\theta}(S)^\top w] \|\nn\\
    &=\sup_{\|w\|=1} \|\mE_{\mu^{\pi_b}}[(\phi_{\theta}(S)^\top w) \nabla\phi_{\theta}(S)] +\mE_{\mu^{\pi_b}}[\phi_{\theta}(S) (\nabla \phi_{\theta}(S)^\top w)^\top]\|\nn\\
    &\leq \sup_{\|w\|=1} 2C_{\phi}D_v\|w\|\nn\\
    &= 2C_{\phi}D_v. 
\end{align}

The Lipschitz continuous of $\nabla A_{\theta}$ can be shown as follows:
\begin{align}
    &\|\nabla A_{\theta} -\nabla A_{\theta'}\|\nn\\
    &=\sup_{\|w\|=1} \|\nabla (A_{\theta}w)-\nabla (A_{\theta'}w) \|\nn\\
    &=\sup_{\|w\|=1} \| \mE_{\mu^{\pi_b}}[\nabla \phi_{\theta}(S) (\phi_{\theta}(S)^\top w)+ (\nabla \phi_{\theta}(S)^\top w)\phi_{\theta}(S)^\top-\nabla \phi_{\theta'}(S) (\phi_{\theta'}(S)^\top w)\nn\\
    &\quad- (\nabla \phi_{\theta'}(S)^\top w)\phi_{\theta'}(S)^\top] \|\nn\\
    &\leq \sup_{\|w\|=1} (C_{\phi}L_v+2D_v^2+D_vC_{\phi}) \|\theta-\theta'\|\|w\|\nn\\
    &=(C_{\phi}L_v+2D_v^2+D_vC_{\phi})\|\theta-\theta'\|.
\end{align}

Then we conclude that the operator norm of $-A_{\theta}^{-1}(\nabla A_{\theta})$ is upper bounded by $\frac{2C_{\phi}D_v}{\lambda_v}$, and is Lipschitz with constant $\frac{(C_{\phi}L_v+2D_v^2+D_vC_{\phi})}{\lambda_v}+\frac{4C_{\phi}^2D_v^2}{\lambda_v^2}$.
It can be further seen that $-A_{\theta}^{-1}(\nabla A_{\theta})A_{\theta}^{-1}$ is upper bounded by $\frac{2C_{\phi}D_v}{\lambda_v^2}$, and Lipschitz with constant $\frac{(C_{\phi}L_v+2D_v^2+D_vC_{\phi})}{\lambda_v^2}+\frac{8C_{\phi}^2D_v^2}{\lambda_v^3}$. 

Recall that we have shown in \eqref{eq:rhodelphibound} that 
\begin{align} 
    &\|\mathbb{E}_{\mu^{\pi_b}}[\rho(S,A)\delta_{S,A,S'}(\theta)\phi_{\theta}(S)]-\mathbb{E}_{\mu^{\pi_b}}[\rho(S,A)\delta_{S,A,S'}(\theta')\phi_{\theta'}(S)]\|\nn\\
    &\leq
     \left((1+\gamma)C_{\phi}^2+(r_{\max}+(1+\gamma)C_v)D_v\right)\|\theta-\theta'\|,
\end{align}
and it is upper bounded by $C_{\phi}(r_{\max}+(1+\gamma)C_V)$. Hence we have that $-A_{\theta}^{-1}(\nabla A_{\theta}) A_{\theta}^{-1}\mathbb{E}_{\mu^{\pi_b}}[\rho(S,A)\delta_{S,A,S'}(\theta)\phi_{\theta}(S)]$ can be upper bounded by $(r_{\max}+(1+\gamma)C_V)\frac{2C_{\phi}^2D_v}{\lambda_v^2}$, and it is Lipschitz with constant $\left( \frac{(C_{\phi}L_v+2D_v^2+D_vC_{\phi})}{\lambda_v^2}+\frac{8C_{\phi}^2D_v^2}{\lambda_v^3}\right)C_{\phi}(r_{\max}+C_v+\gamma C_v)+\frac{2C_{\phi}D_v}{\lambda_v^2} \left((1+\gamma)C_{\phi}^2+(r_{\max}+(1+\gamma)C_v)D_v\right)\triangleq L_A$.

For the second term of \eqref{eq:gradomegadef}, we also show it is Lipschitz as follows. 
First note that $\nabla \delta_{s,a,s'}(\theta)\phi_{\theta}(s)=\nabla \delta_{s,a,s'}(\theta)\phi_{\theta}(s)^\top+\delta_{s,a,s'}(\theta)\nabla \phi_{\theta}(s)$, hence we know $\mathbb{E}_{\mu^{\pi_b}}[\nabla \rho(S,A)\delta_{S,A,S'}(\theta)\phi_{\theta}(S)]$ can be upper bounded by
$C_{\phi}^2(1+\gamma)+D_v(r_{\max}+(1+\gamma)C_v)$, and is Lipschitz with constant $3C_{\phi}D_v(1+\gamma)+L_v(r_{\max}+(1+\gamma)C_v)$. Finally we conclude that the second term in \eqref{eq:gradomegadef} $A_{\theta}^{-1}\mathbb{E}_{\mu^{\pi_b}}[\nabla \rho(S,A)\delta_{S,A,S'}(\theta)\phi_{\theta}(S)]$ is Lipschitz with constant $\frac{2C_{\phi}D_v}{\lambda_v^2}\left( C_{\phi}^2(1+\gamma)+D_v(r_{\max}+(1+\gamma)C_v)\right)+\frac{3C_{\phi}D_v(1+\gamma)+L_v(r_{\max}+(1+\gamma)C_v)}{\lambda_v}\triangleq L_A'$. 

Hence $\nabla \om(\theta)$ is Lipschitz with constant $L_A+L_A'\triangleq D_{\om}$, where
\begin{align}
    D_{\om}&=\left( \frac{(C_{\phi}L_v+2D_v^2+D_vC_{\phi})}{\lambda_v^2}+\frac{8C_{\phi}^2D_v^2}{\lambda_v^3}\right)C_{\phi}(r_{\max}+C_v+\gamma C_v)\nn\\
    &\quad+\frac{4C_{\phi}D_v}{\lambda_v^2}\left( C_{\phi}^2(1+\gamma)+D_v(r_{\max}+(1+\gamma)C_v)\right)\nn\\
    &\quad+\frac{3C_{\phi}D_v(1+\gamma)+L_v(r_{\max}+(1+\gamma)C_v)}{\lambda_v}.
\end{align}
\end{proof}

\subsection{Smoothness of $J(\theta)$}
In the following lemma, we show that the objective function $J(\theta)$ is $L_J$-smooth. We note that the smoothness of $J(\theta)$ is assumed in \citep{xu2021sample} instead of being proved as in this paper.
\begin{Lemma}\label{lemma:Lsmooth}
$ J(\theta)$ is $L_J$-smooth, i.e., for any $\theta, \theta' \in \mathbb{R}^N$, 
\begin{align}
    \|\nabla J(\theta)-\nabla J(\theta')\|\leq L_J \|\theta-\theta'\|,
\end{align}
where 
\begin{align}
        L_J=&2 \left((1+\gamma)C_{\phi}^2+(r_{\max}+(1+\gamma)C_v)D_v\right) +2\gamma\left(C_{\phi}^2L_{\omega} +2D_v\frac{C^2_{\phi}}{\lambda_v}(r_{\max}+(1+\gamma) C_v)\right)\nn\\
    &\quad+2\bigg(\left( D_vR_{\omega}+C_{\phi}L_{\omega}+(1+\gamma)C_{\phi}\right)D_vR_{\omega}\nn\\
    &\quad+\left( R_{\omega}L_V+D_vL_{\omega}\right)((r_{\max}+(1+\gamma)C_v)+C_{\phi}R_{\om})\bigg).
\end{align}
\end{Lemma}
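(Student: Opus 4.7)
The natural approach is to work directly from the explicit gradient formula \eqref{eq:gradientJ}, which expresses $-\nabla J(\theta)/2$ as a sum of three $\theta$-dependent terms:
\begin{align*}
T_1(\theta) &= \mE_{\mu^{\pi_b}}[\rho(S,A)\delta_{S,A,S'}(\theta)\phi_\theta(S)],\\
T_2(\theta) &= h(\theta,\omega(\theta)) = \mE_{\mu^{\pi_b}}\!\left[\big(\rho\delta_\theta-\phi_\theta(S)^\top\omega(\theta)\big)\nabla^2 V_\theta(S)\omega(\theta)\right],\\
T_3(\theta) &= \gamma\,\mE_{\mu^{\pi_b}}[\rho(S,A)\phi_\theta(S')\phi_\theta(S)^\top]\,\omega(\theta).
\end{align*}
By the triangle inequality it suffices to prove each $T_i$ is Lipschitz, then add the three constants. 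This matches the three-group structure of $L_J$ given in the statement.

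The first term $T_1$ has already been shown Lipschitz with constant $(1+\gamma)C_\phi^2 + (r_{\max}+(1+\gamma)C_v)D_v$ in \eqref{eq:rhodeltaphilip}, which accounts for the first parenthesis in $L_J$. For $T_3$, I would view it as a product $M(\theta)\omega(\theta)$ with $M(\theta)=\gamma\,\mE[\rho\,\phi_\theta(S')\phi_\theta(S)^\top]$, then use: (i) $\|M(\theta)\|\le\gamma C_\phi^2$ and $\|M(\theta)-M(\theta')\|\le 2\gamma C_\phi D_v\|\theta-\theta'\|$, obtained from Assumption~\ref{ass:bound} via the product rule for $\phi_\theta(S')\phi_\theta(S)^\top$; (ii) the bound $\|\omega(\theta)\|\le R_\omega$ from \eqref{eq:omegabound} combined with Lemma~\ref{lemma:omegalip}. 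The standard product-of-Lipschitz estimate then yields Lipschitz constant $\gamma(C_\phi^2 L_\omega + 2 C_\phi D_v R_\omega)$, which after substituting the definition of $R_\omega$ matches the second parenthesis of $L_J$.

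The main obstacle is $T_2$, since it is a product of \emph{three} $\theta$-dependent scalar/vector factors: $\rho\delta_\theta-\phi_\theta^\top\omega(\theta)$, $\nabla^2 V_\theta(S)$, and $\omega(\theta)$. My plan is to record boundedness and Lipschitz data for each factor and apply the ``triple product'' rule
\begin{equation*}
L_{fgh} \le L_f B_g B_h + B_f L_g B_h + B_f B_g L_h.
\end{equation*}
For the scalar factor $u(\theta):=\rho\delta_\theta - \phi_\theta^\top\omega(\theta)$, the bound $|\rho\delta_\theta|\le r_{\max}+(1+\gamma)C_v$ and $|\phi_\theta^\top\omega(\theta)|\le C_\phi R_\omega$ give $|u|\le (r_{\max}+(1+\gamma)C_v)+C_\phi R_\omega$; its Lipschitz constant follows from \eqref{eq:deltalip} together with $\|\phi_\theta^\top\omega(\theta)-\phi_{\theta'}^\top\omega(\theta')\|\le D_v R_\omega\|\theta-\theta'\|+C_\phi L_\omega\|\theta-\theta'\|$, giving $L_u=(1+\gamma)C_\phi + D_v R_\omega + C_\phi L_\omega$. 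The matrix factor $\nabla^2 V_\theta(S)$ is bounded by $D_v$ and Lipschitz with constant $L_V$ by Assumption~\ref{ass:bound}. The vector factor $\omega(\theta)$ is bounded by $R_\omega$ and Lipschitz with constant $L_\omega$ by Lemma~\ref{lemma:omegalip}.

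Finally, I would push the expectation through: since each factor's bound and Lipschitz constant is uniform in $S,A,S'$, the same constants apply to the expected product $T_2(\theta)$. Assembling,
\begin{equation*}
L_{T_2} \le L_u D_v R_\omega + \big((r_{\max}+(1+\gamma)C_v)+C_\phi R_\omega\big)\big(R_\omega L_V + D_v L_\omega\big),
\end{equation*}
which matches the third grouping in the claimed $L_J$. Summing the contributions from $T_1$, $T_2$, and $T_3$ and multiplying by $2$ (to account for the factor $1/2$ in $\nabla J$) yields the stated constant. The only real subtlety is keeping track of which boundedness estimate is used where in the triple-product expansion for $T_2$; the rest is bookkeeping with the tools already established in Lemma~\ref{lemma:omegalip} and Assumption~\ref{ass:bound}.
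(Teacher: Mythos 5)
Your proposal follows essentially the same route as the paper's proof: decompose $-\nabla J(\theta)/2$ into the same three terms, reuse \eqref{eq:rhodeltaphilip}, \eqref{eq:omegabound} and Lemma~\ref{lemma:omegalip}, and apply product-rule Lipschitz estimates (your triple-product expansion for $T_2$ is just the paper's two-stage grouping in \eqref{eq:h1lip}--\eqref{eq:hlip} carried out in one step, and it yields the identical constant). The one point to tighten is that $|\rho\delta_\theta|\leq r_{\max}+(1+\gamma)C_v$ is not a pointwise bound when $\rho_{\max}>1$: as in the paper, keep $\rho(s,a)$ explicit in the pointwise estimates (it enters linearly) and invoke $\mE_{\mu^{\pi_b}}[\rho(S,A)]=1$ only when taking the expectation, which recovers the stated $L_J$ without extra $\rho_{\max}$ factors.
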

\begin{proof}
Before we prove the main statement, we first drive some boundedness and Lipschitz properties. 
Recall that 
\begin{align}
    -\frac{\nabla J(\theta)}{2}&=\mathbb{E}_{\mu^{\pi_b}}\bigg[\big(\rho(S,A)\delta_{S,A,S'}(\theta)\phi_{\theta}(S)-\gamma\rho(S,A) \phi_{\theta}(S') \phi_{\theta}(S)^\top \om(\theta)\nn\\
    &\quad-h_{S,A,S'}(\theta,\om(\theta))\big)\bigg],\\
    \omega(\theta)&=\mathbb{E}_{\mu^{\pi_b}}[\phi_{\theta}(S)\phi_{\theta}(S)^\top]^{-1}\mathbb{E}_{\mu^{\pi_b}}[\rho(S,A)\delta_{S,A,S'}(\theta)\phi_{\theta}(S)],\\
    h_{s,a,s'}(\theta,\omega(\theta))&=(\rho(s,a)\delta_{s,a,s'}(\theta)- \phi_{\theta}(s)^\top\omega(\theta))\nabla^2V_{\theta}(s)\omega(\theta).
\end{align}
We have shown in Lemma \ref{lemma:omegalip} that for any $\theta\in \mathbb{R}^N$ and any  $(s,a,s')\in \mathcal{S}\times\mathcal{A}\times\mathcal{S}$, 
\begin{align}\label{eq:deltabounds}
    |\delta_{s,a,s'}(\theta)|=|r(s, a, s')+\gamma V_{\theta}(s')-V_{\theta}(s)|&\leq r_{\max}+(1+\gamma) C_v;
\end{align}
and that 
\begin{align}\label{eq:deltaphilip}
    &\|\mathbb{E}_{\mu^{\pi_b}}[\rho(S,A)\delta_{S,A,S'}(\theta)\phi_{\theta}(S)]-\mathbb{E}_{\mu^{\pi_b}}[\rho(S,A)\delta_{S,A,S'}(\theta')\phi_{\theta'}(S)]\|\nn\\
    &\leq\left((1+\gamma)C_{\phi}^2+(r_{\max}+(1+\gamma)C_v)D_v\right)\|\theta-\theta'\|.
\end{align}
Also it is easy to see from the definition that   
\begin{align}\label{eq:omegabound}
     \|\omega(\theta)\|&\leq \frac{C_{\phi}}{\lambda_v}(r_{\max}+(1+\gamma) C_v)\triangleq R_{\om}.         
\end{align}
Hence  the Lipschitz continuity of $\mathbb{E}_{\mu^{\pi_b}}[\rho(S,A)\phi_{\theta}(S')\phi_{\theta}(S)^\top]\omega(\theta)$ can be shown as follows
\begin{align}\label{eq:phiphiomegalip}
    &\left\|\mathbb{E}_{\mu^{\pi_b}}[\rho(S,A)\phi_{\theta}(S')\phi_{\theta}(S)^\top]\omega(\theta)-\mathbb{E}_{\mu^{\pi_b}}[\rho(S,A)\phi_{\theta'}(S')\phi_{\theta'}(S)^\top]\omega(\theta')\right\|\nn\\
    &\leq \left\|\mathbb{E}_{\mu^{\pi_b}}[\rho(S,A)\phi_{\theta}(S')\phi_{\theta}(S)^\top]\omega(\theta)-\mathbb{E}_{\mu^{\pi_b}}[\rho(S,A)\phi_{\theta}(S')\phi_{\theta}(S)^\top]\omega(\theta')\right\|\nn\\
    &\quad+\left\|\mathbb{E}_{\mu^{\pi_b}}[\rho(S,A)\phi_{\theta}(S')\phi_{\theta}(S)^\top]\omega(\theta')-\mathbb{E}_{\mu^{\pi_b}}[\rho(S,A)\phi_{\theta'}(S')\phi_{\theta'}(S)^\top]\omega(\theta')\right\|\nn\\
    &\overset{(a)}{\leq} C_{\phi}^2L_{\omega}\|\theta-\theta'\|+2C_{\phi}D_vR_{\omega}\|\theta-\theta'\|\nn\\
    &=\left(C_{\phi}^2L_{\omega} +2D_v\frac{C^2_{\phi}}{\lambda_v}(r_{\max}+(1+\gamma) C_v)\right)\|\theta-\theta'\|,
\end{align}
where $(a)$ is due to the fact that $\omega(\theta)$ is Lipschitz in \eqref{eq:omegalip} and the fact that  
\begin{align}
    \left\|\mathbb{E}_{\mu^{\pi_b}}[\rho(S,A)\phi_{\theta}(S')\phi_{\theta}(S)^\top]-\mathbb{E}_{\mu^{\pi_b}}[\rho(S,A)\phi_{\theta'}(S')\phi_{\theta'}(S)^\top]\right\|\leq 2C_{\phi}D_v\|\theta-\theta'\|.
\end{align}

We then show that the function $h_{s,a,s'}(\theta,\omega(\theta))$ is Lipschitz in $\theta$ as follows. We first note that for any $s\in \mathcal{S}$ and $\theta,\theta'\in \mathbb{R}^N$, 
\begin{align}
    &\|\phi_{\theta}(s)^\top\omega(\theta)-\phi_{\theta'}(s)^\top\omega(\theta')\|\nn\\
    &\leq \|\phi_{\theta}(s)^\top\omega(\theta)-\phi_{\theta'}(s)^\top\omega(\theta)\|+\|\phi_{\theta'}(s)^\top\omega(\theta)-\phi_{\theta'}(s)^\top\omega(\theta')\|\nn\\
    &\leq \left(D_vR_{\omega}+C_{\phi}L_{\omega}\right)\|\theta-\theta'\|.
\end{align}
This implies that for any  $(s,a,s')\in \mathcal{S}\times\mathcal{A}\times\mathcal{S}$ and $\theta,\theta'\in \mathbb{R}^N$,
\begin{align}\label{eq:h1lip}
    &\|\rho(s,a)\delta_{s,a,s'}(\theta)- \phi_{\theta}(s)^\top\omega(\theta)-\rho(s,a)\delta_{s,a,s'}(\theta')+\phi_{\theta'}(s)^\top\omega(\theta') \|\nn\\
    &\leq \left( D_vR_{\omega}+C_{\phi}L_{\omega}+(1+\gamma)C_{\phi}\rho(s,a) \right)\|\theta-\theta'\|.
\end{align}
We also show the following function is Lipschitz:
\begin{align}\label{eq:h2lip}
    &\|\nabla^2 V_{\theta}(s)\omega(\theta)-\nabla^2 V_{\theta'}(s)\omega(\theta') \|\nn\\
    &\leq \|\nabla^2 V_{\theta}(s)\omega(\theta)-\nabla^2 V_{\theta'}(s)\omega(\theta)\|+\|\nabla^2 V_{\theta'}(s)\omega(\theta)-\nabla^2 V_{\theta'}(s)\omega(\theta')\|\nn\\
    &\leq R_{\omega}L_V\|\theta-\theta'\|+D_vL_{\omega}\|\theta-\theta'\|\nn\\
    &=\left( R_{\omega}L_V+D_vL_{\omega}\right)\|\theta-\theta'\|.
\end{align}
Combining \eqref{eq:h1lip} and \eqref{eq:h2lip}, it can be shown that $h_{s,a,s'}(\theta,\omega(\theta))$ is Lipschitz in $\theta$ as follows
\begin{align}\label{eq:hlip}
    &\|h_{s,a,s'}(\theta,\omega(\theta))-h_{s,a,s'}(\theta',\omega(\theta'))\|\nn\\
    &=\| \left(\rho(s,a)\delta_{s,a,s'}(\theta)-\phi_{\theta}(s)^\top\omega(\theta)\right)\nabla^2 V_{\theta}(s)\omega(\theta) \nn\\
    &\quad- \left(\rho(s,a)\delta_{s,a,s'}(\theta')-\phi_{\theta'}(s)^\top\omega(\theta')\right)\nabla^2 V_{\theta'}(s)\omega(\theta')\|\nn\\
    &\leq   \left(\left( D_vR_{\omega}+C_{\phi}L_{\omega}+(1+\gamma)C_{\phi}\rho(s,a) \right)D_vR_{\omega}\right) \|\theta-\theta'\|\nn\\
    &\quad+\left( R_{\omega}L_V+D_vL_{\omega}\right)(\rho(s,a)(r_{\max}+(1+\gamma)C_v)+C_{\phi}R_{\om})\|\theta-\theta'\|.
\end{align}
From the results in \eqref{eq:deltaphilip}, \eqref{eq:phiphiomegalip} and \eqref{eq:hlip}, it follows that  
\begin{align}
    &\|\nabla J(\theta)-\nabla J(\theta')\|\nn\\
    &\leq 2\left\|\mathbb{E}_{\mu^{\pi_b}}\left[\rho(S,A)\delta_{S,A,S'}(\theta)\phi_{\theta}(S)-\rho(S,A)\delta_{S,A,S'}(\theta')\phi_{\theta'}(S)\right]\right\|\nn\\
    &\quad+2\gamma\left\|\mathbb{E}_{\mu^{\pi_b}}\left[\rho(S,A) \phi_{\theta}(S') \phi_{\theta}(S)^\top \om(\theta)-\rho(S,A) \phi_{\theta'}(S') \phi_{\theta'}(S)^\top \om(\theta')\right] \right\|\nn\\
    &\quad+2\left\| \mathbb{E}_{\mu^{\pi_b}}\left[h_{S,A,S'}(\theta,\om(\theta))-h_{S,A,S'}(\theta',\om(\theta'))\right] \right\|\nn\\
    &{\leq} 2 \left((1+\gamma)C_{\phi}^2+(r_{\max}+(1+\gamma)C_v)D_v\right)\|\theta-\theta'\|\nn\\
    &\quad+2\gamma\left(C_{\phi}^2L_{\omega} +2D_v\frac{C^2_{\phi}}{\lambda_v}(r_{\max}+(1+\gamma) C_v)\right)\|\theta-\theta'\|\nn\\
    &\quad+2\mathbb{E}_{\mu^{\pi_b}}[\left(\left( D_vR_{\omega}+C_{\phi}L_{\omega}+(1+\gamma)C_{\phi}\rho(S,A) \right)D_vR_{\omega}\right)] \|\theta-\theta'\|\nn\\
    &\quad+2\mathbb{E}_{\mu^{\pi_b}}[\left( R_{\omega}L_V+D_vL_{\omega}\right)(\rho(S,A)(r_{\max}+(1+\gamma)C_v)+C_{\phi}R_{\om})]\|\theta-\theta'\|\nn\\
    &\overset{(a)}{\leq}2 \left((1+\gamma)C_{\phi}^2+(r_{\max}+(1+\gamma)C_v)D_v\right)\|\theta-\theta'\|\nn\\
    &\quad+2\gamma\left(C_{\phi}^2L_{\omega} +2D_v\frac{C^2_{\phi}}{\lambda_v}(r_{\max}+(1+\gamma) C_v)\right)\|\theta-\theta'\|\nn\\
    &\quad+2\big(\left( D_vR_{\omega}+C_{\phi}L_{\omega}+(1+\gamma)C_{\phi}\right)D_vR_{\omega}\nn\\
    &\quad+\left( R_{\omega}L_V+D_vL_{\omega}\right)((r_{\max}+(1+\gamma)C_v)+C_{\phi}R_{\om})\big) \|\theta-\theta'\|\nn\\
    &\triangleq L_J\|\theta-\theta'\|,
\end{align}
where $(a)$ is due to the fact that $\mathbb{E}_{\mu^{\pi_b}}[\rho(S,A)]=1$, and 
\begin{align}
    L_J=&2 \left((1+\gamma)C_{\phi}^2+(r_{\max}+(1+\gamma)C_v)D_v\right) +2\gamma\left(C_{\phi}^2L_{\omega} +2D_v\frac{C^2_{\phi}}{\lambda_v}(r_{\max}+(1+\gamma) C_v)\right)\nn\\
    &\quad+2\big(\left( D_vR_{\omega}+C_{\phi}L_{\omega}+(1+\gamma)C_{\phi}\right)D_vR_{\omega}\nn\\
    &\quad+\left( R_{\omega}L_V+D_vL_{\omega}\right)((r_{\max}+(1+\gamma)C_v)+C_{\phi}R_{\om})\big).
\end{align}
This completes the proof. 
\end{proof}

\section{Non-asymptotic Analysis under the i.i.d.\ Setting}\label{section:iid}
First we introduce the off-policy TDC learning with non-linear function approximation algorithm under the i.i.d.\ setting in Algorithm \ref{alg:iid}. We then bound the tracking error in \Cref{section:iidtracking}, and prove the Theorem \ref{thm:main} under the i.i.d.\ setting in \Cref{section:iidmain}. 
\begin{algorithm}
\caption{Non-Linear Off-Policy TDC under the i.i.d.\ Setting}
\label{alg:iid}
\textbf{Input}: $T$, $\alpha$, $\beta$, $\pi$, $\pi_b$, $\left\{V_{\theta}|\theta\in\mathbb{R}^N\right\}$\\
\textbf{Initialization}: $\theta_0$,$\omega_0$ 
\begin{algorithmic}[1] 
\STATE {Choose $W\sim \text{Uniform}(0,1,...,T-1)$}
\FOR {$t=0,1,...,W-1$}
\STATE {Sample $O_t=(s_t,a_t,r_t,s'_t)$ according to $\mu^{\pi_b}$}
\STATE {$\rho_t=\frac{\pi(a_t|s_t)}{\pi_b(a_t|s_t)}$}
\STATE {$\delta_t(\theta_t)=r(s_t,a_t,s'_t)+\gamma V_{\theta_t}(s_t')-V_{\theta_t}(s_t)$}
\STATE {$h_t(\theta_t,\omega_t) =\left(\rho_t\delta_t(\theta_t)-\phi_{\theta_t}(s_t)^\top\omega_t\right)\nabla^2 V_{\theta_t}(s_t)\omega_t$}
\STATE {$\omega_{t+1}=\mathbf \Pi_{R_{\om}}\left(\omega_t +\beta\left(-\phi_{\theta_t}(s_t)\phi_{\theta_t}(s_t)^\top\omega_t+\rho_t\delta_t(\theta_t)\phi_{\theta_t}(s_t)\right)\right)$}
\STATE {$\theta_{t+1}=\theta_t +\alpha\big(\rho_t\delta_t(\theta_t)\phi_{\theta_t}(s_t)-\gamma\rho_t\phi_{\theta_t}(s'_t)\phi_{\theta_t}(s_t)^\top\omega_t -h_t(\theta_t,\omega_t) \big)$}
\ENDFOR
\end{algorithmic}
\textbf{Output}: $\theta_W$
\end{algorithm}

We note that under the i.i.d.\ setting, it is assumed that at each time step $t$, a sample $O_t=(s_t,a_t,r_t,s'_t)$ is available, where $s_t\sim \mu^{\pi_b}(\cdot)$, $a_t\sim \pi_b(\cdot|s_t)$ and $s'_t\sim \mathsf P(\cdot|s_t,a_t)$.

\subsection{Tracking Error Analysis under the i.i.d.\ Setting}\label{section:iidtracking}
Denote the tracking error by $z_t=\omega_t-\omega(\theta_t)$. Then by the update of $\omega_t$, the update of $z_t$ can be written as
\begin{align}\label{eq:zupdate}
    z_{t+1}
    &=\omega_{t+1}-\omega(\theta_{t+1})\nn\\
    &=\omega_t+\beta\left(-\phi_{\theta_t}(s_t)\phi_{\theta_t}(s_t)^\top\omega_t+\rho_t\delta_t(\theta_t)\phi_{\theta_t}(s_t)\right)-\omega(\theta_{t+1})\nn\\
    &=z_t+\omega(\theta_t)-\omega(\theta_{t+1})+\beta\left(-\phi_{\theta_t}(s_t)\phi_{\theta_t}(s_t)^\top(z_t+\omega(\theta_t))+\rho_t\delta_t(\theta_t)\phi_{\theta_t}(s_t)\right)\nn\\
    &= z_t+\omega(\theta_t)-\omega(\theta_{t+1})+\beta \left(-A_{\theta_t}(s_t)z_t-A_{\theta_t}(s_t)\omega(\theta_t)+\rho_t\delta_t(\theta_t)\phi_{\theta_t}(s_t)\right),
\end{align}
where $A_{\theta_t}(s_t)=\phi_{\theta_t}(s_t)\phi_{\theta_t}(s_t)^\top$. It then follows that 
\begin{align}\label{eq:znormupdate}
    &\|z_{t+1}\|^2\nn\\
    &=\left\| z_t+\omega(\theta_t)-\omega(\theta_{t+1})+\beta \left(-A_{\theta_t}(s_t)z_t-A_{\theta_t}(s_t)\omega(\theta_t)+\rho_t\delta_t(\theta_t)\phi_{\theta_t}(s_t)\right)\right\|^2\nn\\
    &= \|z_t\|^2+\|\omega(\theta_t)-\omega(\theta_{t+1})+\beta \left(-A_{\theta_t}(s_t)z_t-A_{\theta_t}(s_t)\omega(\theta_t)+\rho_t\delta_t(\theta_t)\phi_{\theta_t}(s_t)\right) \|^2\nn\\
    &\quad+2\langle z_t, \omega(\theta_t)-\omega(\theta_{t+1})\rangle -2\beta\langle z_t,A_{\theta_t}(s_t)z_t \rangle +2\beta \langle z_t, -A_{\theta_t}(s_t)\omega(\theta_t)+\rho_t\delta_t(\theta_t)\phi_{\theta_t}(s_t)\rangle\nn\\
    &\leq \|z_t\|^2+\underbrace{2\beta^2\|\left(-A_{\theta_t}(s_t)z_t-A_{\theta_t}(s_t)\omega(\theta_t)+\rho_t\delta_t(\theta_t)\phi_{\theta_t}(s_t)\right) \|^2}_{(a)}\nn\\
    &\quad+\underbrace{2\|\omega(\theta_t)-\omega(\theta_{t+1})\|^2}_{(b)}+\underbrace{2\langle z_t, \omega(\theta_t)-\omega(\theta_{t+1})\rangle}_{(c)} \underbrace{-2\beta\langle z_t,A_{\theta_t}(s_t)z_t \rangle}_{(d)}\nn\\
    &\quad+{2\beta \langle z_t, -A_{\theta_t}(s_t)\omega(\theta_t)+\rho_t\delta_t(\theta_t)\phi_{\theta_t}(s_t)\rangle}.
\end{align}
We then provide the bounds of the terms in \eqref{eq:znormupdate} one by one. Their proofs can be found in \Cref{section:a,section:b,section:c,section:d}.

\textbf{Term $(a)$ can be bounded as follows:}
\begin{align}\label{eq:bound4}
    2\beta^2\|\left(-A_{\theta_t}(s_t)z_t-A_{\theta_t}(s_t)\omega(\theta_t)+\rho_t\delta_t(\theta_t)\phi_{\theta_t}(s_t)\right) \|^2\leq 4\beta^2C_{\phi}^2\|z_t\|^2+4\beta^2C_{g1},
\end{align}
where $C_{g1}=\left( \frac{C_{\phi}^3}{\lambda_v}(r_{\max}+(1+\gamma) C_v)+\rho_{\max}C_{\phi}(r_{\max}+(1+\gamma)C_v)\right)^2$.

\textbf{Term $(b)$ can be bounded as follows:}
\begin{align}\label{eq:bound3}
    2\|\omega(\theta_t)-\omega(\theta_{t+1}) \|^2\leq 4\alpha^2 L_{\omega}^2 L_g^2\|z_t \|^2+4\alpha^2 C^2_gL^2_{\omega},
\end{align}
where $C_g=\rho_{\max}C_{\phi}(r_{\max}+(1+\gamma)C_v)+\gamma  \rho_{\max}R_{\om}C_{\phi}^2+D_vR_{\om}(R_{\om}C_{\phi}+\rho_{\max}(r_{\max}+C_v+\gamma C_v))$.

\textbf{Term $(c)$ can be bounded as follows:}
\begin{align}\label{eq:bound2}
    &2\langle z_t,  \omega(\theta_t)-\omega(\theta_{t+1})\rangle\nn\\
    &\leq  2(\alpha L_{\omega}L_g+\frac{1}{2}\alpha L_{\omega}+4\alpha^2C_gL_gD_{\omega})\|z_t\|^2+\frac{\alpha L_{\omega}}{4}\| \nabla J(\theta_t)\|^2+\frac{\alpha^2C_g^3D_{\omega}}{L_g}+2\alpha \eta_G(\theta_t,z_t,O_t),
\end{align}
where $\eta_G(\theta_t,z_t,O_t)=-\left\langle z_t,\nabla \omega({\theta_t}) \left(G_{t+1}(\theta_t,\omega(\theta_t))+\frac{\nabla J(\theta_t)}{2}\right)\right\rangle$.

\textbf{Term $(d)$ can be bounded as follows:}
\begin{align}\label{eq:bound1}
    -2\beta\langle z_t,A_{\theta_t}(s_t)z_t \rangle \leq -2\beta\lambda_v\|z_t\|^2+2\beta\langle z_t,(A_{\theta_t}-A_{\theta_t}(s_t))z_t \rangle,
\end{align}
where $A_{\theta}=\mE_{\mu^{\pi_b}}\left[ \phi_{\theta}(S)\phi_{\theta}(S)^\top\right]$ is the expectation of $A_{\theta}(S)$.

By plugging all the bounds from \eqref{eq:bound4}, \eqref{eq:bound3}, \eqref{eq:bound2} and \eqref{eq:bound1} in \eqref{eq:znormupdate}, it follows that  
\begin{align}\label{eq:tracking1}
    &\|z_{t+1}\|^2\nn\\
    &\leq (1+4\beta^2C_{\phi}^2+4\alpha^2L_{\omega}^2L_g^2+2\alpha L_wL_g+\alpha L_w+8\alpha^2C_gL_gD_{\omega}-2\beta\lambda_v)\|z_t\|^2\nn\\
    &\quad+\frac{1}{4}{\alpha L_{\omega}}\|\nabla J(\theta_t)\|^2+4\beta^2C_{g1}+4\alpha^2C_g^2L_{\omega}^2+\frac{\alpha^2C_g^3D_{\omega}}{L_g}+2\alpha \eta_G(\theta_t,z_t,O_t)\nn\\
    &\quad+2\beta\langle z_t,(A_{\theta_t}-A_{\theta_t}(s_t))z_t \rangle+ 2\beta \langle z_t, -A_{\theta_t}(s_t)\omega(\theta_t)+\rho_t\delta_t(\theta_t)\phi_{\theta_t}(s_t)\rangle\nn\\
    &\triangleq (1-q)\|z_t\|^2 +\frac{\alpha L_{\omega}}{4}\|\nabla J(\theta_t)\|^2+4\beta^2C_{g1}+4\alpha^2C_g^2L_{\omega}^2+\frac{\alpha^2C_g^3D_{\omega}}{L_g}+2\alpha \eta_G(\theta_t,z_t,O_t)\nn\\
    &\quad+2\beta\langle z_t,(A_{\theta_t}-A_{\theta_t}(s_t))z_t \rangle+ 2\beta \langle z_t, -A_{\theta_t}(s_t)\omega(\theta_t)+\rho_t\delta_t(\theta_t)\phi_{\theta_t}(s_t)\rangle,
\end{align}
where $q=2\beta\lambda_v-4\beta^2C_{\phi}^2-4\alpha^2L_{\omega}^2L_g^2-2\alpha L_wL_g-\alpha L_w-8\alpha^2C_gL_gD_{\omega}$. Note that $q=\mathcal{O}(\beta-\beta^2-\alpha-\alpha^2)=\mathcal{O}(\beta)$, hence we can choose $\alpha$ and $\beta$ such that $q>0$.

Note that under the i.i.d.\ setting, 
\begin{align}
    \mathbb{E}\left[\eta_G(\theta_t,z_t,O_t)\right]&=\mathbb{E}\left[\mathbb{E}\left[\eta_G(\theta_t,z_t,O_t)|\mathcal{F}_t\right]\right]\nn\\
    &=\mathbb{E}\left[-\left\langle z_t,\nabla \omega({\theta_t}) \mathbb{E}\left[\left(G_{t+1}(\theta_t,\omega(\theta_t))+\frac{\nabla J(\theta_t)}{2}\right)\Bigg|\mathcal{F}_t\right]\right\rangle\right]\nn\\
    &=0,
\end{align}
which is due to the fact that $\mathbb{E}_{\mu^{\pi_b}}[G_{t+1}(\theta ,\omega(\theta ))]=-\frac{\nabla J(\theta)}{2}$ when $\theta$ is fixed, and $\mathcal{F}_t$ is the $\sigma$-field generated by the randomness until $\theta_t$ and $\omega_t$. 
Similarly, it can also be shown that 
\begin{align}
    \mathbb{E}[\langle z_t,(A_{\theta_t}-A_{\theta_t}(s_t))z_t \rangle]&=0\\
    \mathbb{E}[\langle z_t, -A_{\theta_t}(s_t)\omega(\theta_t)+\rho_t\delta_t(\theta_t)\phi_{\theta_t}(s_t)\rangle]&=0.
\end{align}
Hence the tracking error in \eqref{eq:tracking1} can be further bounded as 
\begin{align}\label{eq:iidrecur}
    &\mathbb{E}[\|z_{t+1}\|^2]\leq (1-q)\mathbb{E}\left[\|z_t\|^2\right] +\frac{\alpha L_{\omega}}{4}\mathbb{E}\left[\|\nabla J(\theta_t)\|^2\right]+4\beta^2C_{g1}+4\alpha^2C_g^2L_{\omega}^2+\frac{\alpha^2C_g^3D_{\omega}}{L_g}.
\end{align}
Recursively applying the inequality in \eqref{eq:iidrecur}, it follows that 
\begin{align}
   \mathbb{E}\left[\|z_t\|^2 \right]&\leq (1-q)^t\|z_0\|^2+\frac{\alpha L_{\omega}}{4}\sum^t_{i=0}(1-q)^{t-i}\mathbb{E}\left[\|\nabla J(\theta_i)\|^2\right]\nn\\
    &\quad+\frac{1}{q}\left( 4\beta^2C_{g1}+4\alpha^2C_g^2L_{\omega}^2+\frac{\alpha^2C_g^3D_{\omega}}{L_g}\right),
\end{align}
and summing up w.r.t. $t$ from $0$ to $T-1$, it follows that 
\begin{align}\label{eq:trackingerror}
    \frac{\sum^{T-1}_{t=0}\mathbb{E}\left[\|z_t\|^2 \right]}{T}
    &\leq \frac{\sum^{T-1}_{t=0}(1-q)^t}{T}\|z_0\|^2+\frac{\alpha L_{\omega}}{4T}\sum^{T-1}_{t=0}\sum^t_{i=0}(1-q)^{t-i}\mathbb{E}\left[\|\nabla J(\theta_i)\|^2\right]\nn\\
    &\quad+\frac{1}{q}\left( 4\beta^2C_{g1}+4\alpha^2C_g^2L_{\omega}^2+\frac{\alpha^2C_g^3D_{\omega}}{L_g}\right)\nn\\
    &\overset{(a)}{\leq}\frac{\|z_0\|^2}{Tq}+ \frac{\alpha L_{\omega}}{4q}\frac{\sum^{T-1}_{t=0}\mathbb{E}\left[\|\nabla J(\theta_t)\|^2 \right]}{T}\nn\\
    &\quad+\frac{1}{q}\left( 4\beta^2C_{g1}+4\alpha^2C_g^2L_{\omega}^2+\frac{\alpha^2C_g^3D_{\omega}}{L_g}\right)\nn\\
    &=\mathcal{O}\left( \frac{1}{T\beta}+\frac{\alpha}{\beta}\frac{\sum^{T-1}_{t=0}\mathbb{E}\left[\|\nabla J(\theta_t)\|^2 \right]}{T}+\beta\right),
\end{align}
where $(a)$ is due to the double-sum trick, i.e., for any $x_i\geq 0$, $\sum^{T-1}_{t=0}\sum^t_{i=0}(1-q)^{t-i}x_i\leq \sum^{T-1}_{t=0}(1-q)^t\sum^{T-1}_{t=0}x_t\leq \frac{1}{q}\sum^{T-1}_{t=0}x_t$, and the last step is because $q=\mathcal{O}(\beta)$. 

\subsubsection{Bound on Term $(a)$}\label{section:a}
In this section we provide the detailed proof of the bound on term $(a)$ in \eqref{eq:bound4}.

It can be shown that 
\begin{align}
    &\|\left(-A_{\theta_t}(s_t)z_t-A_{\theta_t}(s_t)\omega(\theta_t)+\rho_t\delta_t(\theta_t)\phi_{\theta_t}(s_t)\right) \|^2\nn\\
    &\leq 2 \|-A_{\theta_t}(s_t)z_t \|^2+ 2\| -A_{\theta_t}(s_t)\omega(\theta_t)+\rho_t\delta_t(\theta_t)\phi_{\theta_t}(s_t)\|^2\nn\\
    &\overset{(a)}{\leq}2C_{\phi}^2\|z_t\|^2+2\left( \frac{C_{\phi}^3}{\lambda_v}(r_{\max}+(1+\gamma) C_v)+\rho_{\max}C_{\phi}(r_{\max}+(1+\gamma)C_v)\right)^2,
\end{align}
where $(a)$ is from the fact that $\| A_{\theta}(s)\|=\|\phi_{\theta}(s)\phi_{\theta}(s)^\top \|\leq C_{\phi}^2$ and the bounds in \eqref{eq:deltabounds} and \eqref{eq:omegabound}. 

\subsubsection{Bound on Term $(b)$}\label{section:b}
In this section we provide the detailed proof of the bound on term $(b)$ in \eqref{eq:bound3}.

We first show that $G_{t+1}(\theta,\omega)$ is Lipschitz in $\om$ for any fixed $\theta$. Specifically, for any $\theta,\om_1,\om_2\in\mathbb{R}^N$, it follows that 
\begin{align}\label{eq:Glip}
    &\|G_{t+1}(\theta,\omega_1)-G_{t+1}(\theta,\omega_2)\|\nn\\
    &=\|\rho_t\delta_t(\theta)\phi_{\theta}(s_t)-\gamma\rho_t\phi_{\theta}(s'_t)\phi_{\theta}(s_t)^\top\omega_1 -h_t(\theta,\omega_1)-\rho_t\delta_t(\theta)\phi_{\theta}(s_t)+\gamma\rho_t\phi_{\theta}(s'_t)\phi_{\theta}(s_t)^\top\omega_2\nn\\
    &\quad+h_t(\theta,\omega_2)\|\nn\\
    &\leq \|h_t(\theta,\omega_1)-h_t(\theta,\omega_2) \|+ \|\gamma\rho_t\phi_{\theta}(s'_t)\phi_{\theta}(s_t)^\top\omega_1-\gamma\rho_t\phi_{\theta}(s'_t)\phi_{\theta}(s_t)^\top\omega_2 \|\nn\\
    &\overset{(a)}{\leq} \left(C_{\phi}D_vR_{\om}+D_v(C_{\phi}R_{\om}+\rho_{\max}(r_{\max}+C_v+\gamma C_v))+\gamma\rho_{\max}C_{\phi}^2\right)\|\om_1-\om_2\|\nn\\
    &\triangleq L_g\|\om_1-\om_2\|,
\end{align}
where $L_g= D_v(2C_{\phi}R_{\om}+\rho_{\max}(r_{\max}+C_v+\gamma C_v))+\gamma\rho_{\max}C_{\phi}^2$, and $(a)$ is from the Lipschitz continuous of $h_t(\theta,\cdot)$, i.e., 
\begin{align}
    \|h_t(\theta,\om_1)-h_t(\theta,\om_2) \|\leq \rho_{\max}(r_{\max}+(1+\gamma)C_v)D_v\|\om_1-\om_2\|+2C_{\phi}D_vR_{\om}\|\om_1-\om_2\|.
\end{align}
We note that to show \eqref{eq:Glip}, we use the bound on $\om_t$, which is guaranteed by the projection step. And this is the only step in our proof where the projection is used.

Then it follows that 
\begin{align}\label{eq:thetaupdatebound}
    \|\theta_{t+1}-\theta_t\|&=\alpha\| G_{t+1}(\theta_t,\omega_t)\|\nn\\
    &\leq\alpha \|G_{t+1}(\theta_t,\omega_t)-G_{t+1}(\theta_t,\omega(\theta_t))+G_{t+1}(\theta_t,\omega(\theta_t))\|\nn\\
    &\leq \alpha  L_g\|z_t \|+\alpha \|G_{t+1}(\theta_t,\omega(\theta_t))\|\nn\\
    &\leq \alpha L_g\|z_t \|+\alpha  C_g,
\end{align}
where $C_g= \rho_{\max}C_{\phi}(r_{\max}+(1+\gamma)C_v)+\gamma  \rho_{\max}R_{\om}C_{\phi}^2+D_vR_{\om}(R_{\om}C_{\phi}+\rho_{\max}(r_{\max}+C_v+\gamma C_v))$, and the last step in \eqref{eq:thetaupdatebound} can be shown as follows
\begin{align}
    &\| G_{t+1}(\theta_t,\omega(\theta_t))\|\nn\\
    &=\|\rho_t\delta_t(\theta)\phi_{\theta}(s_t)-\gamma\rho_t\phi_{\theta}(s'_t)\phi_{\theta}(s_t)^\top\omega(\theta)-h_t(\theta,\omega(\theta))\|\nn\\
    &\leq \rho_{\max}C_{\phi}(r_{\max}+(1+\gamma)C_v)+\gamma  \rho_{\max}R_{\om}C_{\phi}^2+D_vR_{\om}(R_{\om}C_{\phi}+\rho_{\max}(r_{\max}+C_v+\gamma C_v)).
\end{align}

Using \eqref{eq:omegalip} and \eqref{eq:thetaupdatebound}, it follows that 
\begin{align}
     \|\omega(\theta_t)-\omega(\theta_{t+1}) \|\leq L_{\omega} \|\theta_{t+1}-\theta_t \|\leq \alpha L_{\omega} L_g\|z_t \|+\alpha C_gL_{\omega},
\end{align}
and 
\begin{align}
    \|\omega(\theta_t)-\omega(\theta_{t+1}) \|^2\leq 2\alpha^2 L_{\omega}^2 L_g^2\|z_t \|^2+2\alpha^2 C^2_gL^2_{\omega}.
\end{align}
This completes the proof for term $(b)$. 

\subsubsection{Bound on Term $(c)$}\label{section:c}
In this section we provide the detailed proof of the bound on term $(c)$ in \eqref{eq:bound2}.

Consider the inner product $\langle z_t, \omega(\theta_t)-\omega(\theta_{t+1})\rangle$. By the Mean-Value Theorem, it follows that 
\begin{align}
    &\langle z_t, \omega(\theta_t)\rangle-\langle z_t, \omega(\theta_{t+1})\rangle =\langle z_t,  \omega(\theta_t)-\omega(\theta_{t+1})\rangle =\langle z_t, \nabla \omega(\hat{\theta}_t)(\theta_t-\theta_{t+1})\rangle,
\end{align}
where $\hat{\theta}_t=c\theta_t+(1-c)\theta_{t+1}$ for some $c\in[0,1]$. 
Thus, it follows that
\begin{align}
    &\langle z_t,  \omega(\theta_t)-\omega(\theta_{t+1})\rangle\nn\\
    &=\langle z_t, \nabla \omega(\hat{\theta}_t)(\theta_t-\theta_{t+1})\rangle\nn\\
    &=-\alpha\langle z_t, \nabla \omega(\hat{\theta}_t)G_{t+1}(\theta_t,\omega_t)\rangle\nn\\
    &=-\alpha\left\langle z_t, \nabla \omega(\hat{\theta}_t)\left(G_{t+1}(\theta_t,\omega_t)-G_{t+1}(\theta_t,\omega(\theta_t))+G_{t+1}(\theta_t,\omega(\theta_t))+\frac{\nabla J(\theta_t)}{2}\right)\right\rangle\nn\\
    &\quad+\alpha\left\langle z_t,\nabla \omega(\hat{\theta}_t)\frac{\nabla J(\theta_t)}{2} \right\rangle \nn\\
    &=-\alpha\left\langle z_t, \nabla \omega(\hat{\theta}_t)\left(G_{t+1}(\theta_t,\omega_t)-G_{t+1}(\theta_t,\omega(\theta_t))\right)\right\rangle+\alpha\left\langle z_t,\nabla \omega(\hat{\theta}_t)\frac{\nabla J(\theta_t)}{2} \right\rangle\nn\\
    &\quad-\alpha \left\langle z_t,\nabla \omega(\hat{\theta}_t) \left(G_{t+1}(\theta_t,\omega(\theta_t))+\frac{\nabla J(\theta_t)}{2}\right)\right\rangle \nn\\
    &\overset{(a)}{\leq}\alpha L_{\omega}L_g \|z_t\|^2+\alpha L_{\omega}\|z_t\|\left\| \frac{\nabla J(\theta_t)}{2}\right\|-\alpha \left\langle z_t,\nabla \omega({\theta_t}) \left(G_{t+1}(\theta_t,\omega(\theta_t))+\frac{\nabla J(\theta_t)}{2}\right)\right\rangle\nn\\
    &\quad+\alpha \left\langle z_t,(\nabla \omega(\theta_t)-\nabla \omega(\hat{\theta}_t)) \left(G_{t+1}(\theta_t,\omega(\theta_t))+\frac{\nabla J(\theta_t)}{2}\right)\right\rangle\nn\\
    &\leq \alpha L_{\omega}L_g \|z_t\|^2+\frac{1}{2}\alpha L_{\omega}\|z_t\|^2+\frac{\alpha L_{\omega}}{8}\| \nabla J(\theta_t)\|^2+\alpha \eta_G(\theta_t,z_t,O_t) \nn\\
    &\quad+\alpha\|z_t\|\|\nabla \omega(\theta_t)-\nabla \omega(\hat{\theta}_t) \| \left\|G_{t+1}(\theta_t,\omega(\theta_t))+\frac{\nabla J(\theta_t)}{2} \right\|\nn\\
    &\overset{(b)}{\leq}\alpha L_{\omega}L_g \|z_t\|^2+\frac{1}{2}\alpha L_{\omega}\|z_t\|^2+\frac{\alpha L_{\omega}}{8}\| \nabla J(\theta_t)\|^2+\alpha \eta_G(\theta_t,z_t,O_t) +2\alpha C_gD_{\omega} \|z_t\| \|\theta_t-\hat{\theta}_t \|\nn\\
    &\overset{(c)}{\leq} \alpha L_{\omega}L_g \|z_t\|^2+\frac{1}{2}\alpha L_{\omega}\|z_t\|^2+\frac{\alpha L_{\omega}}{8}\| \nabla J(\theta_t)\|^2+\alpha \eta_G(\theta_t,z_t,O_t)\nn\\
    &\quad+2\alpha C_gD_{\omega} \|z_t\| \|\theta_t-\theta_{t+1}\|\nn\\
    &\overset{(d)}{\leq}\alpha L_{\omega}L_g \|z_t\|^2+\frac{1}{2}\alpha L_{\omega}\|z_t\|^2+\frac{\alpha L_{\omega}}{8}\| \nabla J(\theta_t)\|^2+\alpha \eta_G(\theta_t,z_t,O_t)\nn\\
    &\quad+2\alpha C_gD_{\omega} \|z_t\|(\alpha L_g \|z_t\|+\alpha C_g)\nn\\
    &\overset{(e)}{\leq}\alpha L_{\omega}L_g \|z_t\|^2+\frac{1}{2}\alpha L_{\omega}\|z_t\|^2+\frac{\alpha L_{\omega}}{8}\| \nabla J(\theta_t)\|^2+\alpha \eta_G(\theta_t,z_t,O_t) \nn\\
    &\quad+2\alpha^2 C_gD_{\omega} \left(2L_g\|z_t\|^2+\frac{C_g^2}{4L_g}\right)\nn\\
    &\leq (\alpha L_{\omega}L_g+\frac{1}{2}\alpha L_{\omega}+4\alpha^2C_gL_gD_{\omega})\|z_t\|^2+\frac{\alpha L_{\omega}}{8}\| \nabla J(\theta_t)\|^2+\frac{\alpha^2C_g^3D_{\omega}}{2L_g}+\alpha \eta_G(\theta_t,z_t,O_t),
\end{align}
where $\eta_G(\theta_t,z_t,O_t)=-\left\langle z_t,\nabla \omega({\theta_t}) \left(G_{t+1}(\theta_t,\omega(\theta_t))+\frac{\nabla J(\theta_t)}{2}\right)\right\rangle$, $(a)$ is from the Lipschitz continuity of $G_{t+1}(\theta,\cdot)$ proved in \eqref{eq:Glip}, $(b)$ is from the Lipschitz continuity of $\nabla \omega(\theta)$, which is shown in \eqref{eq:nablaomegalip}, $(c)$ is from the fact that $\|\theta_t-\hat{\theta}_t\|=(1-c)\|\theta_t-\theta_{t+1} \|\leq \|\theta_t-\theta_{t+1} \|$,  $(d)$ is from the bound of $\|\theta_t-\theta_{t+1} \|$ in \eqref{eq:thetaupdatebound}, and $(e)$ is from the fact that $C_g\|z_t\|\leq L_g\|z_t\|^2+\frac{C_g^2}{4L_g}$.

This completes the proof.

\subsubsection{Bound on Term $(d)$}\label{section:d}
In this section we provide the detailed proof of the bound on term $(d)$ in \eqref{eq:bound1}.

It can be shown that
\begin{align}
    -2\beta\langle z_t,A_{\theta_t}(s_t)z_t \rangle&=-2\beta\langle z_t,A_{\theta_t}z_t \rangle+2\beta\langle z_t,(A_{\theta_t}-A_{\theta_t}(s_t))z_t \rangle\nn\\
    &\leq -2\beta\lambda_v\|z_t\|^2+2\beta\langle z_t,(A_{\theta_t}-A_{\theta_t}(s_t))z_t \rangle,
\end{align}
where the inequality is due to the fact that $\langle z_t,A_{\theta_t}z_t \rangle=z_t^\top A_{\theta_t}z_t\geq \lambda_L(A_{\theta_t})\|z_t\|^2\geq \lambda_v\|z_t\|^2$.

\subsection{Proof under the i.i.d.\ Setting}\label{section:iidmain}
In this section we provide the proof of Theorem \ref{thm:main} under the i.i.d.\ setting.

From Lemma \ref{lemma:Lsmooth}, we know that the objective function $J(\theta)$ is $L_J$-smooth, hence it follows that 
\begin{align}
    J(\theta_{t+1}) &\leq J(\theta_t) +\left\langle \nabla J(\theta_t), \theta_{t+1}-\theta_t\right\rangle  + \frac{L_J}{2} \| \theta_{t+1}-\theta_t\|^2\nn\\
    &=J(\theta_t) +\alpha \left\langle \nabla J(\theta_t),G_{t+1}(\theta_t,\omega_t) \right\rangle  + \frac{L_J}{2} \alpha^2\|G_{t+1}(\theta_t,\omega_t)\|^2\nn\\
    &=J(\theta_t)-\alpha\left\langle \nabla J(\theta_t),-G_{t+1}(\theta_t, \omega_t)-\frac{\nabla J(\theta_t)}{2}+G_{t+1}(\theta_t, \omega(\theta_t))-G_{t+1}(\theta_t, \omega(\theta_t)) \right\rangle \nn\\
    &\quad-\frac{\alpha}{2}\|\nabla J(\theta_t)\|^2+\frac{L_J}{2} \alpha^2\|G_{t+1}(\theta_t,\omega_t)\|^2\nn\\
    &=J(\theta_t)-\alpha\left\langle \nabla J(\theta_t),-G_{t+1}(\theta_t, \omega_t)+G_{t+1}(\theta_t, \omega(\theta_t)) \right\rangle\nn\\
    &\quad+\alpha \left\langle \nabla J(\theta_t), \frac{\nabla J(\theta_t)}{2}+G_{t+1}(\theta_t, \omega(\theta_t)) \right\rangle-\frac{\alpha}{2}\|\nabla J(\theta_t)\|^2+\frac{L_J}{2} \alpha^2\|G_{t+1}(\theta_t,\omega_t)\|^2\nn\\
    &\overset{(a)}{\leq} J(\theta_t) +\alpha L_g\|\nabla J(\theta_t) \|\|\omega(\theta_t)-\omega_t \|+\alpha \left\langle \nabla J(\theta_t), \frac{\nabla J(\theta_t)}{2}+G_{t+1}(\theta_t, \omega(\theta_t)) \right\rangle \nn\\
    &\quad-\frac{\alpha}{2}\|\nabla J(\theta_t)\|^2+\frac{L_J}{2} \alpha^2\|G_{t+1}(\theta_t,\omega_t)\|^2\nn\\
    &\overset{(b)}{\leq}J(\theta_t) +\alpha L_g\|\nabla J(\theta_t) \|\|z_t \|+\alpha \left\langle \nabla J(\theta_t), \frac{\nabla J(\theta_t)}{2}+G_{t+1}(\theta_t, \omega(\theta_t)) \right\rangle \nn\\
    &\quad-\frac{\alpha}{2}\|\nabla J(\theta_t)\|^2+\frac{L_J}{2} \alpha^2\left(2L_g^2\|z_t\|^2+2C_g^2\right),
\end{align}
where $(a)$ is from \eqref{eq:Glip} and $(b)$ is because $\|\theta_{t+1}-\theta_t\|=\alpha\|G_{t+1}(\theta_t,\om_t)\| \leq \alpha L_g\|z_t \|+\alpha  C_g$, whose detailed proof is provided in \eqref{eq:thetaupdatebound}. 
Thus by re-arranging the terms, taking expectation and summing up w.r.t. $t$ from $0$ to $T-1$, it follows that 
\begin{align}
    &\frac{\alpha}{2}\sum^{T-1}_{t=0}\mathbb{E}[\|\nabla J(\theta_t)\|^2] \nn\\
    &\leq -\mathbb{E}[J(\theta_{T})]+J(\theta_0)+\alpha L_g\sqrt{\sum^{T-1}_{t=0}\mathbb{E}[\|\nabla J(\theta_t)\|^2]}\sqrt{\sum^{T-1}_{t=0}\mathbb{E}[\|z_t\|^2]} +\alpha^2L_JL_g^2\sum^{T-1}_{t=0}\mathbb{E}[\|z_t\|^2]\nn\\
    &\quad+\alpha^2C_g^2L_JT,
\end{align}
which is due to the fact that under the i.i.d.\ setting,
\begin{align}
    &\mathbb{E}\left[\left\langle \nabla J(\theta_t), \frac{\nabla J(\theta_t)}{2}+G_{t+1}(\theta_t, \omega(\theta_t)) \right\rangle\right]\nn\\
    &=\mathbb{E}\left[\left\langle \nabla J(\theta_t), \mathbb{E}\left[\frac{\nabla J(\theta_t)}{2}+G_{t+1}(\theta_t, \omega(\theta_t))\Big|\mathcal{F}_t\right] \right\rangle\right]=0,
\end{align}
and the Cauchy's inequality
\begin{align}
    \sum^{T-1}_{t=0}\mathbb{E}[\|\nabla J(\theta_t) \|\|z_t \|]\leq \sqrt{\sum^{T-1}_{t=0}\mathbb{E}[\|\nabla J(\theta_t)\|^2]}\sqrt{\sum^{T-1}_{t=0}\mathbb{E}[\|z_t\|^2]}.
\end{align}
Thus dividing both sides by $\frac{\alpha T}{2}$, it follows that 
\begin{align}
    &\frac{\sum^{T-1}_{t=0}\mathbb{E}[\|\nabla J(\theta_t)\|^2]}{T}\nn\\
    &\leq \frac{2(J(\theta_0)-J^*)}{T\alpha}+2L_g\sqrt{\frac{\sum^{T-1}_{t=0}\mathbb{E}[\|\nabla J(\theta_t)\|^2]}{T}}\sqrt{\frac{\sum^{T-1}_{t=0}\mathbb{E}[\|z_t\|^2]}{T}}\nn\\
    &\quad+2\alpha L_JL_g^2\frac{\sum^{T-1}_{t=0}\mathbb{E}[\|z_t\|^2]}{T} +2\alpha C_g^2L_J,
\end{align}
where $J^*\triangleq\min_{\theta} J(\theta)$.

Recall the tracking error in \eqref{eq:trackingerror}:
    \begin{align}
    &\frac{\sum^{T-1}_{t=0}\mathbb{E}\left[\|z_t\|^2 \right]}{T}\nn\\
    &{\leq}\frac{\|z_0\|^2}{Tq}+ \frac{\alpha L_{\omega}}{4q}\frac{\sum^{T-1}_{t=0}\mathbb{E}\left[\|\nabla J(\theta_t)\|^2 \right]}{T}+\frac{1}{q}\left( 4\beta^2C_{g1}+4\alpha^2C_g^2L_{\omega}^2+\frac{\alpha^2C_g^3D_{\omega}}{L_g}\right).
\end{align}

We then plug in the tracking error and obtain that 
\begin{align}
    &\frac{\sum^{T-1}_{t=0}\mathbb{E}[\|\nabla J(\theta_t)\|^2]}{T}\nn\\
    &\leq \frac{2(J(\theta_0)-J^*)}{T\alpha}+2\alpha C_g^2L_J+2L_g\sqrt{\frac{\sum^{T-1}_{t=0}\mathbb{E}[\|\nabla J(\theta_t)\|^2]}{T}}\nn\\
    &\quad\times\sqrt{\frac{\|z_0\|^2}{Tq}+{\alpha L_{\omega}}\frac{1}{4q}\frac{\sum^{T-1}_{t=0}\mathbb{E}\left[\|\nabla J(\theta_t)\|^2 \right]}{T}+\frac{1}{q}\left( 4\beta^2C_{g1}+4\alpha^2C_g^2L_{\omega}^2+\frac{\alpha^2C_g^3D_{\omega}}{L_g}\right)}\nn\\
    &\quad+2\alpha L_JL_g^2\Bigg(\frac{\|z_0\|^2}{Tq}+{\alpha L_{\omega}}\frac{1}{4q}\frac{\sum^{T-1}_{t=0}\mathbb{E}\left[\|\nabla J(\theta_t)\|^2 \right]}{T}\nn\\
    &\quad+\frac{1}{q}\left( 4\beta^2C_{g1}+4\alpha^2C_g^2L_{\omega}^2+\frac{\alpha^2C_g^3D_{\omega}}{L_g}\right)\Bigg)\nn\\
    &\leq \frac{2(J(\theta_0)-J^*)}{T\alpha}+2\alpha C_g^2L_J+L_g\sqrt{\frac{\alpha L_{\omega}}{q}} \frac{\sum^{T-1}_{t=0}\mathbb{E}\left[\|\nabla J(\theta_t)\|^2 \right]}{T}\nn\\
    &\quad+2L_g\sqrt{\frac{\sum^{T-1}_{t=0}\mathbb{E}[\|\nabla J(\theta_t)\|^2]}{T}} \sqrt{\frac{\|z_0\|^2}{Tq}+\frac{1}{q}\left( 4\beta^2C_{g1}+4\alpha^2C_g^2L_{\omega}^2+\frac{\alpha^2C_g^3D_{\omega}}{L_g}\right)}\nn\\
    &\quad+2\alpha L_JL_g^2\Bigg(\frac{\|z_0\|^2}{Tq}+{\alpha L_{\omega}}\frac{1}{4q}\frac{\sum^{T-1}_{t=0}\mathbb{E}\left[\|\nabla J(\theta_t)\|^2 \right]}{T}\nn\\
    &\quad+\frac{1}{q}\left( 4\beta^2C_{g1}+4\alpha^2C_g^2L_{\omega}^2+\frac{\alpha^2C_g^3D_{\omega}}{L_g}\right)\Bigg),
\end{align}
where the last step is from the fact that $\sqrt{x+y}\leq \sqrt{x}+\sqrt{y}$ for any $x, y\geq 0$. Re-arranging the terms, it follows that 
\begin{align}\label{eq:main1}
    &\left(1-L_g\sqrt{\frac{\alpha L_{\omega}}{q}}-\frac{\alpha^2L_JL_g^2L_{\omega}}{2q}\right)\frac{\sum^{T-1}_{t=0}\mathbb{E}[\|\nabla J(\theta_t)\|^2]}{T}\nn\\
    &\leq \frac{2(J(\theta_0)-J^*)}{T\alpha}+2\alpha C_g^2L_J+2\alpha L_JL_g^2\left(\frac{\|z_0\|^2}{Tq} +\frac{1}{q}\left( 4\beta^2C_{g1}+4\alpha^2C_g^2L_{\omega}^2+\frac{\alpha^2C_g^3D_{\omega}}{L_g}\right)\right) \nn\\
    &\quad+2L_g\sqrt{\frac{\sum^{T-1}_{t=0}\mathbb{E}[\|\nabla J(\theta_t)\|^2]}{T}} \sqrt{\frac{\|z_0\|^2}{Tq}+\frac{1}{q}\left( 4\beta^2C_{g1}+4\alpha^2C_g^2L_{\omega}^2+\frac{\alpha^2C_g^3D_{\omega}}{L_g}\right)}.
\end{align}
Note that $\left(L_g\sqrt{\frac{\alpha L_{\omega}}{q}}+\frac{ \alpha^2L_JL_g^2L_{\omega}}{2q}\right)=\mathcal{O}\left(\sqrt{\frac{\alpha}{\beta}}+\frac{\alpha^2}{\beta}\right)$, hence we can choose $\alpha$ and $\beta$ such that   $\left(1- L_g\sqrt{\frac{\alpha L_{\omega}}{q}}-\frac{ \alpha^2L_JL_g^2L_{\omega}}{2q}\right)\geq \frac{1}{2}$. Thus \eqref{eq:main1} implies that 
\begin{align}\label{eq:main2}
    &\frac{\sum^{T-1}_{t=0}\mathbb{E}[\|\nabla J(\theta_t)\|^2]}{T}\nn\\
    &\leq \frac{4(J(\theta_0)-J^*)}{T\alpha}+4\alpha C_g^2L_J+4\alpha L_JL_g^2\left(\frac{\|z_0\|^2}{Tq} +\frac{1}{q}\left( 4\beta^2C_{g1}+4\alpha^2C_g^2L_{\omega}^2+\frac{\alpha^2C_g^3D_{\omega}}{L_g}\right)\right) \nn\\
    &\quad+4L_g\sqrt{\frac{\sum^{T-1}_{t=0}\mathbb{E}[\|\nabla J(\theta_t)\|^2]}{T}} \sqrt{\frac{\|z_0\|^2}{Tq}+\frac{1}{q}\left( 4\beta^2C_{g1}+4\alpha^2C_g^2L_{\omega}^2+\frac{\alpha^2C_g^3D_{\omega}}{L_g}\right)}.
\end{align}
Denote $U=\frac{4(J(\theta_0)-J^*)}{T\alpha}+4\alpha C_g^2L_J+4\alpha L_JL_g^2\left(\frac{\|z_0\|^2}{Tq} +\frac{1}{q}\left( 4\beta^2C_{g1}+4\alpha^2C_g^2L_{\omega}^2+\frac{\alpha^2C_g^3D_{\omega}}{L_g}\right)\right) $, and $V=4L_g  \sqrt{\frac{\|z_0\|^2}{Tq}+\frac{1}{q}\left( 4\beta^2C_{g1}+4\alpha^2C_g^2L_{\omega}^2+\frac{\alpha^2C_g^3D_{\omega}}{L_g}\right)}$. Then it follows that 
\begin{align}
    \frac{\sum^{T-1}_{t=0}\mathbb{E}[\|\nabla J(\theta_t)\|^2]}{T}\leq V\sqrt{\frac{\sum^{T-1}_{t=0}\mathbb{E}[\|\nabla J(\theta_t)\|^2]}{T}}+U,
\end{align}
which further implies that 
\begin{align}\label{eq:iidresult}
    &\frac{\sum^{T-1}_{t=0}\mathbb{E}[\|\nabla J(\theta_t)\|^2]}{T}\nn\\
    &\leq V^2+2U\nn\\
    &= 16L_g^2\left(\frac{\|z_0\|^2}{Tq}+\frac{1}{q}\left( 4\beta^2C_{g1}+4\alpha^2C_g^2L_{\omega}^2+\frac{\alpha^2C_g^3D_{\omega}}{L_g}\right)\right)+\frac{8(J(\theta_0)-J^*)}{T\alpha}\nn\\
    &\quad+8\alpha C_g^2L_J+8\alpha L_JL_g^2\left(\frac{\|z_0\|^2}{Tq} +\frac{1}{q}\left( 4\beta^2C_{g1}+4\alpha^2C_g^2L_{\omega}^2+\frac{\alpha^2C_g^3D_{\omega}}{L_g}\right)\right)\nn\\
    &= (16L_g^2+8\alpha L_JL_g^2)\left(\frac{\|z_0\|^2}{Tq} +\frac{1}{q}\left( 4\beta^2C_{g1}+4\alpha^2C_g^2L_{\omega}^2+\frac{\alpha^2C_g^3D_{\omega}}{L_g}\right)\right)\nn\\
    &\quad+\frac{8(J(\theta_0)-J^*)}{T\alpha}+8\alpha C_g^2L_J\nn\\
    &=\mathcal{O}\left( \frac{1}{T\beta}+\beta+\frac{1}{T\alpha}\right)\nn\\
    &=\mathcal{O}\left( \frac{1}{T^{1-a}}+\frac{1}{T^b}+\frac{1}{T^{1-b}}\right). 
\end{align}
This completes the proof. 

\subsection{Choice of Step-sizes}\label{sec:step1}
As the proof is complicated and we have made several assumptions on the step-sizes, in this section we summarize all the assumptions we made on the step-sizes. This would help the readers to have a more clear understanding of the choice of $\alpha$ and $\beta$. 

In the proof under the i.i.d.\ setting, we made two assumptions on step-sizes. In \eqref{eq:tracking1}, we assume 
\begin{align}\label{assonstep1}
    q=2\beta\lambda_v-4\beta^2C_{\phi}^2-4\alpha^2L_{\omega}^2L_g^2-2\alpha L_wL_g-\alpha L_w-8\alpha^2C_gL_gD_{\omega}>0;
\end{align}
And in \eqref{eq:main1}, we moreover assume 
\begin{align}\label{assonstep2}
    \left(1- L_g\sqrt{\frac{\alpha L_{\omega}}{q}}-\frac{ \alpha^2L_JL_g^2L_{\omega}}{2q}\right)\geq \frac{1}{2}.
\end{align}
Note that the first one can be satisfied if $\beta\leq\min\left\{1, \frac{\lambda_v}{4C_{\phi}^2}\right\}$ and $\frac{\alpha}{\beta}\leq \frac{\lambda_v}{4L_{\omega}^2L_g^2+2 L_wL_g+ L_w+8C_gL_gD_{\omega}}$. As for assumption \eqref{assonstep2}, we only need to find $\alpha$ and $\beta$ such that 
\begin{align}
    L_g\sqrt{\frac{\alpha L_{\omega}}{q}}\leq \frac{1}{4},\nn\\
    \frac{ \alpha^2L_JL_g^2L_{\omega}}{2q}\leq \frac{1}{4}.
\end{align}
Note that these two conditions are satisfied if condition \eqref{assonstep1} is satisfied. 

Hence to meet all the requirements on the step-sizes, we can set $\beta\leq\min\left\{1, \frac{\lambda_v}{4C_{\phi}^2}\right\}$ and $\frac{\alpha}{\beta}\leq \min\left\{1, \frac{\lambda_v}{4L_{\omega}^2L_g^2+2 L_wL_g+ L_w+8C_gL_gD_{\omega}}\right\}$.

\section{Non-asymptotic Analysis under the Markovian Setting}\label{section:maralg}
In this section we provide the proof of Theorem \ref{thm:main} under that Markovian setting. In \Cref{section:markoviantracking} we develop the finite-time analysis of the tracking error and in  \Cref{section:markovmain} we prove Theorem \ref{thm:main}.

\subsection{Tracking Error Analysis under the Markovian Setting}\label{section:markoviantracking}
We first define the mixing time $\tb=\inf\left\{ t:m\kappa^t\leq \beta\right\}$ (Assumption \ref{ass:mixing}). It can be shown  that for any bounded function $\|f(O_t)\|\leq C_f$, for any $t\geq \tau_{\beta}$, $\|\mE[f(O_t)]-\mE_{O\sim \mu^{\pi_b}}[f(O)] \|\leq C_f \beta$ and $\tau_{\beta}=\mathcal{O}(-\log\beta)$. We note that $\beta\tau_{\beta}\to 0$ as $\beta\to 0$, and  we assume that $\beta\tau_{\beta}C_{\phi}^2\leq\frac{1}{4}$.

From \eqref{eq:zupdate}, the update of the tracking error $z_t$ can be written as \begin{align}\label{eq:marzupdate}
    z_{t+1}=z_t+\beta(-A_{\theta_t}(s_t)z_t+b_t(\theta_t))+\omega(\theta_t)-\omega(\theta_{t+1}),
\end{align}
where $A_{\theta_t}(s_t)=\phi_{\theta_t}(s_t)\phi_{\theta_t}(s_t)^\top$ and  $b_t(\theta_t)=-A_{\theta_t}(s_t)\omega(\theta_t)+\rho_t\delta_t(\theta_t)\phi_{\theta_t}(s_t)$. Note that for any $\theta\in \mathbb{R}^N$ and any sample $O_t=(s_t,a_t,r_t,s_{t+1})\in \mcs\times\mca\times\mathbb{R}\times\mcs$, $\|b_t(\theta_t)\|\leq C_{\phi}^2R_{\om}+\rho_{\max}C_{\phi}(r_{\max}+C_v+\gamma C_v)\triangleq b_{\max}$. 

Then it can be shown that 
\begin{align}\label{eq:aaa}
    &\mE\left[\|z_{t+1}\|^2-\|z_t\|^2\right]\nn\\
    &=\mE\left[2z_t^\top(z_{t+1}-z_t)+\|z_{t+1}-z_t\|^2\right]\nn\\
    &=\mE\left[2z_t^\top(z_{t+1}-z_t+\beta A_{\theta_t}z_t)\right]+\mE\left[\|z_{t+1}-z_t \|^2\right]+\beta\mE\left[2z_t^\top(-A_{\theta_t})z_t\right]\nn\\
    &\leq \underbrace{\mE\left[\|z_{t+1}-z_t \|^2\right]}_{(a)}+\underbrace{\mE\left[2z_t^\top(z_{t+1}-z_t+\beta A_{\theta_t}z_t)\right]}_{(b)}-2\beta\lambda_v\mE\left[\|z_t\|^2\right],
\end{align}
where the last inequality is due to the fact that $\lambda_L(A_{\theta_t})\geq \lambda_v$. We first provide the bounds on terms $(a)$ and $(b)$ as follows, and their detailed proof can be found in \Cref{section:mara,section:marb}. 

\textbf{Term $(a)$ can be bounded as follows:}

For any $t\geq 0$, we have that
\begin{align}
    \|z_{t+1}-z_t\|^2\leq 2\beta^2 C_{\phi}^4 \|z_t\|^2+2\beta^2(b_{\max}+L_{\omega}C_g)^2.
\end{align}

\textbf{Term $(b)$ can be bounded as follows:}

For any $t\geq\tau_{\beta}$, we have that
\begin{align}
    &\left|\mathbb{E}\left[z_t^\top\left(-A_{\theta_t}z_{t}-\frac{1}{\beta}(z_{t+1}-z_{t})\right)\right]\right|\nn\\
    &\leq  (R_1+R_3+P_1+P_2+P_3)\mE\left[\left\|z_t\right\|^2\right]+(Q_1+Q_2+Q_3+P_1+P_2+P_3)\nn\\
    &\quad+\frac{\alpha}{8\beta}L_{\omega}\mE\left[\left\|\nabla J(\theta_t)\right\|^2\right],
\end{align}
where the definition of $P_i, Q_i$ and $R_i$, $i=1,2,3$,  can be found in \eqref{eq:term1}, \eqref{eq:term2} and \eqref{eq:term3}.

From \eqref{eq:aaa}, it can be shown that for any $t\geq \tau_{\beta}$,
 \begin{align}
    &\mE\left[\|z_{t+1}\|^2-\|z_t\|^2\right]\nn\\
    &\leq  2\beta(R_1+R_3+P_1+P_2+P_3)\mE\left[\left\|z_t\right\|^2\right]+2\beta(Q_1+Q_2+Q_3+P_1+P_2+P_3) \nn\\
    &\quad+\frac{\alpha}{4}L_{\omega}\mE\left[\left\|\nabla J(\theta_t)\right\|^2\right]+2\beta^2 C_{\phi}^4 \mE\left[\|z_t\|^2\right]+2\beta^2(b_{\max}+  L_{\omega}C_g)^2
    -2\beta\lambda_v\mE\left[\|z_t\|^2\right].
 \end{align}
 Thus by re-arranging the terms we obtain that 
 \begin{align}\label{eq:pq}
      &\mE\left[\|z_{t+1}\|^2\right]\nn\\
    &\leq (1-2\beta\lambda_v+2\beta(R_1+R_3+P_1+P_2+P_3)+2\beta^2C_{\phi}^4)\mE\left[\|z_t\|^2\right]+\frac{\alpha}{4}L_{\omega}\mE\left[\|\nabla J(\theta_{t})\|^2\right] \nn\\
    &\quad+2\beta(Q_1+Q_2+Q_3+P_1+P_2+P_3)+2\beta^2(b_{\max}+  L_{\omega}C_g)^2\nn\\
    &\triangleq (1-q)\mE\left[\|z_t\|^2\right]+\frac{\alpha}{4}L_{\omega}\mE\left[\|\nabla J(\theta_{t})\|^2\right]+p,
 \end{align}
 where $q=2\beta\lambda_v-2\beta(R_1+R_3+P_1+P_2+P_3)-2\beta^2C_{\phi}^4=\mo(\beta)$ and $p= 2\beta(Q_1+Q_2+Q_3+P_1+P_2+P_3)+2\beta^2(b_{\max}+  L_{\omega}C_g)^2=\mo(\beta^2\tb)$. 
Then by recursively using the previous inequality, it follows that for any $t\geq \tau_{\beta}$, 
\begin{align}
    \mE[\|z_t\|^2]\leq (1-q)^{t-\tau_{\beta}}\mE\left[\left\|z_{\tb}\right\|^2\right]+\frac{\alpha L_{\omega}}{4}\sum^t_{j=0}(1-q)^{t-j}\mE[|\nabla J(\theta_{j})\|^2]+\frac{p}{q},
\end{align}
and hence
\begin{align}\label{eq:markoviantrackingerror}
    &\frac{\sum^{T-1}_{t=0}\mE[\|z_t\|^2]}{T}\nn\\
    &=\frac{\sum^{T-1}_{t={\tau_{\beta}}}\mE[\|z_t\|^2]}{T}+\frac{\sum^{\tau_{\beta}-1}_{t=0}\mE[\|z_t\|^2]}{T}\nn\\
    &\leq \frac{\mE\left[\left\|z_{\tb}\right\|^2\right]}{Tq}+\frac{\tau_{\beta}\left(2\|z_0\|+2\beta\tau_{\beta}(b_{\max}+  L_{\omega}C_g) \right)^2}{T}+\frac{\alpha L_{\omega}}{4q}\frac{\sum^{T-1}_{t=0}\mE[\|\nabla J(\theta_t)\|^2]}{T}+\frac{p}{q}\nn\\
    &\leq {\left(2\|z_0\|+2\beta\tau_{\beta}(b_{\max}+  L_{\omega}C_g) \right)^2}\left(\frac{1}{Tq}+\frac{\tau_{\beta}}{T} \right)+\frac{\alpha L_{\omega}}{4q}\frac{\sum^{T-1}_{t=0}\mE[\|\nabla J(\theta_t)\|^2]}{T}+\frac{p}{q}\nn\\
    &=\mathcal{O}\left(\frac{1}{T\beta}+\frac{\alpha}{\beta}\frac{\sum^{T-1}_{t=0}\mE[\|\nabla J(\theta_t)\|^2]}{T}+\beta\tau_{\beta}\right),
\end{align}
where the last step is because $q=\mathcal{O}(\beta)$ and $p=\mathcal{O}(\beta^2\tau_{\beta})$.  

\subsubsection{Bound on Term $(a)$}\label{section:mara}
In this section we provide the detailed proof of the bound on term $(a)$ in \eqref{eq:aaa}.

We first note that from the update of $z_t$ in \eqref{eq:marzupdate}, term $\|z_{t+1}-z_t\|$ can be bounded as follows
\begin{align}\label{eq:zupdatebound2}
    \|z_{t+1}-z_t\|&\leq \|\beta(-A_{\theta_t}(s_t)z_t+b_t(\theta_t)) \|+\| \omega(\theta_t)-\omega(\theta_{t+1})\|\nn\\
    &\leq \beta C_{\phi}^2\|z_t\|+\beta b_{\max}+L_{\omega}\|\theta_t-\theta_{t+1}\|\nn\\
    &\overset{(a)}{\leq}\beta C_{\phi}^2\|z_t\|+\beta b_{\max}+\alpha L_{\omega}C_g\nn\\
    &\leq \beta C_{\phi}^2\|z_t\|+\beta (b_{\max}+ L_{\omega}C_g),
\end{align}
where $(a)$ is due to the fact $\|G_{t+1}(\theta_t,\om_t)\|\leq C_g$ for any $t\geq 0$, and where the last inequality is from the fact that $\alpha\leq \beta$. Hence term $(a)$ can be bounded as follows
\begin{align}
    \|z_{t+1}-z_t\|^2\leq 2\beta^2 C_{\phi}^4 \|z_t\|^2+2\beta^2(b_{\max}+L_{\omega}C_g)^2.
\end{align}
This completes the proof. 
\subsubsection{Bound on Term $(b)$}\label{section:marb}
In this section we provide the detailed proof of the bound on term $(b)$ in \eqref{eq:aaa}.

From  \eqref{eq:zupdatebound2}, it follows that 
\begin{align}\label{eq:zupdatebound1}
    \|z_{t+1}\|&\leq (1+\beta C_{\phi}^2)\|z_t\|+\beta b_{\max}+\alpha L_{\omega}C_g\nn\\
    &\leq(1+\beta C_{\phi}^2)\|z_t\|+\beta(b_{\max}+L_{\omega}C_g).
\end{align}

By applying \eqref{eq:zupdatebound1} recursively, it follows that 
\begin{align}
    \|z_{t}\|&\leq (1+\beta C_{\phi}^2)^t\|z_0\|+\beta(b_{\max}+L_{\omega}C_g)\frac{(1+\beta C_{\phi}^2)^t-1}{\beta C_{\phi}^2}\nn\\
    &= (1+\beta C_{\phi}^2)^t\|z_0\|+ (b_{\max}+L_{\omega}C_g)\frac{(1+\beta C_{\phi}^2)^t-1}{C_{\phi}^2}.
\end{align}
We first show the following lemma which bounds the update $\left\|z_{t}-z_{t-\tb}\right\|$ by $\|z_t\|$.
\begin{Lemma}\label{lemma:boundz} 
For any $t\geq\tau_{\beta}$ and $t\geq j\geq t-\tau_{\beta}$, we have that
\begin{align}
    \|z_j\|&\leq 2\|z_{t-\tau_{\beta}}\|+2\beta\tau_{\beta}(b_{\max}+L_{\omega}C_g); \\
     \|z_{t}-z_{t-\tau_{\beta}}\|&\leq  2\beta\tau_{\beta} C_{\phi}^2\|z_{t-\tau_{\beta}}\|+2\beta\tau_{\beta}(b_{\max}+ L_{\omega}C_g),\label{eq:t-tau}\\
          \|z_{t}-z_{t-\tau_{\beta}}\|
    &\leq 4\beta\tau_{\beta} C_{\phi}^2\|z_{t}\| +4\beta\tau_{\beta}(b_{\max}+ L_{\omega}C_g)\label{eq:t-taut}.
\end{align}
\end{Lemma}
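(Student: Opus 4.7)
The plan is to prove all three inequalities purely from the one-step bounds already established, namely \eqref{eq:zupdatebound2} (one-step difference) and \eqref{eq:zupdatebound1} (one-step norm growth), together with the step-size condition $\beta\tau_{\beta}C_{\phi}^2\leq \tfrac14$ assumed at the top of \Cref{section:markoviantracking}. No further probabilistic argument is needed; this is a purely deterministic chain of estimates.

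For the first bound, I would iterate \eqref{eq:zupdatebound1} from index $t-\tau_{\beta}$ up to $j$, yielding
\begin{align*}
\|z_j\|\leq (1+\beta C_{\phi}^2)^{j-(t-\tau_{\beta})}\|z_{t-\tau_{\beta}}\|+\beta(b_{\max}+L_{\omega}C_g)\sum_{k=0}^{j-(t-\tau_{\beta})-1}(1+\beta C_{\phi}^2)^{k}.
\end{align*}
Since $j-(t-\tau_{\beta})\leq \tau_{\beta}$ and $(1+x)\leq e^{x}$, the multiplicative factor is bounded by $e^{\beta\tau_{\beta}C_{\phi}^2}\leq e^{1/4}<2$, while the geometric sum is bounded by $\tau_{\beta}\cdot 2=2\tau_{\beta}$. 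This gives exactly $\|z_j\|\leq 2\|z_{t-\tau_{\beta}}\|+2\beta\tau_{\beta}(b_{\max}+L_{\omega}C_g)$.

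For the second bound I would telescope using \eqref{eq:zupdatebound2}:
\begin{align*}
\|z_t-z_{t-\tau_{\beta}}\|\leq \sum_{j=t-\tau_{\beta}}^{t-1}\|z_{j+1}-z_j\|\leq \beta C_{\phi}^2\sum_{j=t-\tau_{\beta}}^{t-1}\|z_j\|+\beta\tau_{\beta}(b_{\max}+L_{\omega}C_g),
\end{align*}
and then apply part one to each $\|z_j\|$. The resulting drift contribution is
$2\beta\tau_{\beta}C_{\phi}^2\|z_{t-\tau_{\beta}}\|+2\beta^2\tau_{\beta}^2 C_{\phi}^2(b_{\max}+L_{\omega}C_g)+\beta\tau_{\beta}(b_{\max}+L_{\omega}C_g)$, and invoking $\beta\tau_{\beta}C_{\phi}^2\leq \tfrac14$ absorbs the middle term into $\tfrac12\beta\tau_{\beta}(b_{\max}+L_{\omega}C_g)$, producing the claimed bound \eqref{eq:t-tau}.

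For the third bound I would flip the anchor from $z_{t-\tau_{\beta}}$ to $z_t$ by the triangle inequality $\|z_{t-\tau_{\beta}}\|\leq \|z_t\|+\|z_t-z_{t-\tau_{\beta}}\|$, substitute into \eqref{eq:t-tau}, and solve:
\begin{align*}
(1-2\beta\tau_{\beta}C_{\phi}^2)\|z_t-z_{t-\tau_{\beta}}\|\leq 2\beta\tau_{\beta}C_{\phi}^2\|z_t\|+2\beta\tau_{\beta}(b_{\max}+L_{\omega}C_g).
\end{align*}
Because $2\beta\tau_{\beta}C_{\phi}^2\leq \tfrac12$, the prefactor on the left is $\geq \tfrac12$, and dividing through yields \eqref{eq:t-taut} with the stated constant $4$. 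The only subtle point is matching the prefactors exactly, which is exactly where the step-size constraint $\beta\tau_{\beta}C_{\phi}^2\leq \tfrac14$ is used; the rest is bookkeeping on the one-step recursion.
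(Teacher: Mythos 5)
Your proof is correct and follows essentially the same route as the paper's: iterate the one-step growth bound \eqref{eq:zupdatebound1} for the first inequality, telescope the one-step difference bound \eqref{eq:zupdatebound2} and substitute the first inequality for the second, then flip the anchor via the triangle inequality and absorb the $\|z_t-z_{t-\tau_{\beta}}\|$ term using $\beta\tau_{\beta}C_{\phi}^2\leq\frac{1}{4}$ for the third. The only cosmetic difference is that you control $(1+\beta C_{\phi}^2)^{\tau_{\beta}}$ via $(1+x)\leq e^{x}$ and $e^{1/4}<2$, whereas the paper uses $(1+x)^k\leq 1+2kx$ for $x\leq \log 2/(k-1)$; both give the same constants.
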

\begin{proof}
From \eqref{eq:zupdatebound1}, it follows that 
\begin{align} \label{eq:ine1}
    \|z_{t+1}\| \leq(1+\beta C_{\phi}^2)\|z_t\|+\beta(b_{\max}+L_{\omega}C_g).
\end{align}
First note that $\beta C_{\phi}^2\tau_{\beta}\leq \frac{1}{4}$ and hence $\beta C_{\phi}^2\leq\frac{1}{4\tau_{\beta}}\leq\frac{\log 2}{\tau_{\beta}-1}$. This implies that 
\begin{align}\label{eq:ine2}
    (1+\beta C_{\phi}^2)^{\tau_{\beta}}\leq1+2{\tau_{\beta}}\beta C_{\phi}^2,
\end{align}
which is because  $(1+x)^k\leq 1+2kx$ for $x\leq\frac{\log 2}{k-1}$.

Applying inequality \eqref{eq:ine1} recursively,  it follows that 
\begin{align}\label{eq:zjnorm}
    \|z_j\|&\leq (1+\beta C_{\phi}^2)^{j-t+\tau_{\beta}}\|z_{t-\tau_{\beta}}\|+ (b_{\max}+L_{\omega}C_g)\frac{(1+\beta C_{\phi}^2)^{\tau_{\beta}}-1}{C_{\phi}^2}\nn\\
    &\leq (1+\beta C_{\phi}^2)^{\tau_{\beta}}\|z_{t-\tau_{\beta}}\|+ (b_{\max}+L_{\omega}C_g)\frac{(1+\beta C_{\phi}^2)^{\tau_{\beta}}-1}{C_{\phi}^2}\nn\\
    &\overset{(a)}{\leq} (1+2{\tau_{\beta}}\beta C_{\phi}^2)\|z_{t-\tau_{\beta}}\|+2\beta\tau_{\beta}(b_{\max}+L_{\omega}C_g)\nn\\
    &\overset{(b)}{\leq}2\|z_{t-\tau_{\beta}}\|+2\beta\tau_{\beta}(b_{\max}+L_{\omega}C_g),
\end{align}
where $(a)$ is from \eqref{eq:ine2}, and  $(b)$ is from the fact that $\beta\tau_{\beta}C_{\phi}^2\leq \frac{1}{4}$.


To prove \eqref{eq:t-tau} and \eqref{eq:t-taut}, first note that  
\begin{align}
    \|z_{t}-z_{t-\tau_{\beta}}\|&\leq \sum^{{t}-1}_{j={t-\tau_{\beta}}} \|z_{j+1}-z_j\|\nn\\
    &\overset{(a)}{\leq}\sum^{{t}-1}_{j={t-\tau_{\beta}}}  \beta C_{\phi}^2\|z_j\|+\beta\tau_{\beta} (b_{\max}+ L_{\omega}C_g)\nn\\
    &\overset{(b)}{\leq}  \sum^{{t}-1}_{j={t-\tau_{\beta}}}  \beta C_{\phi}^2(2\|z_{t-\tau_{\beta}}\|+2\beta\tau_{\beta}(b_{\max}+L_{\omega}C_g))+\beta\tau_{\beta} (b_{\max}+ L_{\omega}C_g)\nn\\
    & {\leq} \beta\tau_{\beta} C_{\phi}^2(2\|z_{t-\tau_{\beta}}\|+2\beta\tau_{\beta}(b_{\max}+L_{\omega}C_g))+\beta\tau_{\beta} (b_{\max}+ L_{\omega}C_g)\nn\\
    &=2\beta\tau_{\beta} C_{\phi}^2\|z_{t-\tau_{\beta}}\|+(2\beta^2\tau_{\beta}^2C_{\phi}^2+\beta\tau_{\beta})(b_{\max}+ L_{\omega}C_g)\nn\\
    &\overset{(c)}{\leq}  2\beta\tau_{\beta} C_{\phi}^2\|z_{t-\tau_{\beta}}\|+2\beta\tau_{\beta}(b_{\max}+ L_{\omega}C_g),
\end{align}
where $(a)$ is from \eqref{eq:zupdatebound2}, $(b)$ is from \eqref{eq:zjnorm} and $(c)$ is due to the fact that $\beta \tb C_{\phi}^2\leq\frac{1}{4}$ . Moreover, it can be further shown that 
\begin{align}
    \|z_{t}-z_{t-\tau_{\beta}}\|&\leq 2\beta\tau_{\beta} C_{\phi}^2(\|z_{t}\| +\|z_{t}-z_{t-\tau_{\beta}} \|)+2\beta\tau_{\beta}(b_{\max}+ L_{\omega}C_g)\nn\\
    &\leq 2\beta\tau_{\beta} C_{\phi}^2\|z_{t}\| +\frac{1}{2}\|z_{t}-z_{t-\tau_{\beta}}\|+2\beta\tau_{\beta}(b_{\max}+ L_{\omega}C_g),
\end{align}
where the last step is because $\beta\tau_{\beta}C_{\phi}^2\leq\frac{1}{4}$. Hence
\begin{align}
      \|z_{t}-z_{t-\tau_{\beta}}\|
    \leq 4\beta\tau_{\beta} C_{\phi}^2\|z_{t}\| +4\beta\tau_{\beta}(b_{\max}+ L_{\omega}C_g).
\end{align}
\end{proof}

The bound on term $(b)$ in \eqref{eq:aaa} is straightforward from the following lemma.
\begin{Lemma}\label{lemma5}
For any $t\geq\tau_{\beta}$, it follows that 
\begin{align}
    &\left|\mathbb{E}\left[z_t^\top\left(-A_{\theta_t}z_{t}-\frac{1}{\beta}(z_{t+1}-z_{t})\right)\right]\right|\nn\\
    &\leq  (R_1+R_3+P_1+P_2+P_3)\mE\left[\left\|z_t\right\|^2\right]+(Q_1+Q_2+Q_3+P_1+P_2+P_3)\nn\\
    &\quad+\frac{\alpha}{8\beta}L_{\omega}\mE\left[\left\|\nabla J(\theta_t)\right\|^2\right],
\end{align}
where the definition of $P_i, Q_i$ and $R_i$, $i=1,2,3,$  can be found in \eqref{eq:term1}, \eqref{eq:term2} and \eqref{eq:term3}. 
\end{Lemma}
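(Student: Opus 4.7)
Starting from the $z_{t+1}$ recursion in \eqref{eq:marzupdate}, the quantity inside the expectation rewrites as
\begin{align*}
z_t^\top\Bigl(-A_{\theta_t}z_t - \tfrac{1}{\beta}(z_{t+1}-z_t)\Bigr) = z_t^\top(A_{\theta_t}(s_t)-A_{\theta_t})z_t - z_t^\top b_t(\theta_t) - z_t^\top \tfrac{\omega(\theta_t)-\omega(\theta_{t+1})}{\beta},
\end{align*}
which I will label $(d)$, $(e)$, $(f)$ following Step 2 of the proof outline. The common recipe is to rewind $\tau_\beta$ steps so that the ``current'' sample $O_t$ is decoupled from the $\mathcal{F}_{t-\tau_\beta}$-measurable iterates $\theta_{t-\tau_\beta},\,z_{t-\tau_\beta}$; use Lipschitz continuity to pay the cost of shifting from time $t-\tau_\beta$ back to time $t$; and use uniform ergodicity (Assumption \ref{ass:mixing}) to control the remaining Markovian bias. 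The constants $P_i,Q_i,R_i$ for $i=1,2,3$ are exactly the bookkeeping constants produced by this three-piece decomposition for $(d),(e),(f)$ respectively.

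\textbf{Terms $(d)$ and $(e)$.} For $(d)$ I would insert $\pm z_{t-\tau_\beta}$ and $\pm A_{\theta_{t-\tau_\beta}}$, $\pm A_{\theta_{t-\tau_\beta}}(s_t)$ to obtain a telescoping expansion analogous to \eqref{eq:trackingterm1}: a pure-mixing piece $\mathbb{E}[z_{t-\tau_\beta}^\top(A_{\theta_{t-\tau_\beta}}(s_t)-A_{\theta_{t-\tau_\beta}})z_{t-\tau_\beta}]$, which by the tower property and Assumption \ref{ass:mixing} is $\mathcal{O}(\beta)\mathbb{E}[\|z_{t-\tau_\beta}\|^2]$; two cross terms involving $\|z_{t-\tau_\beta}\|\cdot\|\theta_t-\theta_{t-\tau_\beta}\|$ controlled via the Lipschitzness of $\theta\mapsto A_\theta(s)$ and $\theta\mapsto A_\theta$ together with $\|\theta_t-\theta_{t-\tau_\beta}\|=\mathcal{O}(\alpha\tau_\beta)$; and a quadratic drift term $\mathbb{E}[\|z_t-z_{t-\tau_\beta}\|^2]$. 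Lemma \ref{lemma:boundz} then converts each $\|z_{t-\tau_\beta}\|$ and $\|z_t-z_{t-\tau_\beta}\|$ into $\|z_t\|$ plus an additive $\mathcal{O}(\beta\tau_\beta)$ slack, and the constants packaged out are $R_1,Q_1,P_1$. Term $(e)$ is treated identically, using that $\mathbb{E}_{\mu^{\pi_b}}[b(\theta)]=0$ at any fixed $\theta$ and that $b_t(\cdot)$ is Lipschitz in $\theta$, yielding $R_2,Q_2,P_2$.

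\textbf{Term $(f)$ --- the hard part.} The difficulty is that a naive bound $\|\omega(\theta_t)-\omega(\theta_{t+1})\|\leq L_\omega\|\theta_t-\theta_{t+1}\|=\mathcal{O}(\alpha)$ divided by $\beta$ produces $\mathcal{O}(\alpha/\beta)$, which is too loose after the full tracking-error recursion. To tighten it, I apply the mean-value theorem to get $\omega(\theta_t)-\omega(\theta_{t+1}) = -\alpha\nabla\omega(\hat\theta_t)G_{t+1}(\theta_t,\omega_t)$ with $\hat\theta_t$ on the segment $[\theta_t,\theta_{t+1}]$, and split exactly as in \eqref{eq:3} into: (i) a Lipschitz-in-$\omega$ piece $\nabla\omega(\hat\theta_t)(G_{t+1}(\theta_t,\omega_t)-G_{t+1}(\theta_t,\omega(\theta_t)))$ controlled by \eqref{eq:Glip}, giving an $\mathcal{O}(\alpha/\beta)\mathbb{E}[\|z_t\|^2]$ contribution; (ii) a stochastic-bias piece $\nabla\omega(\hat\theta_t)(G_{t+1}(\theta_t,\omega(\theta_t))+\nabla J(\theta_t)/2)$, whose expectation is zero in the i.i.d.\ case and here is handled by the same rewind-$\tau_\beta$ trick as in $(d),(e)$, now invoking Lemma \ref{lemma:graomegalip} and the Lipschitzness of the composite map $\theta\mapsto \nabla\omega(\theta)(G_{\tau_\beta+1}(\theta,\omega(\theta))+\nabla J(\theta)/2)$; and (iii) a gradient piece $\nabla\omega(\hat\theta_t)\nabla J(\theta_t)/2$, bounded by Young's inequality as $\frac{\alpha}{8\beta}L_\omega\|\nabla J(\theta_t)\|^2 + \mathcal{O}(\alpha/\beta)\|z_t\|^2$. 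Summing (i)--(iii) produces the $R_3\|z_t\|^2+Q_3+P_3$ contributions \emph{together with} the isolated $\tfrac{\alpha}{8\beta}L_\omega\mathbb{E}[\|\nabla J(\theta_t)\|^2]$ term on the right-hand side of the statement.

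\textbf{Main obstacle.} The delicate step is piece (ii) of $(f)$: I must argue that once the sample $O_t$ is advanced $\tau_\beta$ steps beyond $\mathcal{F}_{t-\tau_\beta}$, the conditional expectation of $G_{t+1}(\theta,\omega(\theta))$ at a frozen $\theta=\theta_{t-\tau_\beta}$ is $-\nabla J(\theta_{t-\tau_\beta})/2$ up to $\mathcal{O}(\beta)$ error by uniform ergodicity, \emph{and} replace $\nabla\omega(\hat\theta_t)$ by $\nabla\omega(\theta_{t-\tau_\beta})$ without picking up a stray $\alpha/\beta$ factor. This requires two ingredients: (a) Lipschitz continuity of the composite map $\theta\mapsto \nabla\omega(\theta)(G_{\tau_\beta+1}(\theta,\omega(\theta))+\nabla J(\theta)/2)$, which follows from combining Lemma \ref{lemma:graomegalip}, the boundedness of $G$ and $\omega$, and Lemma \ref{lemma:Lsmooth}; and (b) the estimate $\|\hat\theta_t-\theta_{t-\tau_\beta}\|\leq \|\theta_{t+1}-\theta_{t-\tau_\beta}\|=\mathcal{O}(\alpha\tau_\beta)$, ensuring that this replacement costs only $\mathcal{O}(\alpha\tau_\beta)$ rather than $\mathcal{O}(\alpha/\beta)$. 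Once both are in place, applying Lemma \ref{lemma:boundz} to absorb every occurrence of $\|z_{t-\tau_\beta}\|$ into $\|z_t\|+\mathcal{O}(\beta\tau_\beta)$ closes the loop, and the three bundles $(R_i,Q_i,P_i)$ combine as in the statement.
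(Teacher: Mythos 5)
Your proposal follows essentially the same route as the paper's proof: the same three-way decomposition into the mixing term, the $b_t$ term, and the $\omega(\theta_t)-\omega(\theta_{t+1})$ term, the same rewind-by-$\tau_\beta$ decoupling via uniform ergodicity and Lipschitz continuity combined with Lemma \ref{lemma:boundz}, and the same mean-value-theorem split of term $(f)$ with Young's inequality producing the isolated $\frac{\alpha}{8\beta}L_{\omega}\mE[\|\nabla J(\theta_t)\|^2]$ term. The only cosmetic differences are that the paper writes the argument for $t=\tau_\beta$ (rewinding to time $0$), obtains no quadratic coefficient from the $b_t$ term (so there is no $R_2$), and finishes with the elementary bound $x\leq x^2+1$ to fold the linear $P_i\mE[\|z_t\|]$ terms into the quadratic and constant parts, a step you leave implicit.
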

\begin{proof}
We only prove the case $t=\tau_{\beta}$ here. The proof for the general case with $t>\tb$ is similar, and thus is omitted here. 
First note that 
\begin{align}\label{eq:terms}
    &\mathbb{E}\left[z_{\tau_{\beta}}^\top\left(-A_{\theta_{\tau_{\beta}}}z_{\tau_{\beta}}-\frac{1}{\beta}\left(z_{\tau_{\beta}+1}-z_{\tau_{\beta}}\right)\right)\right]\nn\\
    &=\mathbb{E}\left[z_{\tau_{\beta}}^\top\left(-A_{\theta_{\tau_{\beta}}}+A_{\theta_{\tau_{\beta}}}\left(s_{\tau_{\beta}}\right)\right)z_{\tau_{\beta}}\right]-\mathbb{E}\left[z_{\tau_{\beta}}^\top b_{\tau_{\beta}}\right]-\mathbb{E}\left[ z_{\tau_{\beta}}^\top\frac{\omega\left(\theta_{\tau_{\beta}}\right)-\omega\left(\theta_{\tau_{\beta}+1}\right)}{\beta}\right].
\end{align}
 We then bound the terms in \eqref{eq:terms} one by one. First, it can be shown that 
\begin{align}
    &\left|\mathbb{E}\left[z_{\tau_{\beta}}^\top\left(-A_{\theta_{\tau_{\beta}}}+A_{\theta_{\tau_{\beta}}}\left(s_{\tau_{\beta}}\right)\right)z_{\tau_{\beta}}\right ]\right|\nn\\
    &\leq\left|\mathbb{E}\left[z_0^\top\left(-A_{\theta_{\tau_{\beta}}}+A_{\theta_{\tau_{\beta}}}\left(s_{\tau_{\beta}}\right)\right)z_0\right ]\right|+\left|\mE\left[\left(z_{\tau_{\beta}}-z_0\right)^\top \left(-A_{\theta_{\tau_{\beta}}}+A_{\theta_{\tau_{\beta}}}\left(s_{\tau_{\beta}}\right)\right)\left(z_{\tau_{\beta}}-z_0\right)\right ]\right|\nn\\
    &\quad+2\left|\mE\left[\left(z_{\tau_{\beta}}-z_0\right)^\top \left(-A_{\theta_{\tau_{\beta}}}+A_{\theta_{\tau_{\beta}}}\left(s_{\tau_{\beta}}\right)\right)z_0 \right ]\right|\nn\\
    &\leq \|z_0\|^2\left\|\mE\left[-A_{\theta_{\tau_{\beta}}}+A_{\theta_{\tau_{\beta}}}\left(s_{\tau_{\beta}}\right)\right ]\right\|+2C_{\phi}^2\mE\left[\|z_{\tau_{\beta}}-z_0\|^2\right ]+4\|z_0\|C_{\phi}^2\mE\left[\|z_{\tau_{\beta}}-z_0\|\right ]\nn\\
    &\leq \|z_0\|^2\left\|\mE\left[-A_{\theta_0}+A_{\theta_0}\left(s_{\tau_{\beta}}\right)\right ]\right\|+\|z_0\|^2\left\|\mE\left[-A_{\theta_0}+A_{\theta_{\tb}}\right ]\right\|\nn\\
    &\quad+\|z_0\|^2\left\|\mE\left[-A_{\theta_{\tb}}\left(s_{\tau_{\beta}}\right)+A_{\theta_0}\left(s_{\tau_{\beta}}\right)\right ]\right\|+2C_{\phi}^2\mE\left[\|z_{\tau_{\beta}}-z_0\|^2\right ]+4\|z_0\|C_{\phi}^2\mE\left[\|z_{\tau_{\beta}}-z_0\|\right ]\nn\\
    &\overset{\left(a\right)}{\leq}\left(\beta C_{\phi}^2+4C_{\phi}D_vC_g\alpha\tb\right)\|z_0\|^2+2C_{\phi}^2\mE\left[\|z_{\tau_{\beta}}-z_0\|^2\right ]+4\|z_0\|C_{\phi}^2\mE\left[\|z_{\tau_{\beta}}-z_0\|\right ],
\end{align}
where $(a)$ is due to the facts that  $\left\| \mE\left[-A_{\theta_0}+A_{\theta_0}(s_{\tau_{\beta}})\right]\right\|\leq C_{\phi}^2\beta$ from the uniform ergodicity of the MDP, both $A_{\theta}$ and $A_{\theta}(s_{\tb})$ are Lipschitz with constant $2C_{\phi}D_v$, and $\|\theta_0-\theta_{\tb}\|\leq\sum^{\tb-1}_{j=0}\|\theta_{j+1}-\theta_j\|\leq \alpha\tb C_g$. 

We then plug in the results from Lemma \ref{lemma:boundz}, and hence we have that
\begin{align}\label{eq:term1}
    &\left|\mathbb{E}\left[z_{\tau_{\beta}}^\top\left(-A_{\theta_{\tau_{\beta}}}+A_{\theta_{\tau_{\beta}}}\left(s_{\tau_{\beta}}\right)\right)z_{\tau_{\beta}}\right ]\right|\nn\\
    &\leq\left(\beta C_{\phi}^2+4C_{\phi}D_vC_g\alpha\tb\right)\|z_0\|^2+2C_{\phi}^2\mE\left[\|z_{\tau_{\beta}}-z_0\|^2\right ]+4\|z_0\|C_{\phi}^2\mE\left[\|z_{\tau_{\beta}}-z_0\|\right ]\nn\\
    &\overset{(a)}{\leq} \left(\beta C_{\phi}^2+4C_{\phi}D_vC_g\alpha\tb\right)\left( 2(1+4\beta\tb C_{\phi}^2)^2\mE\left[\left\|z_{\tb}\right\|^2\right]+32\beta^2\tb^2(b_{\max}+L_{\om}C_g)^2\right)\nn\\
    &\quad+2C_{\phi}^2\left(32\beta^2\tb^2C_{\phi}^4\mE\left[\left\|z_{\tb}\right\|^2\right]+32\beta^2\tb^2(b_{\max}+L_{\om}C_g)^2 \right)\nn\\
    &\quad+4C_{\phi}^2\left(4\beta\tb C_{\phi}^2(1+4\beta\tb C_{\phi}^2)\mE\left[\left\|z_{\tb}\right\|^2\right]+4\beta\tb (b_{\max}+L_{\om}C_g)(1+8\beta\tb C_{\phi}^2)\mE\left[\left\|z_{\tb}\right\| \right] \right)\nn\\
    &\quad+64C_{\phi}^2 \beta^2\tb^2(b_{\max}+L_{\om}C_g)^2\nn\\
    &\triangleq R_1\mE\left[\left\|z_{\tb}\right\|^2\right]+P_1\mE\left[\left\|z_{\tb}\right\|\right]+Q_1,
    \end{align}
where $(a)$ is from \eqref{eq:t-taut} and the fact that \begin{align}
    \|z_0\|\leq\left\|z_{\tb}-z_0\right\|+\left\|z_{\tb}\right\|\leq (1+4\beta\tb C_{\phi}^2)\left\|z_{\tb}\right\|+4\beta\tb (b_{\max}+  L_{\om}C_g);
\end{align}
and $R_1=2(1+4\beta\tb C_{\phi}^2)^2\left(\beta C_{\phi}^2+4C_{\phi}D_vC_g\alpha\tb\right)+64\beta^2\tb^2C_{\phi}^6+16 \beta\tb C_{\phi}^4(1+4\beta\tb C_{\phi}^2)=\mo(\beta\tb)$, $P_1=16C_{\phi}^2\beta\tb (b_{\max}+L_{\om}C_g)(1+8\beta\tb C_{\phi}^2)=\mo(\beta\tb)$ and $Q_1= \left(\beta C_{\phi}^2+4C_{\phi}D_vC_g\alpha\tb\right)32\beta^2\tb^2(b_{\max}+L_{\om}C_g)^2+64C_{\phi}^2\beta^2\tb^2(b_{\max}+L_{\om}C_g)^2+64C_{\phi}^2 \beta^2\tb^2(b_{\max}+L_{\om}C_g)^2=\mo(\beta^2\tau^2)$.

Similarly, the second term in \eqref{eq:terms} can be bounded as follows
\begin{align} 
    \left|\mE\left[z_{\tau_{\beta}}^\top b_{\tau_{\beta}}(\theta_{\tb})\right]\right|&\leq\left|\mE\left[(z_{\tau_{\beta}}-z_0)^\top b_{\tau_{\beta}}(\theta_{\tb})\right]\right|+\left|\mE\left[z_0^\top b_{\tau_{\beta}}(\theta_0)\right]\right|\nn\\
    &\quad+\|\mE\left[z_0^\top( b_{\tau_{\beta}}(\theta_{\tb})-b_{\tau_{\beta}}(\theta_0))\right]\|\nn\\
    &\leq b_{\max}\mE\left[\|z_{\tau_{\beta}}-z_0\|\right]+\beta b_{\max}\|z_0\|+\alpha\tb C_gL_b\|z_0\|,
\end{align}
where $L_b=2C_{\phi}D_vR_{\om}+L_{\om}C_{\phi}^2+\rho_{\max}((1+\gamma)C_{\phi}^2+D_v(r_{\max}+(1+\gamma)C_v))$ is the Lipschitz constant of $b_t(\theta)$. 
Again applying Lemma \ref{lemma:boundz} implies that
\begin{align}\label{eq:term2}
    &\left|\mE\left[z_{\tau_{\beta}}^\top b_{\tau_{\beta}}(\theta_{\tb})\right]\right|\nn\\
    &\leq b_{\max}\mE\left[\|z_{\tau_{\beta}}-z_0\|\right]+\beta b_{\max}\|z_0\|+\alpha\tb C_gL_b\|z_0\|\nn\\
    &\leq b_{\max}\left( 4\beta\tb C_{\phi}^2\mE\left[\left\|z_{\tb}\right\|\right]+4\beta\tb(b_{\max}+L_{\om}C_g)\right)\nn\\
    &\quad+(\beta b_{\max}+\alpha\tb C_gL_b)\left( \left(1+4\beta\tb C_{\phi}^2\right)\mE\left[\left\|z_{\tb}\right\|\right]+4\beta\tb(b_{\max}+L_{\om}C_g)\right)\nn\\
    &\triangleq P_2\mE\left[\left\|z_{\tb}\right\|\right]+Q_2,
\end{align}
where $P_2=4\beta\tb b_{\max} C_{\phi}^2+(\beta b_{\max}+\alpha\tb C_gL_b) \left(1+4\beta\tb C_{\phi}^2\right)=\mo(\beta\tb)$ and $Q_2=4\beta\tb(b_{\max}+L_{\om}C_g)(b_{\max}+\beta b_{\max}+\alpha\tb  C_gL_b)=\mo(\beta\tb)$. 

We then bound the last term in \eqref{eq:terms} as follows 
\begin{align}\label{eq:trackingterm3}
    &\left|\mathbb{E}\left[ z_{\tau_{\beta}}^\top\frac{\omega(\theta_{\tau_{\beta}})-\omega(\theta_{\tau_{\beta}+1})}{\beta}\right]\right|\nn\\
    &\overset{(a)}{=}\left|\frac{1}{\beta}\mE[z_{\tau_{\beta}}^\top \nabla \omega(\hat{\theta}_{\tau_{\beta}})(\theta_{\tau_{\beta}+1}-\theta_{\tau_{\beta}})]\right|\nn\\
    &=\left|\frac{\alpha}{\beta}\mE[z_{\tau_{\beta}}^\top \nabla \omega(\hat{\theta}_{\tau_{\beta}})  G_{\tau_{\beta}+1}(\theta_{\tau_{\beta}},\omega_{\tau_{\beta}})]\right|\nn\\
    &=\Bigg|\frac{\alpha}{\beta}\mE\Bigg[z_{\tau_{\beta}}^\top \nabla \omega(\hat{\theta}_{\tau_{\beta}})\Bigg(  G_{\tau_{\beta}+1}(\theta_{\tau_{\beta}},\omega_{\tau_{\beta}})-  G_{\tau_{\beta}+1}(\theta_{\tau_{\beta}},\omega(\theta_{\tau_{\beta}}))+  G_{\tau_{\beta}+1}(\theta_{\tau_{\beta}},\omega(\theta_{\tau_{\beta}}))\nn\\
    &\quad+\frac{\nabla J(\theta_{\tau_{\beta}})}{2}-\frac{\nabla J(\theta_{\tau_{\beta}})}{2}\Bigg)\Bigg]\Bigg|\nn\\
    &=\Bigg|\frac{\alpha}{\beta}\mE\Bigg[z_{\tau_{\beta}}^\top \nabla \omega(\hat{\theta}_{\tau_{\beta}})(  G_{\tau_{\beta}+1}(\theta_{\tau_{\beta}},\omega_{\tau_{\beta}})-  G_{\tau_{\beta}+1}(\theta_{\tau_{\beta}},\omega(\theta_{\tau_{\beta}})))\Bigg]\Bigg|\nn\\
    &\quad+\left|\frac{\alpha}{\beta}\mE\left[z_{\tau_{\beta}}^\top \nabla \omega(\hat{\theta}_{\tau_{\beta}})\left(  G_{\tau_{\beta}+1}(\theta_{\tau_{\beta}},\omega(\theta_{\tau_{\beta}}))+\frac{\nabla J(\theta_{\tau_{\beta}})}{2}\right)\right]\right|\nn\\
    &\quad+\Bigg|\frac{\alpha}{\beta}\mE\Bigg[z_{\tau_{\beta}}^\top \nabla \omega(\hat{\theta}_{\tau_{\beta}})\left(-\frac{\nabla J(\theta_{\tau_{\beta}})}{2}\right)\Bigg]\Bigg|\nn\\
     &\overset{(b)}{\leq}\frac{\alpha}{\beta}  L_{\omega}L_g\mE\left[\left\|z_{\tb}\right\|^2\right]+\frac{\alpha}{2\beta}L_{\omega}\mE\left[\left\|z_{\tb}\right\|^2\right]+\frac{\alpha}{8\beta}L_{\omega}\mE\left[\left\|\nabla J(\theta_{\tau_{\beta}})\right\|^2\right]\nn\\
    &\quad+\frac{\alpha}{\beta}\left|\mE\left[z_{\tau_{\beta}}^\top \nabla \omega({\theta}_{\tau_{\beta}})\left(  G_{\tau_{\beta}+1}(\theta_{\tau_{\beta}},\omega(\theta_{\tau_{\beta}}))+\frac{\nabla J(\theta_{\tau_{\beta}})}{2}\right)\right]\right|\nn\\
    &\quad+\frac{\alpha}{\beta}\left|\mE\left[z_{\tau_{\beta}}^\top (\nabla \omega(\hat{\theta}_{\tau_{\beta}})-\nabla \omega(\theta_{\tau_{\beta}}))\left(  G_{\tau_{\beta}+1}(\theta_{\tau_{\beta}},\omega(\theta_{\tau_{\beta}}))+\frac{\nabla J(\theta_{\tau_{\beta}})}{2}\right)\right]\right|\nn\\
    &\leq \frac{\alpha}{\beta}  L_{\omega}L_g\mE\left[\left\|z_{\tb}\right\|^2\right]+\frac{\alpha}{2\beta}L_{\omega}\mE\left[\left\|z_{\tb}\right\|^2\right]+\frac{\alpha}{8\beta}L_{\omega}\mE\left[\left\|\nabla J(\theta_{\tau_{\beta}})\right\|^2\right]\nn\\
    &\quad+\frac{\alpha}{\beta}\left|\mE\left[z_0^\top \nabla \omega({\theta}_{\tau_{\beta}})\left(  G_{\tau_{\beta}+1}(\theta_{\tau_{\beta}},\omega(\theta_{\tau_{\beta}}))+\frac{\nabla J(\theta_{\tau_{\beta}})}{2}\right)\right]\right|\nn\\
    &\quad+\frac{\alpha}{\beta}\left|\mE\left[(z_{\tau_{\beta}}-z_0)^\top \nabla \omega({\theta}_{\tau_{\beta}})\left(  G_{\tau_{\beta}+1}(\theta_{\tau_{\beta}},\omega(\theta_{\tau_{\beta}}))+\frac{\nabla J(\theta_{\tau_{\beta}})}{2}\right)\right]\right| \nn\\
    &\quad+\frac{2\alpha}{\beta}  C_gD_{\omega}\mE\left[\left\|z_{\tb}\right\|\left\|\theta_{\tau_{\beta}}-\theta_{\tau_{\beta}+1}\right \|\right]\nn\\
    &\leq \frac{\alpha}{\beta}  L_{\omega}L_g\mE\left[\left\|z_{\tb}\right\|^2\right]+\frac{\alpha}{2\beta}L_{\omega}\mE\left[\left\|z_{\tb}\right\|^2\right]+\frac{\alpha}{8\beta}L_{\omega}\mE\left[\left\|\nabla J(\theta_{\tau_{\beta}})\right\|^2\right]\nn\\
    &\quad+\frac{\alpha}{\beta}\left|\mE\left[z_0^\top \nabla \omega({\theta}_0)\left(  G_{\tau_{\beta}+1}(\theta_0,\omega(\theta_0))+\frac{\nabla J(\theta_0)}{2}\right)\right]\right|\nn\\
    &\quad+\frac{\alpha}{\beta}\Bigg|\mE\Bigg[z_0^\top \Bigg( \nabla \omega({\theta}_{\tb})\left(  G_{\tau_{\beta}+1}(\theta_{\tb},\omega(\theta_{\tb}))+\frac{\nabla J(\theta_{\tb})}{2}\right)\nn\\
    &\quad- \nabla \omega({\theta}_0)\left(  G_{\tau_{\beta}+1}(\theta_0,\omega(\theta_0))+\frac{\nabla J(\theta_0)}{2}\right) \Bigg)\Bigg]\Bigg|\nn\\
    &\quad+\frac{\alpha}{\beta}\left|\mE\left[(z_{\tau_{\beta}}-z_0)^\top \nabla \omega({\theta}_{\tau_{\beta}})\left(  G_{\tau_{\beta}+1}(\theta_{\tau_{\beta}},\omega(\theta_{\tau_{\beta}})+\frac{\nabla J(\theta_{\tau_{\beta}})}{2}\right)\right]\right|\nn\\
    &\quad+\frac{2\alpha}{\beta}  C_gD_{\omega}\mE\left[\left\|z_{\tb}\right\|\left\|\theta_{\tau_{\beta}}-\theta_{\tau_{\beta}+1} \right\|\right]\nn\\
    &\leq \frac{\alpha}{\beta}  L_{\omega}L_g\mE\left[\left\|z_{\tb}\right\|^2\right]+\frac{\alpha}{8\beta}L_{\omega}\mE\left[\left\|\nabla J(\theta_{\tau_{\beta}})\right\|^2\right]\nn\\
    &\quad+\frac{\alpha}{\beta}\|z_0\|L_{\omega}\left\|\mE\left[   G_{\tau_{\beta}+1}(\theta_0,\omega(\theta_0))+\frac{\nabla J(\theta_{\tau_{\beta}})}{2} \right]\right\|\nn\\
    &\quad+\frac{\alpha}{2\beta}L_{\omega}\mE\left[\left\|z_{\tb}\right\|^2\right]+\frac{\alpha}{\beta}\|z_0\|L_k\mE\left[\left\|\theta_{\tau_{\beta}}-\theta_0 \right\|\right] +\frac{2\alpha}{\beta}  L_{\omega}C_g\mE\left[\left\|z_{\tau_{\beta}}-z_0\right\| \right]\nn\\
    &\quad+\frac{2\alpha}{\beta}  C_gD_{\omega}\mE\left[\left\|z_{\tb}\right\|\left\|\theta_{\tau_{\beta}}-\theta_{\tau_{\beta}+1} \right\|\right]\nn\\
    &\overset{(c)}{\leq}\frac{\alpha}{\beta}  L_{\omega}L_g\mE\left[\left\|z_{\tb}\right\|^2\right]+\frac{\alpha}{2\beta}L_{\omega}\mE\left[\left\|z_{\tb}\right\|^2\right]+\frac{\alpha}{8\beta}L_{\omega}\mE\left[\left\|\nabla J(\theta_{\tau_{\beta}})\right\|^2\right]+\frac{\alpha}{\beta}\|z_0\|L_{\omega}C_g  \beta\nn\\
    &\quad+\frac{\alpha^2}{\beta}\tb L_k\|z_0\|  C_g +\frac{2\alpha}{\beta}  L_{\omega}C_g\mE\left[\left\|z_{\tau_{\beta}}-z_0\right\| \right]+\frac{2\alpha^2}{\beta}  C_g^2D_{\omega}\mE\left[\left\|z_{\tb}\right\|\right]\nn\\
    &=\left(\frac{\alpha}{\beta}  L_{\omega}L_g+\frac{\alpha}{2\beta} L_{\omega}\right)\mE\left[\left\|z_{\tb}\right\|^2\right]+\frac{2\alpha^2}{\beta}  C_g^2D_{\omega}\mE\left[\left\|z_{\tb}\right\|\right]+\frac{\alpha}{8\beta}L_{\omega}\mE\left[\left\|\nabla J(\theta_{\tau_{\beta}})\right\|^2\right]\nn\\ 
    &\quad+\left(\alpha   L_{\omega}C_g +\frac{\alpha^2}{\beta}\tb L_k  C_g\right)\|z_0\|+\frac{2\alpha}{\beta}  L_{\omega}C_g\mE\left[\left\|z_{\tau_{\beta}}-z_0\right\| \right],
\end{align}
where $(a)$ is from the Mean-Value theorem and $\hat{\theta}_{\tau_{\beta}}=c\theta_{\tau_{\beta}}+(1-c)\theta_{\tau_{\beta}+1}$ for some $c\in[0,1]$, $(b)$ is from Lemmas  \ref{lemma:omegalip} and \ref{lemma:graomegalip}, $(c)$ is due to the fact that $\left\|\mE\left[  G_{t+1}(\theta_0,\omega(\theta_0))+\frac{\nabla J(\theta_0)}{2} \right]\right\|\leq   C_g\beta$ for any $t\geq \tau_{\beta}$ and $\|\theta_{\tb}-\theta_0 \|\leq \alpha\tb   C_g$, and $L_k=2  C_gD_{\om}+\left(L_J+ \frac{ L_g'}{2}\right)L_{\om}$ is the Lipschitz constant of $ \nabla \om(\theta)\left(  G_{t+1}(\theta,\om(\theta))+\frac{\nabla J(\theta)}{2} \right)$, and $L_g'$ is the Lipschitz constant of $G_{t+1}(\theta,\om(\theta))$. 

Our next step is to rewrite the bound in \eqref{eq:trackingterm3} using $\|z_{\tb}\|$. Note that from Lemma \ref{lemma:boundz}, we have that 
\begin{align}
    \|z_0\|\leq\left\|z_{\tb}-z_0\right\|+\left\|z_{\tb}\right\|\leq (1+4\beta\tb C_{\phi}^2)\left\|z_{\tb}\right\|+4\beta\tb (b_{\max}+  L_{\om}C_g).
\end{align}
Plugging in \eqref{eq:trackingterm3}, it follows that 
\begin{align}\label{eq:term3}
    &\left|\mathbb{E}\left[ z_{\tau_{\beta}}^\top\frac{\omega(\theta_{\tau_{\beta}})-\omega(\theta_{\tau_{\beta}+1})}{\beta}\right]\right|\nn\\
    &\leq  \left(\frac{\alpha}{\beta}  L_{\omega}L_g+\frac{\alpha}{2\beta} L_{\omega}\right)\mE\left[\left\|z_{\tb}\right\|^2\right]+\frac{2\alpha^2}{\beta}   C_g^2D_{\omega}\mE\left[\left\|z_{\tb}\right\|\right]+\frac{\alpha}{8\beta}L_{\omega}\mE\left[\left\|\nabla J(\theta_{\tau_{\beta}})\right\|^2\right]\nn\\ 
    &\quad+\left(\alpha   L_{\omega}C_g +\frac{\alpha^2}{\beta}\tb L_k  C_g\right)\|z_0\|+\frac{2\alpha}{\beta}  L_{\omega}C_g\mE\left[\left\|z_{\tau_{\beta}}-z_0\right\| \right]\nn\\
    &\leq  \left(\frac{\alpha}{\beta}  L_{\omega}L_g+\frac{\alpha}{2\beta} L_{\omega}\right)\mE\left[\left\|z_{\tb}\right\|^2\right]+\frac{2\alpha^2}{\beta}   C_g^2D_{\omega}\mE\left[\left\|z_{\tb}\right\|\right]+\frac{\alpha}{8\beta}L_{\omega}\mE\left[\left\|\nabla J(\theta_{\tau_{\beta}})\right\|^2\right]\nn\\ 
    &\quad+\left(\alpha   L_{\omega}C_g +\frac{\alpha^2}{\beta}\tb L_k  C_g\right)\left( (1+4\beta\tb C_{\phi}^2)\mE\left[\left\|z_{\tb}\right\|\right]+4\beta\tb(b_{\max}+  L_{\om}C_g)\right)\nn\\
    &\quad+\frac{2\alpha}{\beta}  L_{\omega}C_g\left(\mE\left[ 4\beta\tb C_{\phi}^2\left\|z_{\tb}\right\|\right]+4\beta\tb(b_{\max}+  L_{\om}C_g)\right)\nn\\
    &= \left(\frac{\alpha}{\beta}  L_{\omega}L_g+\frac{\alpha}{2\beta} L_{\omega}\right)\mE\left[\left\|z_{\tb}\right\|^2\right]\nn\\
    &\quad +\left(\frac{2\alpha^2}{\beta} C_g^2D_{\omega}+\left(\alpha   L_{\omega}C_g +\frac{\alpha^2}{\beta}\tb L_k  C_g\right)  (1+4\beta\tb C_{\phi}^2)+8\alpha\tb  L_{\om}C_gC_{\phi}^2\right)\mE\left[\left\|z_{\tb}\right\|\right]\nn\\
    &\quad+\frac{\alpha}{8\beta}L_{\omega}\mE\left[\left\|\nabla J(\theta_{\tau_{\beta}})\right\|^2\right] +\left(\alpha   L_{\omega}C_g +\frac{\alpha^2}{\beta}\tb L_k  C_g\right)\left(4\beta\tb(b_{\max}+  L_{\om}C_g)\right)\nn\\
    &\quad+{8\alpha\tb}  L_{\omega}C_g(b_{\max}+  L_{\om}C_g)\nn\\
    &\triangleq R_3\mE\left[\left\|z_{\tb}\right\|^2\right]+P_3\mE\left[\left\|z_{\tb}\right\|\right]+Q_3+\frac{\alpha}{8\beta}L_{\omega}\mE\left[\left\|\nabla J(\theta_{\tau_{\beta}})\right\|^2\right],
\end{align}
where $R_3=\left(\frac{\alpha}{\beta}  L_{\omega}L_g+\frac{\alpha}{2\beta} L_{\omega}\right)=\mo\left(\frac{\alpha}{\beta}\right)$, $P_3=\bigg(\frac{2\alpha^2}{\beta} C_g^2D_{\omega}+\left(\alpha   L_{\omega}C_g +\frac{\alpha^2}{\beta}\tb L_k  C_g\right)  (1+4\beta\tb C_{\phi}^2)+8\alpha\tb  L_{\om}C_gC_{\phi}^2\bigg)=\mo(\alpha\tb)$ and $Q_3=\left(\alpha   L_{\omega}C_g +\frac{\alpha^2}{\beta}\tb L_k  C_g\right)\left(4\beta\tb(b_{\max}+  L_{\om}C_g)\right) +{8\alpha\tb}  L_{\omega}C_g(b_{\max}+  L_{\om}C_g)=\mo(\alpha\tb)$.

Then we combine all three bounds in \eqref{eq:term1}, \eqref{eq:term2} and \eqref{eq:term3}, and it follows that 
\begin{align} \label{eq:con}
    &\left|\mathbb{E}\left[z_{\tau_{\beta}}^\top\left(-A_{\theta_{\tau_{\beta}}}z_{\tau_{\beta}}-\frac{1}{\beta}\left(z_{\tau_{\beta}+1}-z_{\tau_{\beta}}\right)\right)\right]\right|\nn\\
    &\leq (R_1+R_3)\mE\left[\left\|z_{\tb}\right\|^2\right]+(P_1+P_2+P_3)\mE\left[\left\|z_{\tb}\right\| \right]+(Q_1+Q_2+Q_3)\nn\\
    &\quad+\frac{\alpha}{8\beta}L_{\omega}\mE\left[\left\|\nabla J(\theta_{\tau_{\beta}})\right\|^2\right],
\end{align}

Finally due to the fact that  $x\leq x^2+1$, $\forall x \in \mathbb{R}$, it follows that 
\begin{align}\label{eq:zterm3new}
    &\left|\mathbb{E}\left[z_{\tau_{\beta}}^\top\left(-A_{\theta_{\tau_{\beta}}}z_{\tau_{\beta}}-\frac{1}{\beta}\left(z_{\tau_{\beta}+1}-z_{\tau_{\beta}}\right)\right)\right]\right|\nn\\
    &\leq (R_1+R_3+P_1+P_2+P_3)\mE\left[\left\|z_{\tb}\right\|^2\right]+(Q_1+Q_2+Q_3+P_1+P_2+P_3)\nn\\
    &\quad+\frac{\alpha}{8\beta}L_{\omega}\mE\left[\left\|\nabla J(\theta_{\tau_{\beta}})\right\|^2\right].
\end{align}
This completes the proof.
\end{proof}

\subsection{Proof under the Markovian Setting}\label{section:markovmain}
In this section, we prove Theorem \ref{thm:main} under the Markovian setting.

From the $L_J$-smoothness of $J(\theta)$, it follows that 
\begin{align}
    J(\theta_{t+1}) &\leq J(\theta_t) +\left\langle \nabla J(\theta_t), \theta_{t+1}-\theta_t\right\rangle  + \frac{L_J}{2} \| \theta_{t+1}-\theta_t\|^2\nn\\
    &=J(\theta_t) +\alpha \left\langle \nabla J(\theta_t),  G_{t+1}(\theta_t,\omega_t) \right\rangle  + \frac{L_J}{2} \alpha^2\|  G_{t+1}(\theta_t,\omega_t)\|^2\nn\\
    &=J(\theta_t)-\alpha\left\langle \nabla J(\theta_t),-  G_{t+1}(\theta_t, \omega_t)-\frac{\nabla J(\theta_t)}{2}+  G_{t+1}(\theta_t, \omega(\theta_t))-  G_{t+1}(\theta_t, \omega(\theta_t)) \right\rangle \nn\\
    &\quad-\frac{\alpha}{2}\|\nabla J(\theta_t)\|^2+\frac{L_J}{2} \alpha^2\|  G_{t+1}(\theta_t,\omega_t)\|^2\nn\\
    &=J(\theta_t)-\alpha\left\langle \nabla J(\theta_t),-  G_{t+1}(\theta_t, \omega_t)+  G_{t+1}(\theta_t, \omega(\theta_t)) \right\rangle\nn\\
    &\quad+\alpha \left\langle \nabla J(\theta_t), \frac{\nabla J(\theta_t)}{2}+  G_{t+1}(\theta_t, \omega(\theta_t)) \right\rangle-\frac{\alpha}{2}\|\nabla J(\theta_t)\|^2+\frac{L_J}{2} \alpha^2\|  G_{t+1}(\theta_t,\omega_t)\|^2\nn\\
    & {\leq} J(\theta_t) +\alpha   L_g\|\nabla J(\theta_t) \|\|\omega(\theta_t)-\omega_t \|+\alpha \left\langle \nabla J(\theta_t), \frac{\nabla J(\theta_t)}{2}+  G_{t+1}(\theta_t, \omega(\theta_t)) \right\rangle \nn\\
    &\quad-\frac{\alpha}{2}\|\nabla J(\theta_t)\|^2+\frac{L_J}{2} \alpha^2\|  G_{t+1}(\theta_t,\omega_t)\|^2\nn\\
    &\overset{(a)}{\leq}J(\theta_t) +\alpha   L_g\|\nabla J(\theta_t) \|\|z_t \|+\alpha \left\langle \nabla J(\theta_t), \frac{\nabla J(\theta_t)}{2}+  G_{t+1}(\theta_t, \omega(\theta_t)) \right\rangle \nn\\
    &\quad-\frac{\alpha}{2}\|\nabla J(\theta_t)\|^2+\frac{L_J}{2}  \alpha^2C_g^2,
\end{align}
where $(a)$ is from the fact that $\|\theta_{t+1}-\theta_t\| \leq\alpha   C_g$. 
Thus by re-arranging the terms, taking expectation and summing up w.r.t. $t$ from $0$ to $T-1$, it follows that 
\begin{align}\label{eq:markovmain1}
    &\frac{\alpha}{2}\sum^{T-1}_{t=0}\mathbb{E}[\|\nabla J(\theta_t)\|^2] \nn\\
    &\leq -\mathbb{E}[J(\theta_{T})]+J(\theta_0)+\alpha   L_g\sqrt{\sum^{T-1}_{t=0}\mathbb{E}[\|\nabla J(\theta_t)\|^2]}\sqrt{\sum^{T-1}_{t=0}\mathbb{E}[\|z_t\|^2]}  +\sum^{T-1}_{t=0} \alpha \mE[\zeta_G(\theta_t,O_t)]\nn\\
    &\quad+L_J\alpha^2TC_g^2 ,
\end{align}
where $\zeta_G(\theta_t,O_t)=\left\langle \nabla J(\theta_t), \frac{\nabla J(\theta_t)}{2}+  G_{t+1}(\theta_t, \omega(\theta_t)) \right\rangle$. We then bound $\zeta_G$ in the following lemma. 
\begin{Lemma}
For any $t\geq\tau_{\beta}$,  
\begin{align}\label{eq:stobias}
     \mE[\zeta_G(\theta_{t},O_t) ] \leq 2C_g^2\beta+2\alpha \tb L_{\zeta}C_g.
\end{align}
\end{Lemma}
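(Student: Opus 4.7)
The plan is to bound the stochastic bias term $\mathbb{E}[\zeta_G(\theta_t,O_t)]$, which is nonzero precisely because $O_t$ is drawn from the Markovian chain (not from $\mu^{\pi_b}$) and because $\theta_t$ is itself correlated with $O_t$ through the shared sample path. The standard remedy, used in the Markovian analyses of TD and its variants cited in the paper, is to ``look back'' by $\tau_\beta$ steps: decouple the parameter used for evaluation from the current sample by replacing $\theta_t$ with $\theta_{t-\tau_\beta}$, pay a Lipschitz price for doing so, and then exploit geometric uniform ergodicity for the remainder.

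Concretely, the first step will be the decomposition
\[
\mathbb{E}[\zeta_G(\theta_t,O_t)]
=\underbrace{\mathbb{E}\big[\zeta_G(\theta_t,O_t)-\zeta_G(\theta_{t-\tau_\beta},O_t)\big]}_{(\mathrm{I})}
+\underbrace{\mathbb{E}[\zeta_G(\theta_{t-\tau_\beta},O_t)]}_{(\mathrm{II})}.
\]
For term $(\mathrm{I})$, I would establish that for every fixed sample $O$, the map $\theta\mapsto \zeta_G(\theta,O)=\langle\nabla J(\theta),\nabla J(\theta)/2+G(\theta,\omega(\theta))\rangle$ is Lipschitz with some constant $L_\zeta$. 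This reduces to combining the $L_J$-smoothness of $J$ (Lemma \ref{lemma:Lsmooth}), the boundedness of $\nabla J$ (which is $\mathcal O(C_g)$ since $\nabla J(\theta)=-2\mathbb{E}_{\mu^{\pi_b}}[G(\theta,\omega(\theta))]$), the boundedness of $G_{t+1}(\theta,\omega(\theta))$ by $C_g$, and the Lipschitz continuity of $G_{t+1}(\theta,\omega(\theta))$ in $\theta$ (which follows from the Lipschitz continuity of $\omega(\theta)$ in Lemma \ref{lemma:omegalip} together with the smoothness of $\phi_\theta$, $\delta$, and $\nabla^2 V_\theta$ from Assumption \ref{ass:bound}). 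Combining with $\|\theta_t-\theta_{t-\tau_\beta}\|\le\sum_{j=t-\tau_\beta}^{t-1}\|\theta_{j+1}-\theta_j\|\le \alpha\tau_\beta C_g$, term $(\mathrm{I})$ will contribute at most $\alpha\tau_\beta L_\zeta C_g$.

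For term $(\mathrm{II})$, since $\theta_{t-\tau_\beta}$ is $\mathcal F_{t-\tau_\beta}$-measurable, I would condition on $\mathcal F_{t-\tau_\beta}$ and use Assumption \ref{ass:mixing}: the conditional law of $O_t$ given $\mathcal F_{t-\tau_\beta}$ is within total-variation distance $m\kappa^{\tau_\beta}\le\beta$ of $\mu^{\pi_b}$. Because $\mathbb{E}_{O\sim\mu^{\pi_b}}[G(\theta,\omega(\theta))]=-\nabla J(\theta)/2$ by construction of the weight-doubling trick, we have $\mathbb{E}_{O\sim\mu^{\pi_b}}[\zeta_G(\theta_{t-\tau_\beta},O)]=0$; thus $|\mathbb{E}[\zeta_G(\theta_{t-\tau_\beta},O_t)\mid\mathcal F_{t-\tau_\beta}]|\le 2\|\zeta_G\|_\infty\beta$, and since $\|\zeta_G\|_\infty=\mathcal O(C_g^2)$, term $(\mathrm{II})$ is bounded by $\mathcal O(C_g^2\beta)$. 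Summing both contributions yields the claimed $2C_g^2\beta+2\alpha\tau_\beta L_\zeta C_g$.

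The main obstacle is verifying the Lipschitz constant $L_\zeta$ cleanly: $\zeta_G(\theta,O)$ involves three $\theta$-dependent factors (the outer $\nabla J(\theta)$, the inner $\nabla J(\theta)/2$, and $G(\theta,\omega(\theta))$), and the last of these chains through $\omega(\theta)$, whose Lipschitz bound in Lemma \ref{lemma:omegalip} has a somewhat heavy constant. Once this is in hand, the mixing step and the path-length bound are routine, and the rest of the lemma is essentially bookkeeping. Note that this lemma has only been stated for $t\ge\tau_\beta$ precisely because the backward look by $\tau_\beta$ steps requires the trajectory to already extend that far; the contribution from $t<\tau_\beta$ is absorbed separately in the outer recursion in Section \ref{section:markovmain}.
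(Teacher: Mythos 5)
Your proposal follows essentially the same route as the paper: look back $\tau_\beta$ steps, charge the replacement $\theta_t\to\theta_{t-\tau_\beta}$ to the Lipschitz continuity of $\zeta_G(\cdot,O)$ together with the path-length bound $\|\theta_t-\theta_{t-\tau_\beta}\|\le\alpha\tau_\beta C_g$, and control the lagged term via geometric uniform ergodicity, which is exactly the paper's decomposition (stated there for $t=\tau_\beta$ with $\theta_0$ playing the role of $\theta_{t-\tau_\beta}$). The only cosmetic difference is in the mixing step: the paper applies the bias bound to $G_{t+1}(\theta_{t-\tau_\beta},\omega(\theta_{t-\tau_\beta}))+\nabla J(\theta_{t-\tau_\beta})/2$ alone (norm at most $C_g\beta$) and then uses $\|\nabla J\|\le 2C_g$ via Cauchy--Schwarz to land exactly on $2C_g^2\beta$, whereas your sup-norm-of-$\zeta_G$ total-variation argument gives the same $\mathcal O(C_g^2\beta)$ with a slightly larger constant.
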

\begin{proof}
We only need to consider the case $t=\tau_{\beta}$, the proof for general case of $t\geq\tau_{\beta}$ is similar, and thus is omitted here. We first have that
\begin{align}
    \zeta_G(\theta_{\tau_{\beta}},O_{\tau_{\beta}})&=\left\langle \nabla J(\theta_{\tau_{\beta}}), \frac{\nabla J(\theta_{\tau_{\beta}})}{2}+  G_{\tau_{\beta}+1}(\theta_{\tau_{\beta}}, \omega(\theta_{\tau_{\beta}})) \right\rangle\nn\\
    &= \left\langle \nabla J(\theta_0), \frac{\nabla J(\theta_0)}{2}+  G_{\tau_{\beta}+1}(\theta_0, \omega(\theta_{0})) \right\rangle\nn\\ &\quad+\left\langle \nabla J(\theta_{\tau_{\beta}}), \frac{\nabla J(\theta_{\tau_{\beta}})}{2}+  G_{\tau_{\beta}+1}(\theta_{\tau_{\beta}}, \omega(\theta_{\tau_{\beta}})) \right\rangle\nn\\
    &\quad-\left\langle \nabla J(\theta_0), \frac{\nabla J(\theta_0)}{2}+  G_{\tau_{\beta}+1}(\theta_0, \omega(\theta_{0})) \right\rangle\nn\\
    &\leq \left\langle \nabla J(\theta_0), \frac{\nabla J(\theta_0)}{2}+  G_{\tau_{\beta}+1}(\theta_0, \omega(\theta_{0})) \right\rangle+2L_{\zeta}\|\theta_{\tau_{\beta}}-\theta_0 \|\nn\\
    &\leq \left\langle \nabla J(\theta_0), \frac{\nabla J(\theta_0)}{2}+  G_{\tau_{\beta}+1}(\theta_0, \omega(\theta_{0})) \right\rangle+2\alpha\tb  L_{\zeta}C_g   ,
\end{align}
where $L_{\zeta}=2C_g(L_g'+\frac{3L_J}{2})$ is the Lipschitz constant of $\zeta_G(\theta,O_t)$. 

Then it follows that 
\begin{align}
    &\mE[\zeta_G(\theta_{\tau_{\beta}},O_{\tau_{\beta}})]\nn\\
 &=\mE\left[\left\langle \nabla J(\theta_0), \frac{\nabla J(\theta_0)}{2}+  G_{\tau_{\beta}+1}(\theta_0, \omega(\theta_{0})) \right\rangle\right]+2\alpha    L_{\zeta}C_g\tb\nn\\
    &\leq 2 C_g^2\beta+2\alpha \tb L_{\zeta}C_g ,
\end{align}
where the last step follows from the uniform ergodicity of the MDP (Assumption \ref{ass:mixing}).
\end{proof}
Plugging the bound  in \eqref{eq:markovmain1}, it follows that 
\begin{align} 
    &\frac{\alpha}{2}\sum^{T-1}_{t=0}\mathbb{E}[\|\nabla J(\theta_t)\|^2] \nn\\
    &\leq J(\theta_0)-J^*+\alpha    L_g\sqrt{\sum^{T-1}_{t=0}\mathbb{E}[\|\nabla J(\theta_t)\|^2]}\sqrt{\sum^{T-1}_{t=0}\mathbb{E}[\|z_t\|^2]}\nn\\
    &\quad+\alpha^2    C_g^2L_JT+\alpha \left( T(2C_g^2\beta    +2\alpha \tb    L_{\zeta}C_g)+4\tb     C_g^2 \right),
\end{align}
and thus
\begin{align}
    &\sum^{T-1}_{t=0}\mathbb{E}[\|\nabla J(\theta_t)\|^2] \nn\\
    &\leq \frac{2(J(\theta_0)-J^*)}{\alpha}+2    L_g\sqrt{\sum^{T-1}_{t=0}\mathbb{E}[\|\nabla J(\theta_t)\|^2]}\sqrt{\sum^{T-1}_{t=0}\mathbb{E}[\|z_t\|^2]} +2\alpha     C_g^2L_JT \nn\\
    &\quad+2 \left( T(2    C_g^2\beta+2\alpha \tb    L_{\zeta}C_g)+4\tb C_g^2     \right).
\end{align}
This further implies that 
\begin{align}
    &\frac{\sum^{T-1}_{t=0}\mathbb{E}[\|\nabla J(\theta_t)\|^2]}{T} \nn\\
    &\leq \frac{2(J(\theta_0)-J^*)}{\alpha T}+2   L_g\sqrt{\frac{\sum^{T-1}_{t=0}\mathbb{E}[\|\nabla J(\theta_t)\|^2]}{T}}\sqrt{\frac{\sum^{T-1}_{t=0}\mathbb{E}[\|z_t\|^2]}{T}}+2\alpha     C_g^2L_J \nn\\
    &\quad+2     \left( (2C_g^2\beta+2\alpha \tb L_{\zeta}C_g)+4C_g^2\frac{\tb}{T} \right).
\end{align}
We plug in the tracking error \eqref{eq:markoviantrackingerror}, and it follows that 
\begin{align}
    &\frac{\sum^{T-1}_{t=0}\mathbb{E}[\|\nabla J(\theta_t)\|^2]}{T} \nn\\
    &\leq \frac{2(J(\theta_0)-J^*)}{\alpha T}+2\alpha     C_g^2L_J +2     \left( (2C_g^2\beta+2\alpha \tb L_{\zeta}C_g)+4C_g^2\frac{\tb}{T} \right)\nn\\
    &\quad+2   L_g\sqrt{\frac{\sum^{T-1}_{t=0}\mathbb{E}[\|\nabla J(\theta_t)\|^2]}{T}}\nn\\
    &\quad \cdot \sqrt{ {\left(2\|z_0\|+2\beta\tau_{\beta}   (b_{\max}+   L_{\omega}C_g) \right)^2}\left(\frac{1}{Tq}+\frac{\tau_{\beta}}{T} \right)+\frac{\alpha L_{\omega}}{4q}\frac{\sum^{T-1}_{t=0}\mE[\|\nabla J(\theta_t)\|^2]}{T}+\frac{p}{q}} \nn\\
    &\leq  \frac{2(J(\theta_0)-J^*)}{\alpha T}+2\alpha     C_g^2L_J +2     \left( (2C_g^2\beta+2\alpha \tb L_{\zeta}C_g)+4C_g^2\frac{\tb}{T} \right)\nn\\
    &\quad+2   L_g\sqrt{\frac{\alpha L_{\om}}{4q}}\frac{\sum^{T-1}_{t=0}\mathbb{E}[\|\nabla J(\theta_t)\|^2]}{T} \nn\\
    &\quad+2L_g   \sqrt{\frac{\sum^{T-1}_{t=0}\mathbb{E}[\|\nabla J(\theta_t)\|^2]}{T}}\sqrt{ {\left(2\|z_0\|+2\beta\tau_{\beta}(b_{\max}+   L_{\omega}C_g) \right)^2}\left(\frac{1}{Tq}+\frac{\tau_{\beta}}{T} \right)+ \frac{p}{q}}.
\end{align}
Note that $2L_g   \sqrt{\frac{\alpha L_{\om}}{4q}}=\mathcal{O}\left(\sqrt{\frac{\alpha}{\beta}}\right)$, hence we can choose $\alpha$ and $\beta$ such that $2   L_g\sqrt{\frac{\alpha L_{\om}}{4q}} \leq\frac{1}{2}$. Hence it follows that 
\begin{align}\label{eq:UV}
     \frac{\sum^{T-1}_{t=0}\mathbb{E}[\|\nabla J(\theta_t)\|^2]}{T} 
    &\leq  \frac{4(J(\theta_0)-J^*)}{\alpha T}+4\alpha     C_g^2L_J +4    \left( (2C_g^2\beta+2\alpha \tb L_{\zeta}C_g)+4C_g^2\frac{\tb}{T} \right) \nn\\
    &\quad+4   L_g\sqrt{\frac{\sum^{T-1}_{t=0}\mathbb{E}[\|\nabla J(\theta_t)\|^2]}{T}}\nn\\
    &\cdot\sqrt{ {\left(2\|z_0\|+2\beta\tau_{\beta}(b_{\max}+   L_{\omega}C_g) \right)^2}\left(\frac{1}{Tq}+\frac{\tau_{\beta}}{T} \right)+ \frac{p}{q}}\nn\\
    &\triangleq U\sqrt{\frac{\sum^{T-1}_{t=0}\mathbb{E}[\|\nabla J(\theta_t)\|^2]}{T}} +V,
\end{align}
where $U=4   L_g\sqrt{ {\left(2\|z_0\|+2\beta\tau_{\beta}(b_{\max}+   L_{\omega}C_g) \right)^2}\left(\frac{1}{Tq}+\frac{\tau_{\beta}}{T} \right)+ \frac{p}{q}}=\mo\left(\sqrt{\beta\tb+\frac{1}{T\beta}} \right)$ and $V=\frac{4(J(\theta_0)-J^*)}{\alpha T}+4\alpha     C_g^2L_J +4    \left( (2C_g^2\beta+2\alpha \tb L_{\zeta}C_g)+4C_g^2\frac{\tb}{T} \right)=\mo\left(\frac{1}{T\alpha}+\alpha\tb +\beta\right)$.
Thus it can be shown that
\begin{align}\label{eq:markovresult}
    \frac{\sum^{T-1}_{t=0}\mathbb{E}[\|\nabla J(\theta_t)\|^2]}{T} &\leq \left(\frac{U+\sqrt{U^2+4V}}{2} \right)^2\nn\\
    &\leq U^2+2V \nn\\
    &=16    L_g^2\left( {\left(2\|z_0\|+2\beta\tau_{\beta}(b_{\max}+   L_{\omega}C_g) \right)^2}\left(\frac{1}{Tq}+\frac{\tau_{\beta}}{T} \right)+ \frac{p}{q}\right)\nn\\
    &\quad+\frac{8(J(\theta_0)-J^*)}{\alpha T}+8\alpha     C_g^2L_J +8    \left( (2C_g^2\beta+2\alpha \tb L_{\zeta}C_g)+4C_g^2\frac{\tb}{T} \right)\nn\\
    &=\mo\left(\beta\tb+\frac{1}{T\beta}+ \alpha\tb+\frac{1}{T\alpha}\right).
\end{align}
This  completes the proof. 

\subsection{Choice of Step-sizes}\label{sec:step2}
In the proof under the Markovian setting, we first assume $\beta\tb C_{\phi}^2\leq \frac{1}{4}$. The last assumption on the step-sizes is $\frac{\alpha}{q}\leq \frac{1}{4L_g^2L_{\omega}}$, where $q=2\beta\lambda_v-2\beta(R_1+R_3+P_1+P_2+P_3)-2\beta^2C_{\phi}^4=\mo(\beta)$. Note that this assumption can be satisfied by controlling $\frac{\alpha}{\beta}$ similar to Section \ref{sec:step1}, which  we omit here. 
Hence we  set $\beta<\min\left\{1, \frac{1}{4\tb C_{\phi}^2} \right\}$, and $\frac{\alpha}{q}\leq \left\{ 1, \frac{1}{4L_g^2L_{\omega}}\right\}$.

\section{Experiments}\label{sec:experiments}
In this section, we provide some numerical experiments on two RL examples: the Garnet problem \citep{archibald1995generation} and the “spiral” counter example in \citep{tsitsiklis1997analysis}.

\subsection{Garnet Problem}
The first experiment is on the Garnet problem \citep{archibald1995generation}, which can be characterized by $\mathcal{G}(|\mcs|,|\mca|,b,N)$. Here $b$ is a branching parameter specifying how many next states are possible for each state-action pair, and  these $b$ states are chosen uniformly at random. The transition probabilities are generated by sampling uniformly and randomly between 0 and 1. The parameter $N$ is the dimension of $\theta$ to be updated. In our experiments, we generate a reward matrix uniformly and randomly between 0 and 1. For every state $s$ we randomly generate one feature function $k(s) \in [0,1]$ using as the input. In both experiments, we use a five-layer neural network with (1,2,2,3,1) neurons in each layer as the function approximator. And for the activation function, we use the Sigmoid function, i.e., $f(x)=\frac{1}{1+e^{-x}}$. We set all the weights and bias of the neurons as the parameter $\theta\in\mathbb{R}^{23}$.

We consider two sets of parameters: $\mathcal{G}(5,2,5,23)$ and $\mathcal{G}(3,2,3,23)$. We set the step-size $\alpha=0.01$ and $\beta=0.05$, and also the discount factor $\gamma=0.95$. In Figures \ref{Fig.ga1} and \ref{Fig.ga2}, we plot the squared gradient norm v.s. the number of samples using 40 Garnet MDP trajectories, i.e., at each time $t$, we plot $ \|\nabla J(\theta_t)\|^2$. The upper and lower envelopes of the curves correspond to the 95 and 5 percentiles of the 40 curves, respectively. We also plot the estimated variance of the stochastic update along the iterations in Figures \ref{Fig.g12} and \ref{Fig.g22}. Specifically, we first run the algorithm to get a  sequence of $\theta_t$ and $\omega_t$. Then we generate 
500 different trajectories $O^i=(O^i_1,O^i_2,...,O^i_t,...)$ where $i=1,...,
500$, and use them to estimate the variance $\|G^i_{t+1}(\theta_t,\omega_t)-\nabla J(\theta_t)\|^2$ and plot $\frac{\sum^{500}_{i=1}\|G^i_{t+1}(\theta_t,\omega_t)-\nabla J(\theta_t)\|^2}{500}$ at each time $t$. 

It can be seen from the figures that both gradient norm $\|\nabla J(\theta_t)\|$ and the estimated variance converge to zero.

\begin{figure}[htbp]
\centering
\subfigure[$\|\nabla J(\theta_t)\|^2$.]{
\label{Fig.g11}
\includegraphics[width=0.45\linewidth]{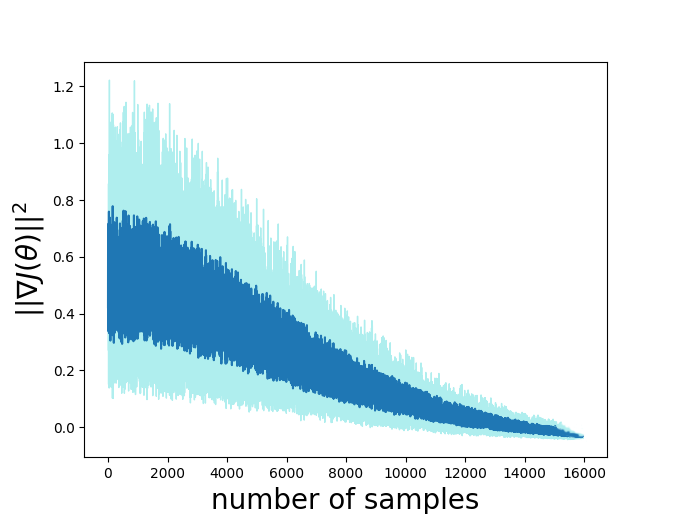}}
\subfigure[Estimated variance.]{
\label{Fig.g12}
\includegraphics[width=0.45\linewidth]{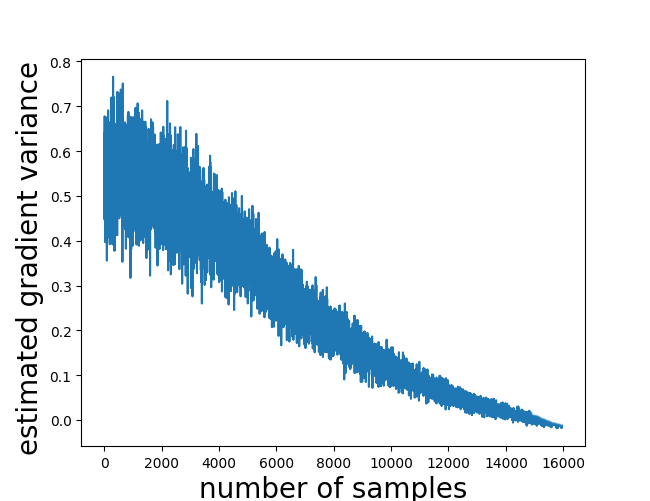}}
\captionsetup{font={normalsize}}
\caption{Garnet problem 1: $\mathcal{G}(5,2,5,23)$.}
\label{Fig.ga1}
\end{figure}
\begin{figure} 
\centering
\subfigure[$\|\nabla J(\theta_t)\|^2$.]{
\label{Fig.g21}
\includegraphics[width=0.45\linewidth]{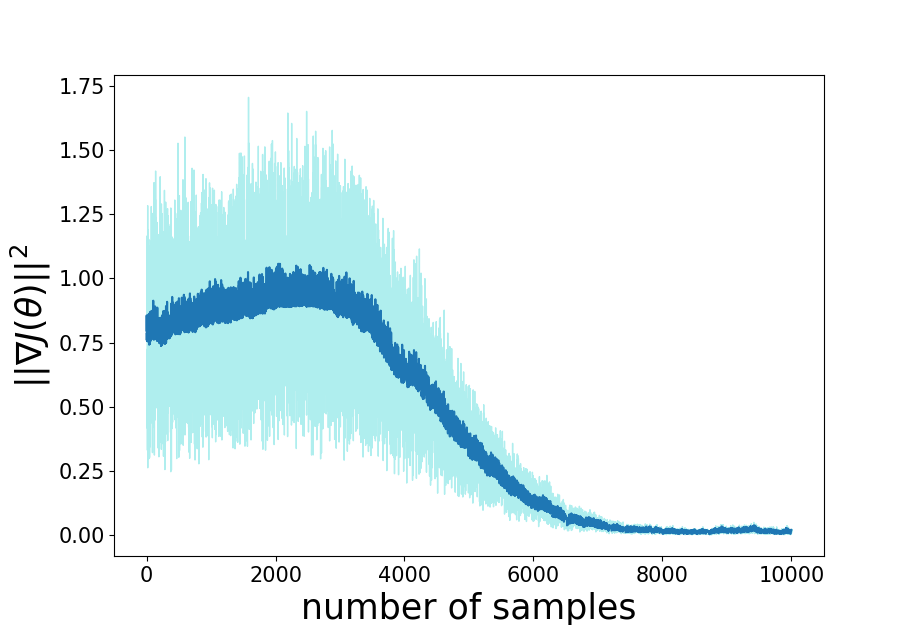}}
\subfigure[Estimated variance.]{
\label{Fig.g22}
\includegraphics[width=0.45\linewidth]{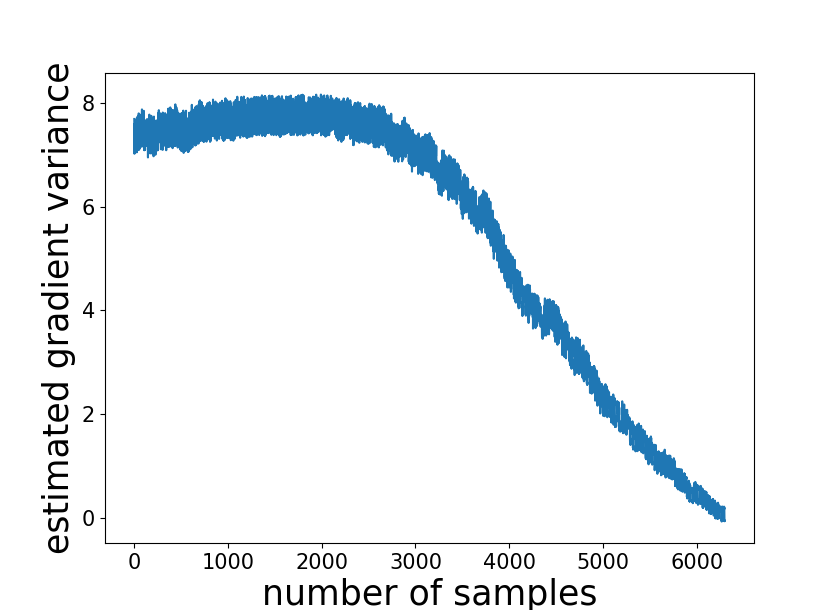}}
\captionsetup{font={normalsize}}
\caption{Garnet problem 2: $\mathcal{G}(3,2,3,23)$.}
\label{Fig.ga2}
\end{figure}

\subsection{Spiral Counter Example}
In our second experiment, we consider the spiral counter example proposed in \citep{tsitsiklis1997analysis}, which is often used to show the TD algorithm may diverge with nonlinear function approximation. The problem setting is given in Figure \ref{Fig.toy}. There are three states and each state can transit to the next one with probability $\frac{1}{2}$ or stay at the current state with probability $\frac{1}{2}$. The reward is always zero with the discount factor $\gamma=0.9$. Similar to \citep{bhatnagar2009convergent}, we consider the value function approximation: \begin{align}
    V_{\theta}(s)=(a(s)\cos(k\theta)+b(s)\sin(k\theta))e^{\epsilon \theta},
\end{align}
where in Figure \ref{Fig.toy1}, $a=[0.94,-0.43,0.18]$ and $b=[0.21,-0.52,0.76]$; and in Figure \ref{Fig.toy2}, $a=[0.21,-0.33,0.29]$ and $b=[0.68,0.41,0.82]$.
We  let $k=0.866$ and $\epsilon=0.1$. The step-size are chosen as $\alpha=0.01$ and $\beta=0.05$. In Figures \ref{Fig.t11} and \ref{Fig.t21}, we plot the squared gradient norm v.s. the number of samples using 40 MDP trajectories. The upper and lower envelopes of the curves correspond to the 95 and 5 percentiles of the 40 curves. Similarly, we also plot the estimated variance $\|G_{t+1}(\theta_t,\omega_t)-\nabla J(\theta_t)\|^2$ of the stochastic update  along the iterations using 50 samples at each time step. More specifically, we  first run the algorithm to get a  sequence of $\theta_t$ and $\omega_t$. Then we generate 50 different trajectories $O^i=(O^i_1,O^i_2,...,O^i_t,...)$ where $i=1,...,50$, and use  them to estimate the variance $\|G^i_{t+1}(\theta_t,\omega_t)-\nabla J(\theta_t)\|^2$ and plot $\frac{\sum^{50}_{i=1}\|G^i_{t+1}(\theta_t,\omega_t)-\nabla J(\theta_t)\|^2}{50}$ at each time $t$.

It can be seen that in both experiments, the gradient norm $\|\nabla J(\theta_t)\|$ converges to 0, i.e., the algorithm converges to a stationary point. The estimated variance also decreases to zero. 

\begin{figure}[htb]
\centering 
\subfigure {
\label{Fig.t}
\includegraphics[width=0.7\linewidth]{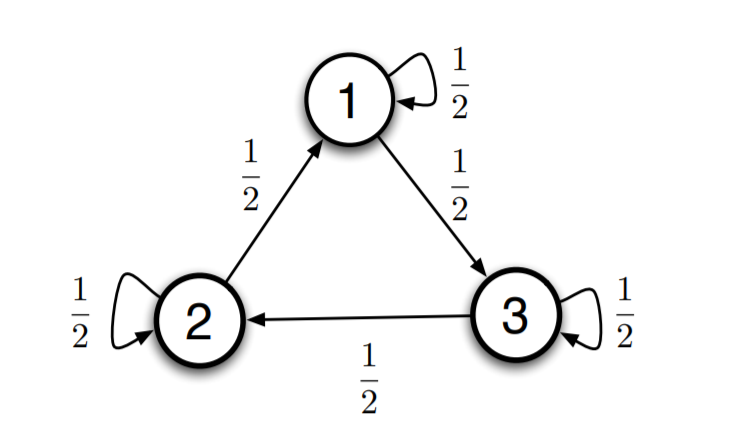}}
\captionsetup{font={normalsize}}
\caption{Spiral counter example.}
\label{Fig.toy}
\end{figure}

\begin{figure}[htbp]
\centering
\subfigure[$\|\nabla J(\theta_t)\|^2$.]{
\label{Fig.t11}
\includegraphics[width=0.48\linewidth]{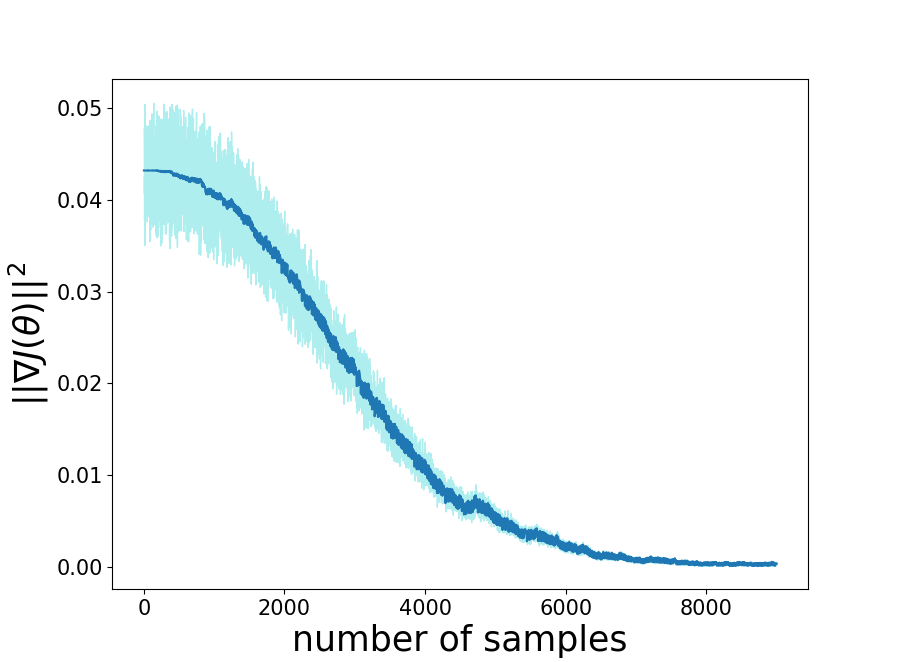}}
\subfigure[Estimated variance.]{
\label{Fig.t12}
\includegraphics[width=0.48\linewidth]{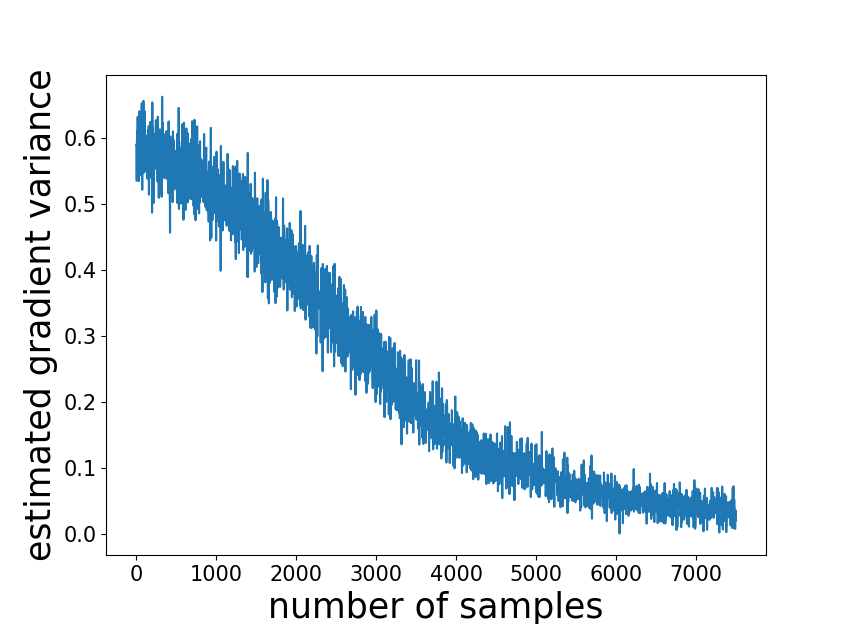}}
\captionsetup{font={normalsize}}
\caption{Spiral counter example 1:  \protect\\$a=[0.94,-0.43,0.18]$,$b=[0.21,-0.52,0.76]$.}
\label{Fig.toy1}
\end{figure}

\begin{figure}
\centering
\subfigure[$\|\nabla J(\theta_t)\|^2$.]{
\label{Fig.t21}
\includegraphics[width=0.48\linewidth]{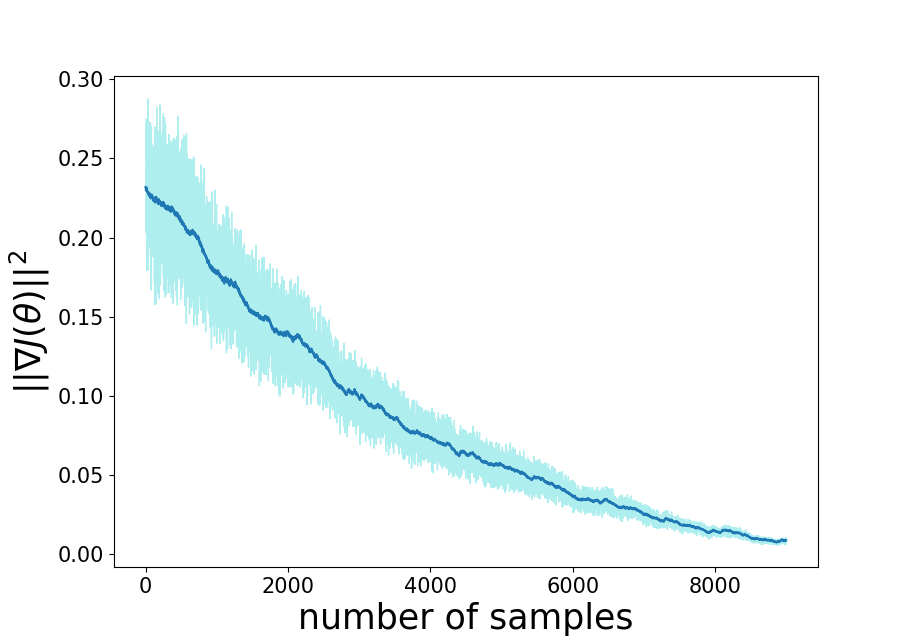}}
\subfigure[Estimated variance.]{
\label{Fig.t22}
\includegraphics[width=0.48\linewidth]{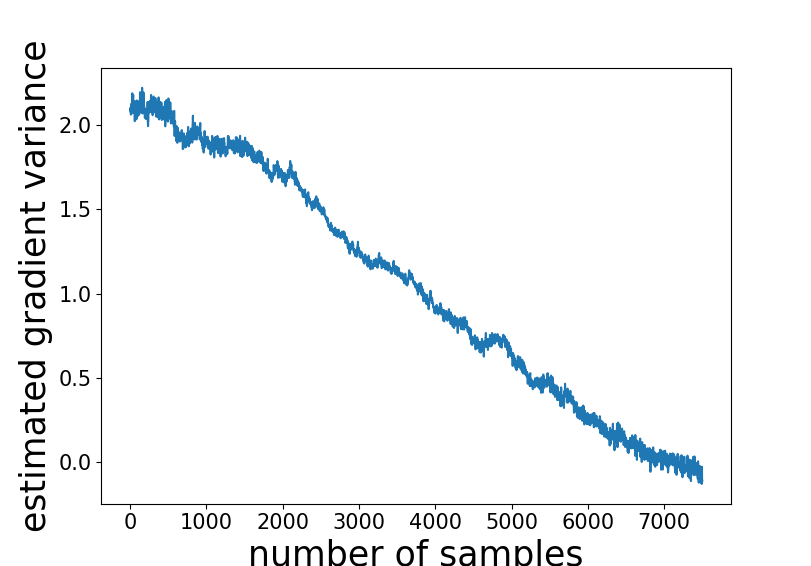}}
\captionsetup{font={normalsize}}
\caption{Spiral counter example 2:  \protect\\$a=[0.21,-0.33,0.29]$, $b=[0.68,0.41,0.82]$.}
\label{Fig.toy2}
\end{figure}

\end{document}